\definecolor{citrine}{rgb}{0.89, 0.82, 0.04}
\definecolor{blued}{RGB}{70,197,221}
\newcommand{\A}{\mathcal A}
\newcommand{\F}{\mathcal F}
\newcommand{\calS}{\mathcal S}
\renewcommand{\Re}{\mathbb R}
\newcommand{\Na}{\mathbb N}
\newcommand{\reg}{\text{reg}}
\newcommand{\transp}{\mathsf{T}}
\DeclarePairedDelimiter{\abs}{\lvert}{\rvert}
\newcommand{\ratio}{{\small\textsc{RATIO}}\xspace}
\newcommand{\loss}{{\small\textsc{LOSS}}\xspace}
\newcommand{\al}{{\small\textsc{AL}}\xspace}
\newcommand{\mab}{{\small\textsc{MAB}}\xspace}
\newcommand{\mdp}{{\small\textsc{MDP}}\xspace}
\newcommand{\fw}{{\small\textsc{FW}}\xspace}
\newcommand{\fwmdp}{{\small\textsc{FW-AME}}\xspace}
\newcommand{\fmh}{{\small\textsc{FMH}}\xspace}
\newcommand{\fmhsdp}{{\small\textsc{FMH-SDP}}\xspace}
\newcommand{\wt}[1]{\widetilde{#1}}
\newcommand{\wh}[1]{\widehat{#1}}
\newcommand{\wu}[1]{\underline{#1}}
\DeclareMathAccent{\wtilde}{\mathord}{largesymbols}{"65}
\theoremstyle{theorem}
\newtheorem{assumption}{Assumption}
\newtheorem{proposition}{Proposition}
\newtheorem{lemma}{Lemma}
\newtheorem{theorem}{Theorem}
\newlength{\minipagewidth}
\newlength{\minipagewidthx}
\begin{document}

\twocolumn[

\aistatstitle{Active Exploration in Markov Decision Processes}

\aistatsauthor{ Jean Tarbouriech \And Alessandro Lazaric }

\aistatsaddress{ Facebook AI Research \And  Facebook AI Research } ]

\begin{abstract}
	We introduce the active exploration problem in Markov decision processes (MDPs). Each state of the MDP is characterized by a random value and the learner should gather samples to estimate the mean value of each state as accurately as possible. Similarly to active exploration in multi-armed bandit (MAB), states may have different levels of noise, so that the higher the noise, the more samples are needed. As the noise level is initially unknown, we need to trade off the \textit{exploration} of the environment to estimate the noise and the \textit{exploitation} of these estimates to compute a policy maximizing the accuracy of the mean predictions. We introduce a novel learning algorithm to solve this problem showing that active exploration in MDPs may be significantly more difficult than in MAB. We also derive a heuristic procedure to mitigate the negative effect of slowly mixing policies. Finally, we validate our findings on simple numerical simulations.
\end{abstract}

%\begin{keywords}
%  Reinforcement learning, Markov decision processes, active learning
%\end{keywords}

% !TEX root = active_mdp.tex

\vspace{-0.1in}
\section{Introduction}
\vspace{-0.1in}

%\begin{itemize}
%	\item Experimental optimal design and active learning: adapting to the ``geometry'' of the problem.
%	\item Active bandit: adapting to the ''shape'' of the noise.
%	\item Limited focus on the case when actions cannot be taken ``arbitrarily'' but are ``embedded'' in a dynamical system (e.g., MDP).
%	\item $\Rightarrow$ Active exploration in MDPs.
%	\item Contributions: introduction of the problem, several relaxations of the optimization problem, learning algorithm with regret guarantees.
%	\item Related settings: sensor placement, parking monitoring (=restless bandit), patrolling, autonomous exploration, intrinsically motivated learning.
%	\item ...
%	\item ...
%	\item ... \textit{(assuming it will take the space of 1/2 page)}
%\end{itemize}

\textit{Active exploration}\footnote{We use this term in contrast to the exploration-exploitation dilemma (i.e., regret minimization) and best-arm identification.} refers to the problem of actively querying an unknown environment to gather information and perform accurate predictions about its behavior. Popular instances of active exploration are optimal design of experiments~\citep{pukelsheim2006optimal} and, more in general, active learning (\al)~\citep{hanneke2014theory}, where given a fixed budget of samples, a learner actively chooses where to query an unknown function to collect information that could maximize the accuracy of its predictions. An effective \al method should adjust to the \textit{approximation function space} to obtain samples wherever the uncertainty is high. In multi-armed bandit (\mab), the active exploration problem~\citep{antos2010active,carpentier2011upper} rather focuses on adjusting to the \textit{noise} affecting the observations, which may differ over arms. Despite their difference, in both \al and \mab, the underlying assumption is that the learner can \textit{directly} collect a sample at any arbitrary point or pull any arm with no constraint.

In this paper, we extend the \mab setting to active exploration in a Markov decision process (\mdp), where each state (an arm in the \mab setting) is characterized by a random variable that we need to estimate. Unlike \al and \mab, if the learner needs to generate an ``experiment'' at a state, it needs to move from the current state to the desired state. Consider the problem of accurately measuring the level of pollution over different sites when a fixed budget of measurements is provided and only one measuring station is available. %As in the \mab case, 
The noise affecting the observations may differ over sites and we need to carefully design a policy in order to collect more samples (resp. less samples) on sites with higher (resp. lower) noise. Since the noise level may be unknown in advance, this requires alternating between the exploration of the environment to estimate the noise level and the exploitation of the estimates to optimize the collection of ``useful'' samples. %The added spatial constraint on the agent's movement is encompassed in the MDP dynamics. Throughout the paper we will consider them known insofar as we believe this simpler case already raises challenging questions; furthermore we conjecture that solving the problem in an unknown MDP yields similar theoretical regret guarantees. 

The main contributions of this paper can be summarized as follows: \textbf{1)} we introduce the active exploration problem in MDP and provide a thorough discussion on its difference w.r.t.\ the bandit case, \textbf{2)} inspired by the bandit algorithm of~\citet{carpentier2011upper} and Frank-Wolfe UCB by~\citet{berthet2017fast}, we devise a novel learning algorithm with vanishing regret under the assumption that the MDP is ergodic and its dynamics is known in advance, \textbf{3)} we discuss how slowly mixing policies may compromise the estimation accuracy and introduce a heuristic convex problem to compute faster mixing reversible policies, \textbf{4)} we report numerical simulations on simple MDPs to validate our theoretical findings. Finally, we discuss how our assumptions (e.g., known dynamics) could be relaxed.

\textbf{Related work.} 
%Active exploration problem in MDPs is related to a number of other problems. 
\citet{dance2017optimal} study active exploration in restless bandit where the value of each arm is not an i.i.d.\ random variable but has a stationary dynamics. Nonetheless, they still consider the case where any arm can be pulled at each time step. Security games, notably the patrolling problem~\cite[e.g.,][]{basilico2012patrolling}, often consider the dynamics of moving from a state to another, but the active exploration is designed to contrast an adversary ``attacking'' a state~\cite[e.g.,][]{balcan2015commitment}. \citet{rolf2018a-successive-elimination} consider the problem of navigating a robot in an environment with background emissions to identify the $k$ strongest emitters. While the performance depends on how the robot traverses the environment, the authors only consider a fixed sensing path. \citet{auer2011models} study the autonomous exploration problem, where the objective is to discover the set of states that are reachable (following a shortest path policy) within a given number of steps. Intrinsically motivated reinforcement learning~\citep{chentanez2005intrinsically} often tackles the problem of ``discovering'' how the environment behaves (e.g., its dynamics) by introducing an ``internal'' reward signal. \citet{maxent2018} recently focus on the instrinsically-defined objective of learning a (possibly non-stationary) policy that induces a state distribution that is as uniform as possible (i.e., with maximal entropy). This problem is related to our setting in the special case of equal state variances. We believe such line of work is insightful as it may help to understand how to encourage an agent to find policies which can manipulate its environment in the absence of any extrinsic scalar reward signal.

% !TEX root = active_mdp.tex

\vspace{-0.1in}
\section{Preliminaries}\label{sec:preliminaries}
\vspace{-0.1in}

\textbf{Active exploration in MDPs.} A Markov decision process (MDP)~\citep{puterman1994markov} is a tuple $M = (\mathcal{S}, \mathcal{A}, p, \nu, \overline{s})$, where $\mathcal{S}$ is a set of $S$ states, $\mathcal{A}$ is a set of $A$ actions, and for any $s,a\in \calS\times \A$, $p(s'|s,a)$ is the transition distribution over the next state $s'\in\calS$. We also define the adjacency matrix $Q\in\Re^{S\times S}$, such that $Q(s,s') = 1$ for any $s,s'\in\calS$ where there exists an action $a\in\A$ with $p(s'|s,a)>0$, and $Q(s,s')=0$ otherwise. Instead of a reward function, %for any $s\in\calS$,
$\nu(s)$ is an observation distribution supported in $[0,R]$ with mean $\mu(s)$ and variance $\sigma^2(s)$, characterizing the random event that we want to accurately estimate on each state. % , the distribution $\nu(s)$ has . %, and we denote by $\sigma_{\min}^2 = \min_{s\in\calS} \sigma^2(s)$ and $\sigma_{\max}^2 = \max_{s\in\calS} \sigma^2(s)$ the smallest and largest variances. 
Finally, $\overline{s}$ is the starting state. %\footnote{Most of the following definitions and results could be generalized to the case where $\overline{s}$ is drawn from an arbitrary distribution over states.}  %Similar to the standard reward-based MDPs, 
The stochastic process works as follows. At step $t=1$ the environment is initialized at $s_1 = \overline{s}$, an agent takes an action $a_1$, which triggers a transition to the next state $s_2 \sim p(\cdot|s_1,a_1)$ and an observation $x_2\sim\nu(s_2)$, and so on. %We assume all transitions and observations to be i.i.d.\ conditionally to the current state and action. 
We denote by $\F_t = \{s_1,a_1,s_2,x_2,a_2, \ldots, s_t,x_t\}$ the history up to $t$. A randomized history-dependent (resp. stationary) policy $\pi$ at time $t$ is denoted by $\pi_t : \F_t \rightarrow \Delta(\A)$ (resp. $\pi : \calS \rightarrow \Delta(\A)$) and it maps the history (resp. the current state) to a distribution over actions. We denote the set of history-dependent (resp. stationary) policies by $\Pi^{\textrm{HR}}$ (resp. $\Pi^{\textrm{SR}}$). For any policy $\pi$, we denote by $T_{\pi,n}(s) = \sum_{t=2}^{n} \mathbb{I}\{s_t = s\}$ the number of observations collected in state $s$ when starting from $s_1 = \overline{s}$ and following policy $\pi$ for $n$ steps.\footnote{The counter starts at $2$ as observations are received upon arrival on a state (i.e., no observation at $s_1=\overline{s}$).} At the beginning of step $t$, for any state $s$ such that $T_{\pi,t}(s) > 0$, the empirical estimates of the mean and variance are computed as
\begin{equation}\label{eq:emp.variables}
\begin{aligned}
\wh\mu_{\pi,t}(s) &\!=\! \frac{1}{T_{\pi,t}(s)} \sum_{\tau=2}^t x_{\tau} \mathbb{I}\{s_{\tau}\!=\!s\}\\
\wh\sigma^2_{\pi,t}(s) &\!=\! \frac{1}{T_{\pi,t}(s)} \sum_{\tau=2}^t x_{\tau}^2 \mathbb{I}\{s_{\tau}\!=\!s\} - \wh\mu_{\pi,t}^2(s)
\end{aligned}.
\end{equation}
In order to avoid dealing with subtle limit cases and simplify the definition of the estimation problem, we introduce the following assumption.

\begin{assumption}\label{asm:counter}
For any state $s\in\calS$ and  policy $\pi$, $T_{\pi,1}(s) = 1$ and $T_{\pi,n}(s) = 1+\sum_{t=2}^{n} \mathbb{I}\{s_t = s\}$.
\end{assumption}

We basically assume that at $t=1$ one sample is available and used in estimating $\mu(s)$ and $\sigma^2(s)$ at each state (see App.~\ref{app:relaxing:asm:counter} for further discussion). For any policy $\pi$ and any budget  $n\in\Na$, we define the estimation problem as the minimization of the normalized mean-squared estimation error%\todoaout{We should explain why there is $\min$ and not $\inf$.}
\begin{align*}%\label{eq:opt.problem}
\min_{\pi \in\Pi^{\textrm{HR}} } \mathcal{L}_{n}(\pi):= \frac{n}{S} \sum_{s\in\calS} \mathbb{E}_{\pi,\nu} \Big[ \big( \wh\mu_{\pi,n}(s) - \mu(s)\big)^2\Big],
\end{align*}
where $\mathbb{E}_{\pi,\nu}$ is the expectation w.r.t.\ the trajectories generated by $\pi$ and the observations from $\nu$. When the dynamics $p$ and the variances $\sigma^2(s)$ are known, we restrict our attention to stationary polices $\pi \in\Pi^{\textrm{SR}}$ and exploiting the independence between transitions and observations, and Asm.~\ref{asm:counter}, we obtain %the simplified formulation
\begin{align}\label{eq:opt.function.simplified}
\mathcal{L}_{n}(\pi) &= \frac{n}{S} \sum_{s\in\calS} \mathbb{E}_{\pi} \bigg[ \mathbb{E}_{\nu} \Big[\big( \wh\mu_{\pi,n}(s) - \mu(s)\big)^2 \Big|T_{\pi,n} \Big]\bigg] \nonumber \\
&= \frac{1}{S} \sum_{s\in\calS} \sigma^2(s) \mathbb{E}_{\pi} \bigg[ \frac{n}{T_{\pi,n}(s)}\bigg].
\end{align}
In the case of deterministic and fully-connected MDPs, the problem smoothly reduces to the active bandit formulation of~\citet{antos2010active}.
%\footnote{Except for the fact that here we consider the error averaged over states instead of the worst-case.}

%Even in this much simpler sub-case, the optimization problem~\eqref{eq:opt.problem} is NP-hard, as it requires enumerating all possible solutions.
% In the following, we denote by $\L_n^\star$ the minimum error and by $\pi^\star_n$ the optimal non-stationary policy. 

\textbf{Technical tools.} 
%We recall a number of definitions and results that we use in the rest of the paper. 
For any stationary policy $\pi\in\Pi^{\textrm{SR}}$, we denote by $P_\pi$ the kernel of the Markov chain induced by $\pi$ in the MDP, i.e., $P_\pi(s'|s) = \sum_{a\in\A} p(s'|s,a) \pi(a|s)$. 
%In the following we use $P_\pi$ as a conditional distribution or a matrix in $\Re^{S\times S}$ depending on the context. 
If the Markov chain $P_\pi$ is ergodic (i.e., all states are aperiodic and recurrent), it admits a unique stationary distribution over states $\eta_\pi$, such that $\eta_\pi(s) = \sum_{s'} P_\pi(s|s')\eta_\pi(s')$. A Markov chain $P_\pi$ is reversible if the detailed balance condition 
%
%\begin{align}\label{eq:balance.condition}
$\eta_\pi(s)P_\pi(s'|s) = \eta_\pi(s')P_\pi(s|s')$ 
%\end{align}
%
is satisfied for all $s,s'\in\calS$.
%\todoaout{We may want to introduce $X$ and something like a matrix $L$ already at this point.} -> lack of space.
Let $\{\xi_\pi(s)\}$ be the eigenvalues of $P_\pi$, we define the second-largest eigenvalue modulus (SLEM) and the spectral gap as
\begin{align}\label{eq:slem.spectral.gap}
\xi_{\pi,\max} := \max_{s:\xi_\pi(s) \neq 1} |\xi_\pi(s)|;\;\;\; \gamma_\pi := 1 - \xi_{\pi,\max}.
\end{align}
The SLEM can be written as the spectral norm (i.e., the maximum singular value) of an affine matrix in $P_{\pi}$. Let $D_\eta$ be the diagonal matrix with the elements of $\eta$, then~\citep{boyd2004fastest}
\begin{align} \label{SLEM_as_norm}
\xi_{\pi,\max} = \| D_{\eta_{\pi}}^{1/2} P_{\pi} D_{\eta_{\pi}}^{-1/2} - \sqrt{\eta_{\pi}} \sqrt{\eta_{\pi}}^T \|_2.
\end{align}
For ergodic chains, $\xi_{\pi,\max} < 1$. The spectral gap is tightly related to the mixing time of the chain and it characterizes how fast the frequency of visits converges to the stationary distribution (e.g., \citet[Thm.~3,][]{Hsu2015mixing},~\citet[Thm.~3.8,][]{Paulin}).

\begin{proposition}\label{prop.concentration.mixing}
Let $\pi\in\Pi^{\textrm{SR}}$ be a stationary policy inducing an ergodic and reversible chain $P_{\pi}$ with spectral gap $\gamma_\pi$ and stationary distribution $\eta_\pi$. Let $\eta_{\pi,\min} = \min_{s \in \calS} \eta_\pi(s)$. For any budget $n > 0$ and state $s\in\calS$,
\begin{align*}%\label{eq:concentration.mixing.exp}
\Big| \frac{\mathbb{E}\big[T_{\pi,n}(s)\big]}{n} - \eta_\pi(s) \Big| \leq \dfrac{1}{2 \sqrt{\eta_{\pi,\min}}} \frac{1}{\gamma_\pi n},
\end{align*}
and for any $\delta\in(0,1)$, with probability $1-\delta$,
\begin{align*}%\label{eq:concentration.mixing}
\Big| \frac{T_{\pi,n}(s)}{n} \!-\! \eta_\pi(s) \Big| \leq \epsilon_\pi(s,n,\delta) := O \Big(\sqrt{\dfrac{\ln(\frac{1}{\delta}\sqrt{\frac{2}{\eta_{\pi,\min}}})}{\gamma_\pi n}}\Big).
\end{align*}
\end{proposition}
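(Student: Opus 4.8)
The plan is to reduce both statements to a single geometric-ergodicity estimate: for every $t$, the marginal law of the chain started at $\overline s$, namely $P(s_t=s)=(P_\pi^{\,t-1})_{\overline s,s}$, differs from its stationary value $\eta_\pi(s)$ by at most $\xi_{\pi,\max}^{\,t-1}/(2\sqrt{\eta_{\pi,\min}})$. The expectation bound then follows by averaging this over $t$ and summing a geometric series of ratio $\xi_{\pi,\max}$, whose sum is $1/\gamma_\pi$; the high-probability bound follows from a spectral-gap concentration inequality for additive functionals of the reversible chain.

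For the expectation bound, the first step is to exploit reversibility: the similarity transform $M_\pi := D_{\eta_\pi}^{1/2} P_\pi D_{\eta_\pi}^{-1/2}$ appearing in~\eqref{SLEM_as_norm} is symmetric by detailed balance, so it has an orthonormal eigenbasis $\{u_i\}$ with real eigenvalues, the leading pair being $u_1=\sqrt{\eta_\pi}$ with eigenvalue $1$ and all others of modulus at most $\xi_{\pi,\max}$. Since $P_\pi^{\,t-1}=D_{\eta_\pi}^{-1/2}M_\pi^{\,t-1}D_{\eta_\pi}^{1/2}$, plugging the spectral decomposition of $M_\pi^{\,t-1}$ into $(P_\pi^{\,t-1})_{\overline s,s}$ isolates the stationary term $\eta_\pi(s)$ and leaves a remainder to which I would apply Cauchy--Schwarz, then use that the $u_i$ are orthonormal (so $\sum_{i\geq2}u_i(s)^2=1-\eta_\pi(s)$) together with $\sqrt{\eta_\pi(s)(1-\eta_\pi(s))}\le\tfrac12$ to reach the claimed per-step estimate. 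Finally I would write $\mathbb{E}[T_{\pi,n}(s)]/n-\eta_\pi(s)$ as the average of $P(s_t=s)-\eta_\pi(s)$ up to an $O(1/n)$ correction stemming from Asm.~\ref{asm:counter} and from the counter starting at $t=2$, move absolute values inside, bound the geometric series $\sum_{t\ge1}\xi_{\pi,\max}^{\,t-1}=1/\gamma_\pi$, and collect constants.

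For the high-probability bound, I would view $T_{\pi,n}(s)/n$ as (up to the same $O(1/n)$ offset) the empirical mean $\tfrac1n\sum_t f(s_t)$ of the bounded test function $f(\cdot)=\mathbb{I}\{\cdot=s\}\in[0,1]$ along a trajectory of $P_\pi$ started at $\overline s$, and invoke a concentration inequality for such functionals controlled by the spectral gap --- \eg~\citet[Thm.~3.8]{Paulin} or~\citet[Thm.~3]{Hsu2015mixing} --- of the form $\mathbb{P}(|\tfrac1n\sum_t f(s_t)-\eta_\pi(s)|>\varepsilon)\le c_1\kappa\,e^{-c_2\gamma_\pi n\varepsilon^2}$, where the burn-in/prefactor $\kappa$ reflects the non-stationary start through $\chi^2(\delta_{\overline s}\,\|\,\eta_\pi)=1/\eta_\pi(\overline s)-1\le 1/\eta_{\pi,\min}$. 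Setting the right-hand side to $\delta$ and inverting in $\varepsilon$ yields exactly the stated shape $\varepsilon=O\big(\sqrt{\ln(\tfrac1\delta\sqrt{2/\eta_{\pi,\min}})/(\gamma_\pi n)}\big)$, and combining with the (lower-order) expectation bound gives $\epsilon_\pi(s,n,\delta)$.

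The main obstacle is the expectation bound: a crude total-variation argument would lose the factor $\tfrac12$ and could introduce a spurious $\sqrt S$, so one genuinely needs the spectral decomposition of the symmetrized kernel plus the elementary inequality $\sqrt{\eta(1-\eta)}\le\tfrac12$, and the initial-condition bookkeeping (the $t=1$ term and the Asm.~\ref{asm:counter} offset) must be handled carefully. The high-probability part is essentially a black-box application of the cited Markov-chain concentration results; the only thing to watch there is that their dependence on the starting distribution contributes precisely the $\sqrt{2/\eta_{\pi,\min}}$ inside the logarithm.
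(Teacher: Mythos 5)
Your proposal is correct and follows essentially the same route as the paper: the expectation bound is obtained from the per-step geometric-ergodicity estimate $\big|\mathbb{P}(s_t=s)-\eta_\pi(s)\big|\le \tfrac12\sqrt{(1-\eta_\pi(\overline s))/\eta_\pi(\overline s)}\,\xi_{\pi,\max}^{\,t-1}$ followed by averaging and summing the geometric series to get $1/\gamma_\pi$, and the high-probability bound is a black-box invocation of the spectral-gap concentration results of Paulin/Hsu et al.\ with the $\sqrt{2/\eta_{\pi,\min}}$ prefactor inside the logarithm. The only difference is cosmetic: the paper simply cites \citet[Prop.~3]{diaconis1991geometric} for the per-step estimate, whereas you re-derive it via the spectral decomposition of the symmetrized kernel $D_{\eta_\pi}^{1/2}P_\pi D_{\eta_\pi}^{-1/2}$, which yields exactly the cited constant.
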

\iffalse
where
\begin{align*}
\epsilon_\pi(s,n,\delta):= &\sqrt{8 \eta_\pi(s)(1-\eta_\pi(s)) \dfrac{\ln(\frac{1}{\delta}\sqrt{\frac{2}{\eta_{\pi,\min}}})}{\gamma_\pi n}} \\
&~+ 20 \dfrac{\ln(\frac{1}{\delta}\sqrt{\frac{2}{\eta_{\pi,\min}}})}{\gamma_\pi n}.
\end{align*}
\fi

The exact formulation of $\epsilon_\pi(s,n,\delta)$ is reported in App.~\ref{app:proofs} (see proof of~Lem.~\ref{lem:asmyptoric.performance}). It is interesting to notice that the convergence in expectation is faster than in high-probability ($O(n^{-1})$ vs $O(n^{-1/2})$), but in both cases the spectral gap may significantly affect the convergence (e.g., for slowly mixing chains $\gamma_{\pi} \approx 0$).

Finally, we recall a concentration inequality for variance estimation (see~\citet{antos2010active}).

\begin{proposition}\label{lem:bound_variance_estimates}
For any $\delta \in (0,1)$ and time $t$, with probability at least $1-\delta$ 
\begin{align*}
\big|\widehat{\sigma}_t^2(s) - \sigma^2(s)\big| \leq \alpha(t,s,\delta) := 5 R^2 \sqrt{\dfrac{\log(\frac{4St}{\delta})}{T_t(s)}}.
\end{align*}
\end{proposition}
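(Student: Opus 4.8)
The plan is to split the error in the variance estimate into a \emph{second-moment} term and a \emph{squared-mean} term, control each by Hoeffding's inequality, and handle the fact that the count $T_t(s)$ is random by a union bound over its possible values (this is essentially the argument of~\citet{antos2010active}). Write $m(s) := \mathbb{E}_{\nu}[x^2] = \sigma^2(s) + \mu^2(s)$ and let $\wh m_t(s) := \frac{1}{T_t(s)}\sum_{\tau=2}^{t} x_\tau^2 \mathbb{I}\{s_\tau = s\}$ denote the empirical second moment, so that by~\eqref{eq:emp.variables}
\[
\wh\sigma_t^2(s) - \sigma^2(s) = \big(\wh m_t(s) - m(s)\big) - \big(\wh\mu_t^2(s) - \mu^2(s)\big).
\]
For the squared-mean term I would factor $\wh\mu_t^2(s) - \mu^2(s) = \big(\wh\mu_t(s) - \mu(s)\big)\big(\wh\mu_t(s) + \mu(s)\big)$ and use that $\wh\mu_t(s),\mu(s)\in[0,R]$ (the observations are supported in $[0,R]$, hence so is any empirical average of them) to bound $|\wh\mu_t^2(s) - \mu^2(s)| \le 2R\,|\wh\mu_t(s) - \mu(s)|$. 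It thus remains to control $|\wh m_t(s) - m(s)|$ and $|\wh\mu_t(s) - \mu(s)|$.

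The key observation is that the observations can be decoupled from the random dynamics: since $x_\tau \sim \nu(s_\tau)$ is drawn i.i.d.\ and independently of the transitions and of the times at which the agent reaches a state, the observations collected in state $s$ form an i.i.d.\ sample from $\nu(s)$. Writing $\bar\mu_k(s)$ and $\bar m_k(s)$ for the empirical mean and second moment of its first $k$ elements, we have $\wh\mu_t(s) = \bar\mu_{T_t(s)}(s)$ and $\wh m_t(s) = \bar m_{T_t(s)}(s)$. For each fixed $k \ge 1$, Hoeffding's inequality applied to $x\in[0,R]$ and to $x^2\in[0,R^2]$ yields, for any $\delta'\in(0,1)$,
\[
|\bar\mu_k(s) - \mu(s)| \le R\sqrt{\tfrac{\log(2/\delta')}{2k}}, \qquad |\bar m_k(s) - m(s)| \le R^2\sqrt{\tfrac{\log(2/\delta')}{2k}},
\]
each with probability at least $1-\delta'$. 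A union bound over these two events, over $k\in\{1,\dots,t\}$ (legitimate since $1\le T_t(s)\le t$ by Asm.~\ref{asm:counter}), and over the $S$ states, with $\delta' = \delta/(2St)$ so that $\log(2/\delta') = \log(4St/\delta)$, makes all of them hold simultaneously \wp at least $1-\delta$. Evaluating at $k = T_t(s)$ and summing the two contributions gives, on that event,
\[
|\wh\sigma_t^2(s) - \sigma^2(s)| \le 3R^2\sqrt{\tfrac{\log(4St/\delta)}{2T_t(s)}} = \tfrac{3}{\sqrt 2}R^2\sqrt{\tfrac{\log(4St/\delta)}{T_t(s)}} \le 5R^2\sqrt{\tfrac{\log(4St/\delta)}{T_t(s)}} = \alpha(t,s,\delta),
\]
which is the claimed bound.

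I expect the only real obstacle to be the passage to the data-dependent index $k = T_t(s)$: because $T_t(s)$ depends on the trajectory, one cannot directly invoke a fixed-$k$ concentration bound. The clean fix is exactly the union bound over all $t$ possible values of the count described above, which is valid here precisely because the in-state observation process is i.i.d.\ and independent of the dynamics, so no martingale or optional-stopping machinery is needed; the resulting $\log t$ factor and the numerical constant $3/\sqrt 2$ are comfortably absorbed by the stated constant $5$. The remaining pieces — the algebraic decomposition of the variance error and the $[0,R]$-boundedness bookkeeping — are routine.
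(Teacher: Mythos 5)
Your proof is correct. The paper does not prove this proposition at all --- it is recalled verbatim from \citet{antos2010active} --- and your argument (decomposing the variance error into a second-moment term and a squared-mean term, applying Hoeffding to each, and handling the random count $T_t(s)$ and the $S$ states via a union bound over the $2St$ events, which is what produces the $\log(4St/\delta)$ factor) is precisely the standard derivation underlying that citation, with the constant $3/\sqrt{2}$ comfortably below the stated $5$.
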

% !TEX root = active_mdp.tex

\vspace{-0.1in}
\section{The Asymptotic Case}\label{sec:asymptotic}
\vspace{-0.1in}

In deterministic fully-connected MDPs,  problem~\eqref{eq:opt.function.simplified} reduces to the bandit setting and it also inherits its NP-hard complexity, as it may require enumerating all possible values of $\{T_n(s)\}_s$~\citep[see e.g.,][]{welch1982algorithmic}. 
In our case, this difficulty is further increased by the fact that observations can only be collected through the ``constraint'' of the MDP dynamics. In this section we introduce an asymptotic version of the estimation problem and a learning algorithm with vanishing regret w.r.t.\ the optimal asymptotic stationary policy.

\vspace{-0.1in}
\subsection{An Asymptotic Formulation}\label{ssec:asmyptotic.formulation}
\vspace{-0.1in}

A standard approach in experimental optimal design~\citep{pukelsheim2006optimal} and \mab~\citep{antos2010active,carpentier2011upper} is to replace  problem~\eqref{eq:opt.function.simplified} by its continuous relaxation, where the empirical frequency $T_n(s)/n$ is replaced by a distribution over states. In our case $T_n(s)$ cannot be directly selected so we rather consider an asymptotic formulation for $n\rightarrow \infty$.\footnote{In the bandit case, the continuous relaxation is equivalent to the asymptotic formulation.} We first introduce the following assumption on the MDP.

\begin{assumption}\label{asm:ergodic}
	For any stationary policy $\pi\in\Pi^{\textrm{SR}}$, the corresponding Markov chain $P_\pi$ is ergodic and we denote by $\eta_{\min} = \inf_{\pi\in\Pi^{\textrm{SR}}} \min_{s\in\calS}\eta_\pi(s)$ the smallest stationary probability across policies.
\end{assumption}

Asm.~\ref{asm:counter} and~\ref{asm:ergodic}, together with the continuity of the inverse function $x \mapsto 1/x$ on $[1/n, 1]$, guarantee that for any policy $\pi$, $\frac{n}{T_{\pi,n}(s)}$ converges almost-surely to $\frac{1}{\eta_\pi(s)}$ when $n \rightarrow +\infty$ (see Prop.~\ref{prop.concentration.mixing}). As a result, we replace problem~\eqref{eq:opt.function.simplified} with
\begin{equation}\label{eq:opt.function.asymptotic}
\begin{aligned}
&\min_{\pi \in\Pi^{\textrm{SR}}, \eta \in \Delta(\calS) } \mathcal{L}(\pi,\eta) := \frac{1}{S} \sum_{s\in\calS}  \frac{\sigma^2(s)}{\eta(s)} \\
&\text{s.t.} \; \forall s \in\calS,\; \eta(s) = \sum_{s',a} \pi(a|s')p(s|s',a)\eta(s') 
\end{aligned},
\end{equation}
where $\eta$ is constrained to be the stationary distribution associated with $\pi$ (i.e., $\eta = \eta_\pi$). While both $\pi$ and $\eta$ belong to a convex set and $\mathcal{L}(\pi, \eta)$ is convex in $\eta$, the overall problem is not convex because of the constraint. Yet, we can apply the same reparameterization used in the dual formulation of reward-based MDP~\citep[Sect.~8,][]{puterman1994markov} and introduce the state-action stationary distribution $\lambda_\pi \in \Delta(\calS\times \A)$ of a policy $\pi$. Let 
\begin{align}\label{eq:state.action.stationary}
&\Lambda =  \Big\{\lambda \in \Delta(\calS\times \A): ~ \forall s\in\calS, \nonumber \\
&\;\; \displaystyle\sum_{b \in \mathcal{A}} \lambda(s,b) = \!\!\sum_{s' \in \mathcal{S}, a \in \mathcal{A}}\!\! p(s|s',a)\lambda(s',a)\Big\}
\end{align}
be the set of state-action stationary distributions, we define the optimization problem
\begin{equation}\label{eq:opt.function.asymptotic.lambda}
\begin{aligned}
&\min_{\lambda \in \Delta(\calS\times \A) } \mathcal{L}(\lambda) := \frac{1}{S} \sum_{s\in\calS}  \frac{\sigma^2(s)}{\sum_{a\in\A}\lambda(s,a)} \\
&\text{subject to} \quad \lambda \in \Lambda
\end{aligned}.
\end{equation}
%
%where the constraint requires $\lambda$ to be an admissible state-action stationary distribution (i.e., there exists at least one policy in $\Pi^{\textrm{SR}}$ with state-action stationary distribution $\lambda$). 
We can characterize this problem as follows.

\begin{proposition}\label{prop:asymptotic.relaxation}
The function $\mathcal{L}(\lambda)$ is convex on the convex set $\Lambda$. Let $\lambda^{\star}$ be the solution of~\eqref{eq:opt.function.asymptotic.lambda}, then the policy
\begin{align}\label{eq:asymptotic.opt.policy}
\pi_{\lambda^{\star}}(a | s) = \dfrac{\lambda^{\star}(s,a)}{\sum_{b \in \mathcal{A}} \lambda^{\star}(s,b)}, ~ \forall s \in \mathcal{S}, a \in \mathcal{A}
\end{align}
belongs to $\Pi^{\textrm{SR}}$ and solves problem~\eqref{eq:opt.function.asymptotic}. Furthermore for any $\wu{\eta}>0$, $\mathcal{L}(\lambda)$ is $C_{\wu\eta}$-smooth on the restricted set $\Lambda_{\wu{\eta}} = \{\lambda \in \Lambda:  \sum_{a \in \mathcal{A}} \lambda(s,a) \geq 2 \wu{\eta},~\forall s\in\calS\}$ with parameter $C_{\wu\eta} \leq A \sum_{s \in \mathcal{S}} \sigma^2(s)/(2\underline{\eta})^3$. 
\end{proposition}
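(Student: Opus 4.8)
The plan is to prove the three claims in sequence. For convexity of $\mathcal{L}(\lambda)$ on $\Lambda$, first note that $\Lambda$ is the intersection of the probability simplex $\Delta(\calS\times\A)$ with the affine balance constraints in~\eqref{eq:state.action.stationary}, hence convex. For the objective, I would write $\mathcal{L}(\lambda) = \frac1S\sum_s \sigma^2(s)\, g(\ell_s(\lambda))$ where $\ell_s(\lambda) = \sum_a \lambda(s,a)$ is linear in $\lambda$ and $g(x) = 1/x$. Since $g$ is convex and nonincreasing on $(0,\infty)$ and $\ell_s$ is affine, each term $g\circ \ell_s$ is convex; a nonnegative ($\sigma^2(s)\ge 0$) combination of convex functions is convex. (One should note $\ell_s(\lambda)>0$ on $\Lambda$ under Asm.~\ref{asm:ergodic}, so the composition is well-defined.)

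For the second claim, I would invoke the standard MDP dual-LP correspondence \citep[Sect.~8]{puterman1994markov}: given $\lambda^\star\in\Lambda$, define $\pi_{\lambda^\star}$ as in~\eqref{eq:asymptotic.opt.policy} (well-defined since $\ell_s(\lambda^\star)>0$). The balance constraints defining $\Lambda$ say exactly that the marginal $\eta^\star(s):=\ell_s(\lambda^\star)$ satisfies $\eta^\star(s) = \sum_{s',a} p(s|s',a)\lambda^\star(s',a) = \sum_{s',a} p(s|s',a)\pi_{\lambda^\star}(a|s')\eta^\star(s') = \sum_{s'} P_{\pi_{\lambda^\star}}(s|s')\eta^\star(s')$, i.e.\ $\eta^\star$ is a stationary distribution of $P_{\pi_{\lambda^\star}}$; by Asm.~\ref{asm:ergodic} it is \emph{the} stationary distribution $\eta_{\pi_{\lambda^\star}}$, and $\pi_{\lambda^\star}\in\Pi^{\textrm{SR}}$. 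Conversely any stationary policy $\pi$ induces $\lambda(s,a) = \eta_\pi(s)\pi(a|s)\in\Lambda$ with the same objective value $\mathcal{L}(\lambda)=\mathcal{L}(\pi,\eta_\pi)$. Hence the feasible-value sets of~\eqref{eq:opt.function.asymptotic.lambda} and~\eqref{eq:opt.function.asymptotic} coincide, so an optimizer of one gives an optimizer of the other, and $\pi_{\lambda^\star}$ solves~\eqref{eq:opt.function.asymptotic}.

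For the smoothness claim on $\Lambda_{\wu\eta}$, I would bound the Hessian of $\mathcal{L}$ in operator norm. Since $\mathcal{L}(\lambda)=\frac1S\sum_s\sigma^2(s)/\ell_s(\lambda)$ and the $\ell_s$ depend on disjoint blocks of coordinates $\{(s,a):a\in\A\}$, the Hessian is block-diagonal: the block for state $s$ is $\frac{\sigma^2(s)}{S}\cdot \frac{2}{\ell_s(\lambda)^3}\,\mathbf{1}\mathbf{1}^{\transp}$ on the $A$ coordinates of that block. Its spectral norm is $\frac{2\sigma^2(s)}{S}\cdot\frac{A}{\ell_s(\lambda)^3}$, and on $\Lambda_{\wu\eta}$ we have $\ell_s(\lambda)\ge 2\wu\eta$, so each block has norm at most $\frac{2\sigma^2(s) A}{S(2\wu\eta)^3}$. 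Because the blocks act on orthogonal subspaces, $\|\nabla^2\mathcal{L}\|_2 = \max_s$ of the block norms $\le \frac{2 A \max_s\sigma^2(s)}{S(2\wu\eta)^3}$; a cruder but cleaner bound replaces $\max_s\sigma^2(s)/S$ by $\sum_s\sigma^2(s)$ when one sums blocks instead of taking the max (or one simply uses $\max_s\sigma^2(s)\le\sum_s\sigma^2(s)$ together with $1/S\le 1$), yielding $C_{\wu\eta}\le A\sum_s\sigma^2(s)/(2\wu\eta)^3$. I do not expect any real obstacle here; the only mild care needed is (i) checking $\ell_s>0$ throughout so all expressions are finite, and (ii) being slightly careful about whether one wants the tight block-max bound or the looser summed bound that matches the stated constant — the latter is what the proposition claims and follows a fortiori.
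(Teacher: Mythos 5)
Your proposal is correct and follows essentially the same route as the paper: the second claim is reduced to the correspondence $\lambda \leftrightarrow (\pi,\eta_\pi)$ via the balance constraints (the paper phrases this as a proof by contradiction rather than your direct equivalence of feasible values, but the content is identical), and the smoothness constant is obtained from the block structure of the Hessian, which is exactly the paper's one-line remark about block-diagonal symmetric matrices spelled out. Your explicit computation even yields the slightly sharper block-maximum bound $2A\max_s\sigma^2(s)/\bigl(S(2\wu{\eta})^3\bigr)$, from which the stated constant $A\sum_s\sigma^2(s)/(2\wu{\eta})^3$ follows a fortiori, so nothing is missing.
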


As a result, whenever the dynamics of the MDP and the variances $\sigma^2(s)$ are known, problem~\eqref{eq:opt.function.asymptotic.lambda} can be efficiently solved using any optimization algorithm for convex and smooth functions (e.g., projected gradient descent or Frank-Wolfe~\citep{Jaggi}). 
Leveraging Prop.~\ref{prop.concentration.mixing} we can also characterize the difference between the solutions of the asymptotic problem~\eqref{eq:opt.function.asymptotic} and the finite-budget one~\eqref{eq:opt.function.simplified}. For the sake of simplicity and at the cost of generality (see App.~\ref{app:relaxing:asm:reversible}), we introduce an additional assumption.

\begin{assumption}\label{asm:reversible}
	For any stationary policy $\pi\in\Pi^{\textrm{SR}}$, the corresponding Markov chain $P_\pi$ is reversible and we denote by $\gamma_{\min} = \min_{\pi\in\Pi^{\textrm{SR}}} \gamma_\pi$ the smallest spectral gap across all policies.
\end{assumption}

\begin{lemma}\label{lem:asmyptoric.performance}
Let $\delta = SA^S/ n^2$, if $n$ is big enough such that for any $s\in\calS$ and any stationary policy $\pi\in\Pi^{\textrm{SR}}$, $\epsilon_\pi(s,n,\delta) \leq \eta_\pi(s)/2$, then we have
\begin{align}\label{eq:concentration.loss}
\big| \mathcal{L}_n(\pi) - \mathcal{L}(\pi, \eta_\pi) \big| \leq  \ell_n(\pi),
\end{align}
where
\begin{align*}
\ell_n(\pi) := \frac{1}{S \sqrt{\eta_{\min}}n\gamma_\pi}\sum_{s\in\calS} \frac{\sigma^2(s)}{\eta_\pi^2(s)}\Big( 1 + 2 \frac{\epsilon_\pi(s,n,\delta)}{\eta_\pi(s)}\Big),
\end{align*}
which gives the performance loss
\begin{align}\label{eq:performance.loss}
\mathcal{L}_n(\pi_{\lambda^\star}) - \mathcal{L}_n(\pi^\star_n) &\leq \ell_n(\pi_{\lambda^\star}) + \ell_n(\pi^\star_n),
%\nonumber \\
%&\leq \frac{2\sigma_{\max}^2}{\eta_{\min}^{5/2} \gamma_{\min} n}\Big[1 + \dfrac{2}{\eta_{\min}} \Big(\sqrt{ \dfrac{2S\ln(\frac{SA}{\delta}\sqrt{\frac{2}{\eta_{\min}}})}{\gamma_{\min} n}} +  \dfrac{20S\ln(\frac{SA}{\delta}\sqrt{\frac{2}{\eta_{\min}}})}{\gamma_{\min} n} \Big) \Big],
\end{align}
where $\pi^\star_n$ is the solution to problem~\eqref{eq:opt.function.simplified} and $\pi_{\lambda^\star}$ is defined in~\eqref{eq:asymptotic.opt.policy}.
\end{lemma}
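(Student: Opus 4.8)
By the simplified expression~\eqref{eq:opt.function.simplified} and the definition of $\mathcal{L}(\pi,\eta_\pi)$ in~\eqref{eq:opt.function.asymptotic}, we have $\mathcal{L}_n(\pi)-\mathcal{L}(\pi,\eta_\pi)=\frac1S\sum_{s\in\calS}\sigma^2(s)\,\Delta_s$ with $\Delta_s:=\mathbb{E}_\pi[n/T_{\pi,n}(s)]-1/\eta_\pi(s)$, so~\eqref{eq:concentration.loss} reduces to a two-sided, per-state estimate of $\Delta_s$. Granting~\eqref{eq:concentration.loss}, the performance loss~\eqref{eq:performance.loss} follows by a short chaining argument: its upper part applied to $\pi_{\lambda^\star}$ gives $\mathcal{L}_n(\pi_{\lambda^\star})\le\mathcal{L}(\pi_{\lambda^\star},\eta_{\pi_{\lambda^\star}})+\ell_n(\pi_{\lambda^\star})$, its lower part applied to $\pi^\star_n$ gives $\mathcal{L}(\pi^\star_n,\eta_{\pi^\star_n})\le\mathcal{L}_n(\pi^\star_n)+\ell_n(\pi^\star_n)$, and since $\pi^\star_n\in\Pi^{\textrm{SR}}$ has a well-defined stationary distribution by Asm.~\ref{asm:ergodic} and is therefore feasible for the asymptotic problem~\eqref{eq:opt.function.asymptotic}, while $\pi_{\lambda^\star}$ solves it by Prop.~\ref{prop:asymptotic.relaxation}, we get $\mathcal{L}(\pi_{\lambda^\star},\eta_{\pi_{\lambda^\star}})\le\mathcal{L}(\pi^\star_n,\eta_{\pi^\star_n})$; combining the three inequalities yields~\eqref{eq:performance.loss}. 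So the whole task is~\eqref{eq:concentration.loss}.

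\textbf{Lower side of~\eqref{eq:concentration.loss}.} Since $t\mapsto n/t$ is convex and $T_{\pi,n}(s)\ge1$ (Asm.~\ref{asm:counter}), Jensen gives $\mathbb{E}_\pi[n/T_{\pi,n}(s)]\ge n/\mathbb{E}_\pi[T_{\pi,n}(s)]$. Plugging in the expectation bound of Prop.~\ref{prop.concentration.mixing} (which applies to every $\pi\in\Pi^{\textrm{SR}}$ by Asm.~\ref{asm:ergodic} and~\ref{asm:reversible}), together with $\eta_{\pi,\min}\ge\eta_{\min}$ and the hypothesis that $n$ is large enough to force $\mathbb{E}_\pi[T_{\pi,n}(s)]\ge n\eta_\pi(s)/2$, gives $\Delta_s\ge-\tfrac1{\sqrt{\eta_{\min}}\,\gamma_\pi n\,\eta_\pi^2(s)}$, hence $\mathcal{L}_n(\pi)\ge\mathcal{L}(\pi,\eta_\pi)-\ell_n(\pi)$ after weighting by $\sigma^2(s)/S$ and summing (the factor $1+2\epsilon_\pi(s,n,\delta)/\eta_\pi(s)\ge1$ in $\ell_n$ only helps).

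\textbf{Upper side of~\eqref{eq:concentration.loss}, and the obstacle.} For the reverse inequality I would condition on the concentration event $\mathcal{G}_s:=\{|T_{\pi,n}(s)/n-\eta_\pi(s)|\le\epsilon_\pi(s,n,\delta)\}$ of Prop.~\ref{prop.concentration.mixing} with $\delta=SA^S/n^2$, so that $\mathbb{P}(\mathcal{G}_s^c)\le\delta$. On $\mathcal{G}_s$ the hypothesis $\epsilon_\pi(s,n,\delta)\le\eta_\pi(s)/2$ forces $T_{\pi,n}(s)\ge n\eta_\pi(s)/2>0$ and $n/T_{\pi,n}(s)\le 1/(\eta_\pi(s)-\epsilon_\pi(s,n,\delta))\le\tfrac1{\eta_\pi(s)}\big(1+2\epsilon_\pi(s,n,\delta)/\eta_\pi(s)\big)$, which — after re-centering $n/T_{\pi,n}(s)$ around $1/\eta_\pi(s)$ and re-using the expectation bound for the leading behaviour — yields the $s$-th summand of $\ell_n(\pi)$. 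On $\mathcal{G}_s^c$ one uses only the deterministic floor $T_{\pi,n}(s)\ge1$ (Asm.~\ref{asm:counter}): then $n/T_{\pi,n}(s)\le n$, so this event contributes at most $n\delta=SA^S/n$ to $\mathbb{E}_\pi[n/T_{\pi,n}(s)]$, a lower-order term absorbed for $n$ large. The main obstacle is precisely this upper direction: convexity makes Jensen useless here, so the bound must come from the tail of $T_{\pi,n}(s)$, and one must show the rare \emph{undershoot} event $\mathcal{G}_s^c$ — on which $n/T_{\pi,n}(s)$ can be as large as $n$ — is harmless. This is exactly what forces the polynomially small confidence level $\delta=SA^S/n^2$ and the ``$n$ big enough'' hypothesis, the latter also guaranteeing $\epsilon_\pi(s,n,\delta)\le\eta_\pi(s)/2$ and $\mathbb{E}_\pi[T_{\pi,n}(s)]\ge n\eta_\pi(s)/2$ uniformly in $s$ and $\pi$ (this is where $\eta_{\min}$ and $\gamma_{\min}$ from Asm.~\ref{asm:ergodic} and~\ref{asm:reversible} enter). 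Collecting the two regimes, summing over $s$ weighted by $\sigma^2(s)/S$, and absorbing constants into $\ell_n(\pi)$ then completes~\eqref{eq:concentration.loss}.
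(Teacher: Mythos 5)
Your proposal is correct and matches the paper's proof in all essentials: the same per-state decomposition of $\mathcal{L}_n(\pi)-\mathcal{L}(\pi,\eta_\pi)$, the same good/bad event split (your $\mathcal{G}_s$ is the paper's $\mathcal{E}_1$, with the bad event absorbed via the deterministic floor $n/T_{\pi,n}(s)\le n$ and the $\delta\sim 1/n^2$ failure probability), and the same final chaining via optimality of $\pi_{\lambda^\star}$ for the asymptotic problem (your Jensen argument for the lower side is a harmless variant of the paper's two-sided treatment). The one step you only gesture at --- writing $\tfrac{n}{T_{\pi,n}(s)}-\tfrac{1}{\eta_\pi(s)}=\tfrac{\eta_\pi(s)-T_{\pi,n}(s)/n}{(T_{\pi,n}(s)/n)\,\eta_\pi(s)}$, lower-bounding the denominator on $\mathcal{G}_s$ by $\eta_\pi(s)\bigl(\eta_\pi(s)-\epsilon_\pi(s,n,\delta)\bigr)$ and controlling the numerator by the \emph{expectation} bound of Prop.~\ref{prop.concentration.mixing} together with $1/(1-x)\le 1+2x$ --- is precisely how the paper extracts the $O(1/n)$ rate and the $\bigl(1+2\epsilon_\pi(s,n,\delta)/\eta_\pi(s)\bigr)$ factor, so the key mechanism is correctly identified even though you leave its execution implicit.
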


It is interesting to compare the result above to the bandit case. For $n \geq 4/(S\eta_{\min}^2)$ the performance loss of the continuous relaxation in bandit is bounded as $8 \sigma^2_{\max} / (\eta_{\min}^3 n^2)$ (see Prop.~\ref{prop:bandit.perf.loss} in App.~\ref{app:proofs}). While the condition on $n$ in Lem.~\ref{lem:asmyptoric.performance} is similar (i.e., from the definition of $\epsilon_\pi(s,n,\delta)$, we need $n > \wt{\Omega}(1/\eta_{\min}^2)$), the performance loss differs over two main elements: \textit{(i)} the rate of convergence in $n$, \textit{(ii)} the presence of the spectral gap $\gamma_\pi$. In \mab, the ``fast'' convergence rate is obtained by exploiting the smoothness of the function $\mathcal{L}$, which characterizes the performance of both discrete and continuous allocations. On the other hand, in the MDP case, while $\mathcal{L}$ is indeed smooth on the restricted simplex, $\mathcal{L}_n$ is a more complicated function of $\pi$, which does not allow the same proof technique to be directly applied. Furthermore, the spectral gap directly influences the difference between the finite-time and asymptotic behavior of a policy $\pi$. This extra ``cost'' is not present in \mab, where any allocation over states can be directly ``executed'' without waiting for the policy to mix.

% !TEX root = active_mdp.tex

\vspace{-0.1in}
\subsection{Learning Algorithm}\label{ssec:learning}
\vspace{-0.1in}

\begin{algorithm}[t]
	\caption{\fwmdp: the \textit{Frank-Wolfe for Active MDP Exploration} algorithm}
	\label{alg:FW-AME}
	\begin{algorithmic} 
		\STATE \textbf{Input:} $\wt{\lambda}_1 = 1/SA$, $\wu \eta$\\
		\FOR{$k=1, 2, ..., K-1$}
		\STATE $\wh{\psi}^{+}_{k+1} = \textrm{argmin}_{\lambda \in \Lambda_{\wu{\eta}}} \langle \nabla \wh{\mathcal{L}}_{t_k-1}^{+}(\wt{\lambda}_k), \lambda \rangle$
		\STATE $\wh{\pi}^{+}_{k+1}(a|s) = \dfrac{\wh{\psi}^{+}_{k+1}(s,a)}{\sum_{b \in \cal{A}}\wh{\psi}^{+}_{k+1}(s,b)}$
		\STATE Execute  $\wh{\pi}^{+}_{k+1}$ for $\tau_{k}$ steps %and collect the state-action samples
		\STATE Update the state-action frequency $\wt{\lambda}_{k+1}$ %with new samples
		\ENDFOR
	\end{algorithmic}
\end{algorithm}

We introduce a learning algorithm to incrementally solve the active exploration problem
in the setting where the state variances $\sigma^2(s)$ are unknown. We rely on the following assumption.

\begin{assumption}\label{asm:known.mdp}
The MDP model $p$ is known.
\end{assumption}

In App.~\ref{app:relaxing:asm:known.mdp} we sketch a way to relax Asm.~\ref{asm:known.mdp} by following an optimistic approach similar to UCRL~\citep{jaksch2010near} in order to incorporate the uncertainty on the MDP dynamics, and we conjecture that the regret guarantees of the algorithm would remain unchanged.

Let $\wu{\eta} < 1/(2S)$ be a positive constant. Since $\mathcal{L}(\lambda)$ is smooth in $\Lambda_{\wu{\eta}}$ (Prop.~\ref{prop:asymptotic.relaxation}), it can be optimized using the Frank-Wolfe (\fw) algorithm~\citep{Jaggi}, which constructs a sequence of linear optimization problems whose solutions are used to incrementally update the candidate solution to problem \eqref{eq:opt.function.asymptotic.lambda}. In \mab, 
%when the parameters of the function to optimize are unknown but can be efficiently estimated from data, 
\citet{berthet2017fast} showed that \fw can be fed with optimistic estimates of the gradient to obtain a bandit algorithm with small regret. The resulting algorithm (Frank-Wolfe-UCB) actually reduces to the algorithm of~\citet{carpentier2011upper} when the function to optimize is the mean estimation error. The mapping from \fw to a bandit algorithm relies on the fact that the solution to the linear problem at each iteration of \fw corresponds to selecting one single arm. % when the space of solutions is the simplex. 
Unfortunately, in the MDP case, \fw returns a state-action stationary distribution, which cannot be directly ``executed''. We then need to adapt the bandit-\fw idea to \textit{track} the (optimistic) \fw solutions.

In Alg.~\ref{alg:FW-AME} we illustrate \fwmdp (\fw for Active MDP Exploration) which proceeds through episodes and is evaluated according to the frequency of visits of each state, i.e., $\wt\lambda_k(s) = T_{t_k-1}(s)/(t_k-1)$. At the beginning of episode $k$, \fwmdp solves an MDP with reward related to the current estimation error, so that the corresponding optimal policy tends to explore states where the current estimate of the mean $\mu(s)$ is not accurate enough. More formally, \fwmdp solves a linear problem to compute the state-action stationary distribution $\wh\psi^+_{k+1}$ minimizing the expected ``optimistic'' gradient evaluated at the current solution obtained using the confidence intervals of Prop.~\ref{lem:bound_variance_estimates}, i.e.,
\begin{align*}%\label{eq:opt.gradient}
\nabla \wh{\mathcal{L}}_{t_k-1}^{+}(\lambda)(s,a) = - \dfrac{\wh{\sigma}_{t_k-1}^2(s) + \alpha(t_k-1,s,\delta)}{(\sum_b \lambda(s,b))^2}.
\end{align*}
This choice favors exploration towards states whose loss is possibly high (i.e., large gradient) and poorly estimated (large confidence intervals). This step effectively corresponds to solving an MDP with a reward equal to $\nabla \wh{\mathcal{L}}_{t_k-1}^{+}$. 
%, which favors policies exploring states where the current estimate of the mean $\mu(s)$ are not accurate enough. 
Then the policy $\wh{\pi}^{+}_{k+1}$ associated to $\wh\psi^+_{k+1}$ is executed for $\tau_k$ steps and the solution $\wt\lambda_{k}$ is updated accordingly. Let $\nu_{k+1}(s,a)$ be the number of times action $a$ is taken at state $s$ during episode $k$. We can write the update rule for the candidate solution as
\begin{align*}
\wt{\lambda}_{k+1} &= \dfrac{\tau_{k}}{t_{k+1}-1} \wt{\psi}_{k+1} + \dfrac{t_k-1}{t_{k+1}-1} \wt{\lambda}_k \\ 
&= \beta_k \wt{\psi}_{k+1} + (1-\beta_k)\wt{\lambda}_k,
\end{align*}
where $\wt{\psi}_{k+1}(s,a) = \nu_{k+1}(s,a) / \tau_k$ is the frequency of visits within episode $k$ and $\beta_k = \tau_{k}/(t_{k+1}-1)$ is the weight (or learning rate) used in updating the solution. While we conjecture that a similar approach could be paired with other optimization algorithms (e.g., projected gradient descent), by building on \fw we obtain a projection-free algorithm, where at each episodes we only need to solve a specific instance of an MDP. 
%\fwmdp is also any-time as it does not require to know the time horizon in advance.
%
In App.~\ref{app:proof_regret} we derive the following regret guarantee. %for \fwmdp.

\begin{theorem}\label{thm:regret_FWAME}
Let episode lengths satisfy $t_k = \tau_1 (k-1)^3 + 1$ where $\tau_1$ is the length of the first episode, i.e.,
\begin{align*}
    \tau_k = \tau_1(3k^2-3k+1) \quad \textrm{and} \quad \beta_k = \dfrac{3k^2 - 3k + 1}{k^3}.
\end{align*}
Under Asm.~\ref{asm:counter},~\ref{asm:ergodic},~\ref{asm:reversible},~\ref{asm:known.mdp}, \fwmdp satisfies with high probability\footnote{See App.~\ref{core_proof} for a more explicit bound. See App.~\ref{app:relaxing_asm} for a discussion on the relaxation of Asm.~\ref{asm:counter},~\ref{asm:reversible} and \ref{asm:known.mdp}.}%The notation $u_n = \wt O(v_n)$ means that there exist $c > 0$ and $d > 0$ such that $u_n \leq c (\log n)^d v_n$ for sufficiently large $n$. The constants ``hiding'' behind the notation $\wt O$ are provided explicitly in the proof of Thm.~\ref{thm:regret_FWAME} and summed up in App.~\ref{ssec_constants_fwame}.}
\begin{align*}
    \mathcal{L}(\wt\lambda_K) - \mathcal{L}(\lambda^{\star}) = \wt O\big(t_K^{-1/3}\big).
\end{align*}
\end{theorem}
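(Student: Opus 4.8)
The plan is to read \fwmdp as a \emph{perturbed} Frank--Wolfe (\fw) run on the convex, $C_{\wu\eta}$-smooth problem~\eqref{eq:opt.function.asymptotic.lambda} over $\Lambda_{\wu\eta}$ (Prop.~\ref{prop:asymptotic.relaxation}), and to track how the two perturbations propagate through the classical \fw contraction: (i) the gradient fed to the linear oracle is an \emph{optimistic} estimate built from $\wh\sigma^2$ and the bonus $\alpha$, and (ii) the iterate is updated with the \emph{empirical within-episode frequency} $\wt\psi_{k+1}$ rather than with the intended stationary target $\wh\psi^+_{k+1}=\lambda_{\wh\pi^+_{k+1}}$. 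All of the argument will live on a single high-probability event $\mathcal{E}$, obtained by a union bound over the $O(SK)$ relevant terms with $\delta=SA^S/n^2$ as in Lem.~\ref{lem:asmyptoric.performance}, on which for every episode $k\le K$ and state $s$: the variance bound of Prop.~\ref{lem:bound_variance_estimates} holds; the within-episode state frequencies concentrate as in Prop.~\ref{prop.concentration.mixing} up to a burn-in correction; consequently $T_{t_k-1}(s)=\Omega(\eta_{\min}t_k)$, and each iterate $\wt\lambda_k$ has all state-marginals at least $2\wu\eta$, so it lies in the region where $\mathcal{L}$ is smooth (here I use Asm.~\ref{asm:ergodic}, the initialization $\wt\lambda_1=1/(SA)$, $\wu\eta<1/(2S)$, and I also take $\wu\eta\le\eta_{\min}/2$ so that $\lambda^\star\in\Lambda_{\wu\eta}$ is feasible for the oracle).

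\textbf{Step 1: the \fw recursion.} Applying the smoothness (descent) inequality along $[\wt\lambda_k,\wt\lambda_{k+1}]$ with direction $\wt\psi_{k+1}$ and step $\beta_k$, and using that $\wt\lambda_{k+1}=\wt\lambda_k+\beta_k(\wt\psi_{k+1}-\wt\lambda_k)$,
\begin{align*}
\mathcal{L}(\wt\lambda_{k+1})\le\mathcal{L}(\wt\lambda_k)+\beta_k\big\langle\nabla\mathcal{L}(\wt\lambda_k),\,\wt\psi_{k+1}-\wt\lambda_k\big\rangle+\tfrac{C_{\wu\eta}}{2}\beta_k^2\big\|\wt\psi_{k+1}-\wt\lambda_k\big\|^2 ,
\end{align*}
and the quadratic term is $O(C_{\wu\eta}\beta_k^2)$ since all points are in the simplex. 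Since $\nabla\mathcal{L}(\lambda)(s,a)$ does not depend on $a$, every inner product only sees state-marginals; I split $\langle\nabla\mathcal{L}(\wt\lambda_k),\wt\psi_{k+1}-\wt\lambda_k\rangle=\langle\nabla\mathcal{L}(\wt\lambda_k),\wh\psi^+_{k+1}-\wt\lambda_k\rangle+\langle\nabla\mathcal{L}(\wt\lambda_k),\wt\psi_{k+1}-\wh\psi^+_{k+1}\rangle$. For the tracking term, $\|\nabla\mathcal{L}(\wt\lambda_k)\|_\infty\le\max_s\sigma^2(s)/(S(2\wu\eta)^2)$ on $\mathcal{E}$, and the state-marginals of $\wt\psi_{k+1}$ and $\wh\psi^+_{k+1}=\lambda_{\wh\pi^+_{k+1}}$ differ, summed over $s$, by at most $\wt O(1/\sqrt{\gamma_{\min}\tau_k})$ on $\mathcal{E}$ (Prop.~\ref{prop.concentration.mixing}, Asm.~\ref{asm:reversible}). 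For the first term, $\wh\psi^+_{k+1}$ minimizes $\langle\nabla\wh{\mathcal{L}}^+_{t_k-1}(\wt\lambda_k),\cdot\rangle$ over $\Lambda_{\wu\eta}\ni\lambda^\star$, and $\nabla\wh{\mathcal{L}}^+_{t_k-1}(\wt\lambda_k)$ equals $S\,\nabla\mathcal{L}(\wt\lambda_k)$ minus a coordinate-wise nonnegative error of size at most $2\alpha(t_k-1,s,\delta)/(2\wu\eta)^2$ (by optimism, $0\le\wh\sigma^2+\alpha-\sigma^2\le2\alpha$ on $\mathcal{E}$), so oracle optimality plus convexity give $\langle\nabla\mathcal{L}(\wt\lambda_k),\wh\psi^+_{k+1}-\wt\lambda_k\rangle\le\mathcal{L}(\lambda^\star)-\mathcal{L}(\wt\lambda_k)+\wt O(1/\sqrt{\eta_{\min}t_k})$, since $T_{t_k-1}(s)=\Omega(\eta_{\min}t_k)$.

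\textbf{Step 2: balancing the schedule and solving the recursion.} With $h_k:=\mathcal{L}(\wt\lambda_k)-\mathcal{L}(\lambda^\star)$, $\beta_k=(3k^2-3k+1)/k^3=\Theta(1/k)$, $\tau_k=\Theta(\tau_1k^2)$ and $t_k=\Theta(\tau_1k^3)$, the tracking contribution is $\beta_k\cdot\wt O(1/(\sqrt{\gamma_{\min}\tau_1}\,k))=\wt O(1/k^2)$ and the gradient contribution is $\beta_k\cdot\wt O(k^{-3/2})=\wt O(k^{-5/2})$, so
\begin{align*}
h_{k+1}\le(1-\beta_k)h_k+\wt O(1/k^2).
\end{align*}
This is the classical \fw recursion with an $O(1/k^2)$ additive error; since $\beta_k\ge2/k$ for $k\ge3$, a direct induction (proving $h_k\le M\ln(k{+}1)/k$ with $M$ depending on $h_1\le\sum_s\sigma^2(s)$, $C_{\wu\eta}$, $\gamma_{\min}$, $\wu\eta$, $\eta_{\min}$ and the logarithmic factors) gives $h_K=\wt O(1/K)$. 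Finally $t_K=\tau_1(K-1)^3+1$ yields $K-1=((t_K-1)/\tau_1)^{1/3}$, hence $h_K=\wt O(1/K)=\wt O(t_K^{-1/3})$. This also explains the cubic schedule: \fw forces $\beta_k\asymp1/k$, the induction needs the additive error $O(1/k^2)$, i.e.\ the per-episode tracking error $\wt O(1/\sqrt{\tau_k})$ must be $\wt O(1/k)$, i.e.\ $\tau_k\gtrsim k^2$ and $t_k\gtrsim k^3$; the gradient error, needing only $t_k\gtrsim k$, is not binding.

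\textbf{Main obstacle.} The delicate point, absent in the \mab analysis, is the tracking term in Step~1: Prop.~\ref{prop.concentration.mixing} concentrates visit frequencies around $\eta_\pi$ for a chain started at $\overline{s}$, whereas episode $k+1$ starts at the random state $s_{t_k}$ reached at the end of episode $k$. One must therefore invoke a version of that concentration uniform over the initial distribution; the price is an additive $O(1/(\gamma_{\min}\tau_k))=O(1/k^2)$ burn-in term, which is of lower order than the $\wt O(1/k)$ fluctuation and does not change the rate. A secondary item to dispatch is that the iterates $\wt\lambda_k$, being convex combinations of empirical frequencies, remain (on $\mathcal{E}$) in the region where $\mathcal{L}$ is smooth and where $\lambda^\star$ is oracle-feasible; this follows from $\eta_\pi(s)\ge\eta_{\min}$ for every executed policy (Asm.~\ref{asm:ergodic}) together with Prop.~\ref{prop.concentration.mixing}, provided $\tau_1$ is taken large enough and $\wu\eta\le\eta_{\min}/2$, with the first (constant number of) episodes contributing only an $O(1)$ term absorbed into $M$.
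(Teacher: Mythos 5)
Your proposal is correct and reproduces the skeleton of the paper's argument: the same perturbed Frank--Wolfe recursion $\rho_{k+1}\le(1-\beta_k)\rho_k+C_{\wu\eta}\beta_k^2+\beta_k\epsilon_{k+1}+\beta_k\Delta_{k+1}$ with an \emph{optimization/optimism} error and a \emph{tracking} error, the same use of Prop.~\ref{prop.concentration.mixing} (made uniform over states and over the policy simplex, and over the random starting state of each episode) to bound the tracking term by $\wt O(1/\sqrt{\gamma_{\min}\tau_k})$, the same cubic schedule forcing $\beta_k=\Theta(1/k)$ and $\tau_k=\Theta(k^2)$ so that the per-episode additive error is $\wt O(1/k^2)$, and an equivalent resolution of the recursion ($h_k\le M\ln(k+1)/k$ by induction versus the paper's telescoping of $(n+1)u_{n+1}-nu_n$), both yielding $\wt O(1/K)=\wt O(t_K^{-1/3})$.

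The one genuine divergence is in how the gradient-estimation error $\epsilon_{k+1}$ is controlled. You invoke the high-probability lower bound $T_{t_k-1}(s)=\Omega(\wu\eta\, t_k)$ (which is exactly the paper's inequality~\eqref{high_proba_lambda_tilde}, i.e.\ $\sum_a\wt\lambda_k(s,a)\ge\wu\eta$ for $k\ge k_\delta$) inside the confidence width $\alpha(t_k-1,s,\delta)\propto T_{t_k-1}(s)^{-1/2}$, obtaining $\epsilon_{k+1}=\wt O(t_k^{-1/2})=\wt O(k^{-3/2})$, whose telescoped contribution $\sum_k(k+1)\beta_k\epsilon_{k+1}=\wt O(\sum_k k^{-3/2})$ is summable and hence non-binding. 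The paper instead keeps the data-dependent quantity $v_k=\sum_{s,a}\wt\psi_{k+1}(s,a)/\sqrt{T_k(s)}$ and proves $\sum_k v_k=\wt O(1)$ via a separate Cauchy--Schwarz and pigeonhole argument (Lem.~\ref{lem:seq_v}). Both are valid for the claimed rate; your shortcut eliminates the need for Lem.~\ref{lem:seq_v} at the cost of an extra $\wu\eta^{-1/2}$ in the constant, while the paper's route is more robust in that it only uses $\sum_s T_K(s)=t_K-1$ rather than a uniform lower bound on each counter. Two points you assert inside the event $\mathcal{E}$ deserve the explicit treatment the paper gives them: the passage from the per-episode, per-policy concentration to the aggregate bound $T_{t_k-1}(s)\ge\wu\eta(t_k-1)$ requires summing over episodes and discarding the finitely many episodes shorter than the burn-in length $\tau_\delta$ (the paper's Lem.~\ref{technical_lemma_2} and the definition of $k_\delta$ in~\eqref{k_delta}); and since the executed policies are data-dependent, the union bound must indeed be taken over a cover of all stationary policies in $\Lambda_{\wu\eta}$ (the $A^S$ factor), which you reference but fold somewhat loosely into ``$O(SK)$ relevant terms.'' Neither is a gap in the argument, only in its write-up.
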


\textbf{Sketch of the proof.} The proof combines the \fw analysis, the contribution of the estimated optimistic gradient, and the gap between the target distribution $\wh{\psi}^{+}_{k+1}$ and the empirical frequency $\wt{\psi}_{k+1}$. Let $\rho_{k+1} := \mathcal{L}(\wt{\lambda}_{k+1}) - \mathcal{L}(\lambda^{\star})$ be the regret at the end of episode $k$. 
%We introduce $\psi^{\star}_{k+1} := \textrm{argmin}_{\lambda \in \Lambda_{\underline{\eta}}} \langle \nabla \L(\wt{\lambda}_k), \lambda \rangle$ the state-action stationary distribution that ``exact'' \fw would return at episode $k$.
%Recalling that $\beta_k = \tau_{k}/(t_{k+1}-1)$ we get
%\begin{align*}
%\rho_{k+1} = \L\big((1-\beta_k) \wt{\lambda}_k + \beta_k \wt{\psi}_{k+1}\big) - \L(\lambda^{\star}).
%\end{align*}
%
%Exploiting the definition of $\psi^{\star}_{k+1}$ 
Introducing $\psi^{\star}_{k+1} := \textrm{argmin}_{\lambda \in \Lambda_{\underline{\eta}}} \langle \nabla \mathcal{L}(\wt{\lambda}_k), \lambda \rangle$ and exploiting the convexity and $C_{\underline{\eta}}$-smoothness of $\mathcal{L}$, it is possible to obtain the ``recursive'' inequality
\begin{align*}
\rho_{k+1} &\leq (1-\beta_k) \rho_k + C_{\underline{\eta}} \beta_k^2 + \beta_k \epsilon_{k+1} + \beta_k \Delta_{k+1},
\end{align*}
where $\epsilon_{k+1} := \langle \nabla \mathcal{L}(\wt{\lambda}_k), \wh{\psi}^{+}_{k+1} - \psi^{\star}_{k+1} \rangle$ and $\Delta_{k+1} := \langle \nabla \mathcal{L}(\wt{\lambda}_k), \wt{\psi}_{k+1} - \wh{\psi}^{+}_{k+1} \rangle$. The term $\epsilon_{k+1}$ is an \textit{optimization error} and it can be effectively bounded exploiting the fact that $\wh{\psi}^{+}_{k+1}$ is the result of an optimistic optimization. On the other hand, the term $\Delta_{k+1}$ is a \textit{tracking error} and it can be only bounded using Prop.~\ref{prop.concentration.mixing} as $1/\sqrt{\tau_k}$. Solving the recursion for the specific choice of $t_k$ in the theorem provides the final bound.

%which then requires episodes to be as long as possible without compromising the overall learning rate. 

\textbf{Remark (rate).} The most striking difference between this bound and the result of~\citet{carpentier2011upper} and~\citet{antos2010active} in \mab is the worse rate of convergence, $O(t^{-1/3})$ vs $O(t^{-1/2})$. This gap is the result of trading off  the ``optimization'' convergence speed of \fw and the tracking performance obtained by executing $\wh{\pi}^{+}_{k+1}$. 
%In fact, if we interpret \fwmdp as an optimization algorithm, then 
\citet{berthet2017fast} show that in \mab, the learning rate $\beta_k$ is set to $1/t$ (as in standard \fw) to achieve a $O(t^{-1/2})$ convergence rate. In our case, we can obtain such learning rate by setting episodes of constant length $\tau_k = \tau$. Unfortunately, this scheme would suffer a constant regret. In fact, while a \fw instance where the solution $\wt\lambda_k$ is updated directly using $\wh{\psi}^{+}_{k+1}$ would indeed converge faster with such episode scheme, our algorithm cannot ``play'' the distribution $\wh{\psi}^{+}_{k+1}$ but needs to execute the corresponding policy $\wh{\pi}^{+}_{k+1}$, which gathers samples with frequency $\wt{\psi}_{k+1}$, then used to update $\wt\lambda_k$. The gap between $\wh{\psi}^{+}_{k+1}$ and $\wt{\psi}_{k+1}$ reduces the efficiency of the optimization step by introducing an additive error of order $O(1/\sqrt{\tau})$ (see Prop.~\ref{prop.concentration.mixing}), which is constant for fixed-sized episodes. As a result, the episode length is optimized to trade off between the optimization speed and tracking effectiveness. Whether this is an intrinsic issue of the active exploration in MDP or better algorithms can be devised is an open question.

\textbf{Remark (problem-dependent constants).} Investigating the proof reveals a number of other dependencies on the algorithm's and problem's parameters. First, the regret bound depends on the inverse of the parameter $\wu \eta$ used in \fwmdp to guarantee the smoothness of the function. While this may suggest to take $\wu \eta$ as large as possible, this may over-constrain the optimization problem (i.e., the set $\Lambda_{\wu\eta}$ becomes artificially too small). If $\lambda^\star$ is the solution on the ``unconstrained'' $\Lambda$, then $2\wu\eta$ should be set exactly at $\min_s \sum_a \lambda^\star(s,a)$. Furthermore, the gap between $\wh{\psi}^{+}_{k+1}$ and $\wt{\psi}_{k+1}$ is bounded using Prop.~\ref{prop.concentration.mixing}. Since the policy executed at each step is random (it depends on the samples observed at previous episodes), we need to take the worst case w.r.t. all possible stationary policies. Thus the regret presents an inverse dependency on $\gamma_{\min}$, which could be very small. Finally, the bound has a direct dependency on the number of states.

%	Our algorithm draws inspiration from \cite{berthet2017fast} where the problem of bandit optimization is tackled through an Upper-Confidence Frank-Wolfe algorithm. 
%	The design here is however more tricky since in our framework we can only select policies and not directly states as it is the case in the bandit setting. This might explain why we obtain a final convergence bound in the order of $\widetilde{\mathcal{O}}(1/t^{1/3})$ instead of $\widetilde{\mathcal{O}}(1/\sqrt{t})$ which is the optimal rate in the bandit case (c.f., part 5 of \cite{berthet2017fast}). A natural direction for future work would be to wonder if our obtained bound is optimal or not, by searching for a lower bound. Careful attention was required in the choice of the length of the episodes since a major trade-off between the learning and optimization processes appears: the episodes must be not too long (otherwise there is not enough optimality and the target $\lambda^{\star}$ is reached too slowly), nor should they be too  short (otherwise there is not enough time to collect the desired samples and the gradient estimations errors are too important). From a computational point of view, the algorithm is relatively fast since the Frank-Wolfe algorithm has the advantage of yielding projection-free iterations (in contrast to Projected Gradient Descent for
%	 instance). 

% !TEX root = active_mdp.tex

\vspace{-0.1in}
\section{The Mixing Issue}\label{sec:relaxation}
\vspace{-0.1in}

\iffalse
\begin{itemize}
	\item Intuition: regularizing!
	\item Regularized problem and analysis of performance (better than no regularized?)
	\item Non-convex, we need something better.
	\item A convex relaxation: the surrogate problem.
	\item Scalability issues (push it to appendix).
\end{itemize}
\fi

\begin{figure}[t]
	\centering
	\begin{tikzpicture}[thick,scale=0.9]
	\begin{scope}[every node/.style={circle,thick,draw}]
	\node (1) at (0,0) {$\sigma_1^2 = 1$};
	\node (2) at (2,0) {$\sigma_2^2 \ll 1$};
	\node (3) at (4,0) {$\sigma_3^2 = 1$};
	\end{scope}
	\begin{scope}[>={Stealth[black]},
	every node/.style={fill=white,circle},
	every edge/.style={draw=gray, thick},
	every loop/.style={draw=gray, thick, min distance=5mm,looseness=5}]
	\path[]
	(1) [->,thick] edge[in=200,out=150, loop left,thick] node[left] {$a_1$} (1)
	(3) [->,thick] edge[in=200,out=150, loop right,thick] node[right] {$a_2$} (3)
	(1) [->,thick] edge[bend right=30] node[below] {$a_2$} (2)
	(1) [<-,thick] edge[bend left=30] node[above] {$a_1$} (2)
	(2) [->,thick] edge[bend right=30] node[below] {$a_2$} (3)
	(2) [<-,thick] edge[bend left=30] node[above] {$a_1$} (3);  
	\end{scope}
	\end{tikzpicture}
	%\vspace{-0.3in}
	\caption{Deterministic 3-state 2-action MDP with $\sigma^2_1 \!=\! \sigma^2_3 \!=\! 1$ and $\sigma^2_2 \ll 1$.}
	\label{fig:toy_MDP}
	%\vspace{-0.2in}
\end{figure}
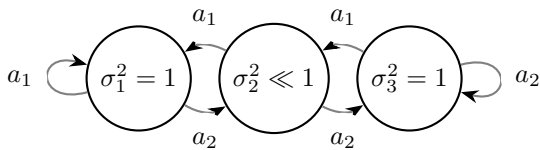

%\subsection{A poorly mixing asymptotic solution}\label{ssec:mixing_example}

When the budget $n$ is small, the solution of~\eqref{eq:opt.function.asymptotic.lambda} may be very inefficient compared to the optimal finite-time policy. 
As an illustrative example, consider the MDP in Fig.~\ref{fig:toy_MDP}. In the ``unconstrained'' version of the problem, where states can be directly sampled (i.e., the bandit setting), the optimal continuous allocation for  problem~\eqref{eq:opt.function.simplified} tends to $(0.5,0,0.5)$ as $\sigma^2(s_2)$ tends to 0. As soon as we introduce the constraint of the MDP structure, such allocation may not be realizable by any policy. In this MDP, solving problem~\eqref{eq:opt.function.asymptotic.lambda} returns a policy that executes the self-loop actions in $s_1$ and $s_3$ with high probability (thus moving to $s_2$ with low probability) and takes a uniformly random action in $s_2$. The resulting asymptotic performance does indeed approach the optimal unconstrained allocation, as the stationary distribution of the policy $(\eta(s_1), \eta(s_2),\eta(s_3))$ tends to $(0.5,0,0.5)$ for any arbitrary initial state $\overline{s}$. However for any finite budget $n$, this policy performs very poorly since the agent would get stuck in $s_1$ (or $s_3$ depending on the initial state) almost indefinitely, thus making the mean estimation of $s_3$ (or $s_1$) arbitrarily bad. As a result, the optimal asymptotic policy mixes arbitrarily slowly as $\sigma^2(s_2)$ tends to zero and its finite-time performance is then arbitrarily far from the optimal performance. 

This effect is also illustrated by Lem.~\ref{lem:asmyptoric.performance}, where the performance loss of the asymptotic policy depends on $\ell_n(\pi_{\lambda^\star})$, which critically scales with the inverse of the spectral gap $\gamma_{\pi_{\lambda^{\star}}}$. This issue may also significantly affect the performance of \fwmdp, as the gap between $\wh{\psi}^{+}_{k+1}$ and $\wt{\psi}_{k+1}$ may be arbitrarily large if $\wh{\pi}^{+}_{k+1}$ is slowly mixing. This problem together with Lem.~\ref{lem:asmyptoric.performance} suggest regularizing the optimization problems (i.e., problem~\eqref{eq:opt.function.asymptotic} for optimization and the computation of $\wh{\psi}^{+}_{k+1}$ for learning) towards fast mixing policies. %In this section, we discuss this regularization approach and propose a heuristic method to mitigate the mixing problem.

%\subsection{A Convex Surrogate}\label{ssec:regularized}

\textbf{Optimization.} As a direct application of Lem.~\ref{lem:asmyptoric.performance} we could replace problem~\eqref{eq:opt.function.asymptotic} with
\begin{equation}\label{eq:opt.function.regularized}
\begin{aligned}
&\min_{\substack{\pi \in\Pi^{\textrm{SR}}\\ \eta \in \Delta(\calS) }} \mathcal{L}^{\text{reg}}(\pi,\eta) := \mathcal{L}(\pi,\eta) + \ell_n(\pi) \\
&\text{s.t.} \; \forall s \in \calS,\; \eta(s) = \sum_{s',a} \pi(a|s')p(s|s',a)\eta(s') 
\end{aligned}.
\end{equation}
The main advantage of solving this problem is illustrated in the following lemma.

\begin{lemma}\label{lem:reg.performance}
	Let $\pi^\star_\reg$ be the solution of problem~\eqref{eq:opt.function.regularized}, its performance loss is bounded as
	\begin{align}\label{eq:performance.loss.reg}
	\mathcal{L}_n(\pi^\star_\reg) - \mathcal{L}_n(\pi^\star_n) &\leq 2\ell_n(\pi^\star_n).
	\end{align}
\end{lemma}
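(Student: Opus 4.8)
The plan is to run the standard oracle-inequality argument for penalized objectives, using Lem.~\ref{lem:asmyptoric.performance} as the two-sided bridge between the finite-budget loss $\mathcal{L}_n$ and the asymptotic loss $\mathcal{L}$. First I would observe that along the feasibility constraint of~\eqref{eq:opt.function.regularized} we have $\eta = \eta_\pi$, so the regularized objective is simply $\mathcal{L}^{\reg}(\pi,\eta_\pi) = \mathcal{L}(\pi,\eta_\pi) + \ell_n(\pi)$, and that the (stationary) optimum $\pi^\star_n$ of~\eqref{eq:opt.function.simplified} together with its stationary distribution $\eta_{\pi^\star_n}$ is a feasible point of~\eqref{eq:opt.function.regularized}.

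Next I would apply Lem.~\ref{lem:asmyptoric.performance} to the policy $\pi^\star_\reg$: from $\mathcal{L}_n(\pi^\star_\reg) \leq \mathcal{L}(\pi^\star_\reg,\eta_{\pi^\star_\reg}) + \ell_n(\pi^\star_\reg)$, the right-hand side is exactly $\mathcal{L}^{\reg}(\pi^\star_\reg,\eta_{\pi^\star_\reg})$. By optimality of $\pi^\star_\reg$ in~\eqref{eq:opt.function.regularized}, this is at most $\mathcal{L}^{\reg}(\pi^\star_n,\eta_{\pi^\star_n}) = \mathcal{L}(\pi^\star_n,\eta_{\pi^\star_n}) + \ell_n(\pi^\star_n)$. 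A second application of Lem.~\ref{lem:asmyptoric.performance}, now to $\pi^\star_n$, gives $\mathcal{L}(\pi^\star_n,\eta_{\pi^\star_n}) \leq \mathcal{L}_n(\pi^\star_n) + \ell_n(\pi^\star_n)$. Chaining these three inequalities yields $\mathcal{L}_n(\pi^\star_\reg) \leq \mathcal{L}_n(\pi^\star_n) + 2\ell_n(\pi^\star_n)$, which is the claim after rearranging.

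The only point that needs care is the regime of validity: Lem.~\ref{lem:asmyptoric.performance} requires $n$ large enough that $\epsilon_\pi(s,n,\delta) \leq \eta_\pi(s)/2$ for \emph{every} stationary policy, which is precisely its standing hypothesis, so it legitimately covers both $\pi^\star_\reg$ and $\pi^\star_n$ at once. There is essentially no other obstacle: the argument is the textbook trade-off inequality for a regularized objective, and the factor $2$ arises from invoking the one-sided form of Lem.~\ref{lem:asmyptoric.performance} once in each direction (bounding $\mathcal{L}_n$ by $\mathcal{L}+\ell_n$ at $\pi^\star_\reg$, and $\mathcal{L}$ by $\mathcal{L}_n+\ell_n$ at $\pi^\star_n$); no convexity or smoothness of $\ell_n$ is needed.
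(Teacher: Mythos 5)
Your proposal is correct and matches the paper's proof essentially line for line: both chain $\mathcal{L}_n(\pi^\star_\reg) \leq \mathcal{L}(\pi^\star_\reg,\eta_{\pi^\star_\reg}) + \ell_n(\pi^\star_\reg) \leq \mathcal{L}(\pi^\star_n,\eta_{\pi^\star_n}) + \ell_n(\pi^\star_n) \leq \mathcal{L}_n(\pi^\star_n) + 2\ell_n(\pi^\star_n)$, using the two-sided bound of Lem.~\ref{lem:asmyptoric.performance} at the ends and the optimality of $\pi^\star_\reg$ for the regularized problem in the middle. Your added remarks on the feasibility of $(\pi^\star_n,\eta_{\pi^\star_n})$ and the regime of validity in $n$ are sensible clarifications the paper leaves implicit.
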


Since in general we expect $\pi^\star_n$ to mix much faster than $\pi_{\lambda^\star}$ (i.e., $\gamma_{\pi^\star_n} \gg \gamma_{\pi_{\lambda^\star}}$), the performance loss of $\pi^\star_\reg$ may be much smaller than the loss in Lem.~\ref{lem:asmyptoric.performance}. As problem~\eqref{eq:opt.function.regularized} is not convex, we replace it by heuristic convex algorithm. We isolate from $\ell_n(\pi)$ the spectral gap $\gamma_\pi$ and the convergence rate $\rho_n:=S/n$ and, using the norm formulation of the SLEM in \eqref{SLEM_as_norm}, we introduce a proxy to the regularized loss as
\begin{align}\label{eq:loss.proxy}
%\wt\mathcal{L}^{\text{reg}}(\pi,\eta)& := 
\!\!\!\!\mathcal{L}(\pi,\eta) + \dfrac{\rho_n}{1-\| D_{\eta}^{1/2} P_\pi D_{\eta}^{-1/2} \!-\! \sqrt{\eta} \sqrt{\eta}^\transp \|_2}.
\end{align}
%
%where $D_\eta$ is the diagonal matrix with the elements of $\eta$ and $P_\pi$ is the transition matrix of the Markov chain associated to $\pi$. 
Building on this proxy and the study on computing fastest mixing chains on graphs by~\citet{boyd2004fastest}, we derive \fmh (Faster-Mixing Heuristic) that solves a convex surrogate problem that favors fast mixing policies with limited deviation w.r.t.\ a target stationary distribution. While we postpone the full derivation to App.~\ref{app:ssection:derivation_fmh},
%and the definition of a more efficient variant denoted by \fmhsdp to App.~\ref{app:ssection:SDP_variant}, 
we report the main structure of the algorithm. $\fmh$ receives as input a budget $n$ and the optimal asymptotic policy  $\pi^{\star}$ obtained by solving~\eqref{eq:opt.function.asymptotic}, then it returns a stationary policy $\pi_{\fmh}^{\star}$. The algorithm proceeds through two steps. 

\textit{Step 1 (improvement of the mixing properties).} We first reparametrize the problem by introducing the variable $X \in \Re^{S\times S}$ as $X = D_\eta P_\pi$ and we reduce the difficulty of handling the stationary constraint on $\eta$ by constraining $X$ to respect the adjacency matrix of the MDP $Q$. Notably, we introduce the constraints\footnote{We omit constraints $X \geq 0, \|X\|_1 = 1$ for clarity.}
\begin{align}\label{eq:surrogate.constraints}
X = X^T, \;\; X_{ss'} = 0 ~ \textrm{ if } ~ Q_{ss'} = 0,
\end{align}
which correspond to reversibility and adhering to the ``structure'' of the MDP. Furthermore, since we can recover a state distribution from $X$ as $\eta_X(s) = \sum_{s'} X_{ss'}$, we also need to enforce
\begin{align}\label{eq:surrogate.constraints.bis}
\sum_{s' \in \cal S} \! X_{ss'} \geq \wu{\eta}, \; \sum_{s  \in \cal S}\big(\sum_{s'  \in \cal S} X_{ss'} - \eta_s^{\star}\big)^2 \leq \delta^2_n,
\end{align}
where we lower bound the state distribution and we require $X$ to be close to the target state stationary distribution $\eta^\star$ in $\ell_2$-norm. Since $\mathcal{L}$ is smooth when $\eta$ is lower bounded by $\wu{\eta}$, the $\ell_2$-norm constraint guarantees that the performance of $X$ does not deviate much from $\eta^\star$. \fmh then proceeds by solving 
\begin{equation}\label{eq:opt.surrogate}
\begin{aligned}
& \underset{X}{\min}&&
\!\!\!\!\! \sum_{s \in \mathcal{S}} \dfrac{\sigma^2(s)}{\sum_{s'  \in \mathcal{S}} X_{ss'}}  \\
 &&&\!\!\!\!\!\!\!\!\!\!\!\!+ \dfrac{\rho_n}{1-\| D_{\eta^{\star}}^{-1/2} X D_{\eta^{\star}}^{-1/2} - \sqrt{\eta^{\star}} \sqrt{\eta^{\star}}^T \|_2} \\
& \text{s.t.}& &~\eqref{eq:surrogate.constraints}, \eqref{eq:surrogate.constraints.bis}
\end{aligned}.
\end{equation}
Unlike the proxy loss~\eqref{eq:loss.proxy}, this problem is convex in $X$ and can be solved using standard convex optimization tools.

\textit{Step 2 (projection onto the set of feasible stationary policies).} Unfortunately $\eta_X(s) = \sum_{s'} X_{ss'}$ may not be feasible in the MDP (i.e., it may not be stationary). Thus we finally proceed with the computation of a policy $\pi$ whose stationary distribution is closest to $\eta$ by solving the convex problem
\begin{equation*}
\begin{aligned}
& \underset{\pi}{\min} \quad
\displaystyle \sum_{s \in \cal{S}} \Big(\eta_X(s) -\!\!\!\!\! \displaystyle \sum_{s' \in \calS, a \in \A_{s'}}\!\!\!\!\! \eta_X(s')p(s|s',a)\pi_{s',a}  \Big)^2  \\
& \text{s.t. \quad $\pi_{s,a} \geq 0 \quad \textrm{and} \quad \displaystyle \sum_{a \in \mathcal{A}_s}\pi_{s,a} = 1$}. \nonumber
\end{aligned}
\end{equation*}
\fmh thus returns a policy that may have better mixing properties than $\pi^{\star}$ at the cost of a slight loss in asymptotic performance. The performance loss of \fmh approaches the one of $\pi^\star_\reg$ as shown in the following lemma.
	
\begin{lemma}\label{lem:performance_loss_fmh}
Let $\pi^{\star}_{\fmh}$ be the policy returned by \fmh, then the performance loss is bounded as
\begin{align*}
&\mathcal{L}_n(\pi^{\star}_{\fmh}) - \mathcal{L}_n(\pi_n^{\star})\\
& \leq 2 \ell_n(\pi_n^{\star}) + \dfrac{2\sigma_{\max}^2 \sqrt{S}}{\wu{\eta}^2} \delta_n + \dfrac{2}{\gamma_{\min}}\rho_n + O(n^{-3/2}).
\end{align*}
\end{lemma}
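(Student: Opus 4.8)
I would mirror the proof of Lemma~\ref{lem:reg.performance}, paying separately for the three relaxations that turn the exact regularized problem~\eqref{eq:opt.function.regularized} into the convex program~\eqref{eq:opt.surrogate} solved by \fmh: (i)~the penalty $\ell_n(\pi)$ is replaced by the proxy~\eqref{eq:loss.proxy}; (ii)~the non-convex stationarity constraint is relaxed into~\eqref{eq:surrogate.constraints}--\eqref{eq:surrogate.constraints.bis}; (iii)~the surrogate optimum $X^\star$ is projected onto a feasible policy (Step~2). Write $\eta_{\fmh}:=\eta_{\pi^{\star}_{\fmh}}$, $\eta^\star:=\eta_{\pi^\star}$, and let $g(X)$ denote the objective of~\eqref{eq:opt.surrogate}. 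By Lemma~\ref{lem:asmyptoric.performance}, $\mathcal{L}_n(\pi^{\star}_{\fmh})\le\mathcal{L}(\pi^{\star}_{\fmh},\eta_{\fmh})+\ell_n(\pi^{\star}_{\fmh})$ and $\mathcal{L}(\pi^\star_n,\eta_{\pi^\star_n})\le\mathcal{L}_n(\pi^\star_n)+\ell_n(\pi^\star_n)$, so it suffices to bound the difference of the two regularized losses; the optimality step $\mathcal{L}^{\reg}(\pi^\star_\reg)\le\mathcal{L}^{\reg}(\pi^\star_n)$ from the proof of Lemma~\ref{lem:reg.performance} then yields the leading $2\ell_n(\pi^\star_n)$.

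\textbf{From $\ell_n$ to the proxy.} The constraints~\eqref{eq:surrogate.constraints.bis} keep every state probability at least $\wu\eta$, and Step~2 perturbs it by only $O(\delta_n)$; since $\epsilon_\pi(s,n,\delta)=O(n^{-1/2})$ and $\gamma_\pi\ge\gamma_{\min}$ (Asm.~\ref{asm:reversible}), the definition of $\ell_n$ gives, up to problem-dependent constants, $\ell_n(\pi)\le\rho_n/\gamma_\pi+O(n^{-3/2})\le\rho_n/\gamma_{\min}+O(n^{-3/2})$ for every policy appearing in the chain. Hence, modulo $O(n^{-3/2})$, it is enough to control the proxy loss $\mathcal{L}(\pi,\eta)+\rho_n/(1-\|D_\eta^{1/2}P_\pi D_\eta^{-1/2}-\sqrt\eta\sqrt\eta^\transp\|_2)$, which is exactly what \fmh reduces; in particular $\mathcal{L}_n(\pi^{\star}_{\fmh})$ is within $\rho_n/\gamma_{\min}+O(n^{-3/2})$ of the proxy loss of $\pi^{\star}_{\fmh}$.

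\textbf{Surrogate optimality, Lipschitzness, and Step 2.} Under Asm.~\ref{asm:reversible} the matrix $X_{\pi^\star_\reg}:=D_{\eta_{\pi^\star_\reg}}P_{\pi^\star_\reg}$ is symmetric, respects $Q$, has row sums at least $\wu\eta$, and (since $\pi^\star_\reg\to\pi^\star$ as the $O(1/n)$ regularizer vanishes, so $\|\eta_{\pi^\star_\reg}-\eta^\star\|_2\le\delta_n$ once $n$ is large enough for the schedule $\delta_n$) is feasible for~\eqref{eq:opt.surrogate}; thus $g(X^\star)\le g(X_{\pi^\star_\reg})$. Two perturbation arguments then connect $g$ to the proxy losses. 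First, $\eta_{X^\star}$, $\eta_{\pi^\star_\reg}$ and, after Step~2, $\eta_{\fmh}$ all stay within $O(\delta_n)$ of $\eta^\star$ in $\ell_2$-norm; since $\mathcal{L}$ depends on $\pi$ only through $\eta$ and is Lipschitz in $\eta$ with constant $\le\sigma_{\max}^2/\wu\eta^2$ on $\{\eta\ge\wu\eta\}$ (Prop.~\ref{prop:asymptotic.relaxation}) and $\|\cdot\|_1\le\sqrt S\,\|\cdot\|_2$, their $\mathcal{L}$-parts agree up to $\tfrac{\sigma_{\max}^2\sqrt S}{\wu\eta^2}\delta_n$ --- this produces the $\delta_n$ term (in the general MDP Step~2 contributes a further error here, bounded through the optimal stationarity residual of $D_{\eta_{X^\star}}^{-1}X^\star$ and $\gamma_{\fmh}\ge\gamma_{\min}$; it vanishes when transitions are deterministic). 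Second, substituting $D_{\eta^\star}$ for the true diagonal reweighting in~\eqref{SLEM_as_norm} moves the SLEM, hence the regularizer $\rho_n/(1-\|\cdot\|_2)$, by $O(\rho_n\delta_n/\gamma_{\min}^2)=O(n^{-3/2})$, while the proxy regularizers of $X_{\pi^\star_\reg}$ and of $\pi^{\star}_{\fmh}$ are each at most $\rho_n/\gamma_{\min}$, giving the two $\rho_n/\gamma_{\min}$ terms. Chaining $\mathcal{L}_n(\pi^{\star}_{\fmh})\le(\text{proxy loss of }\pi^{\star}_{\fmh})+O(n^{-3/2})\le g(X^\star)+\cdots\le g(X_{\pi^\star_\reg})+\cdots\le\mathcal{L}^{\reg}(\pi^\star_\reg)+\tfrac{\sigma_{\max}^2\sqrt S}{\wu\eta^2}\delta_n+\tfrac{2}{\gamma_{\min}}\rho_n+O(n^{-3/2})$ and invoking the proof of Lemma~\ref{lem:reg.performance} (so that $\mathcal{L}^{\reg}(\pi^\star_\reg)\le\mathcal{L}_n(\pi^\star_n)+2\ell_n(\pi^\star_n)$) delivers the stated bound.

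\textbf{Main obstacle.} The delicate parts are the two perturbation estimates. Bounding the change of the SLEM under $D_{\eta^\star}\!\leftrightarrow\!D_{\eta_{X^\star}}$, and the displacement of $\eta_{\fmh}$ from $\eta_{X^\star}$ caused by Step~2, both require a quantitative operator-norm perturbation bound for reversible kernels; in the projection step one additionally inverts $I-P_{\pi^{\star}_{\fmh}}^{\transp}$ on the orthogonal complement of the constants, whose norm is $O(1/\gamma_{\pi^{\star}_{\fmh}})=O(1/\gamma_{\min})$ thanks to Asm.~\ref{asm:reversible}. It is this uniform spectral-gap lower bound that turns the relaxation and projection errors into the tame $\delta_n$ and $\rho_n/\gamma_{\min}$ contributions, the $O(n^{-3/2})$ remainder collecting the $\epsilon_\pi(s,n,\delta)$ corrections of $\ell_n$ and the $\rho_n\delta_n$ cross terms. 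The point needing the most care is the general (non-deterministic) case of Step~2, where the non-realizability of $D_{\eta_{X^\star}}^{-1}X^\star$ must be shown to be $O(\delta_n)$ or otherwise controlled.
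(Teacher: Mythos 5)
Your target decomposition and the sources you assign to each term are the right ones, but the route through the surrogate optimality $g(X^{\star})\le g(X_{\pi^{\star}_{\reg}})$ contains a step that fails. For that inequality you need $X_{\pi^{\star}_{\reg}}$ to be feasible for \eqref{eq:opt.surrogate}, in particular $\|\eta_{\pi^{\star}_{\reg}}-\eta^{\star}\|_2\le\delta_n$. You justify this only by ``$\pi^{\star}_{\reg}\to\pi^{\star}$ as the regularizer vanishes, once $n$ is large enough for the schedule $\delta_n$'', but the lemma is meant to hold with $\delta_n=O(n^{-1})$; since the regularizer $\ell_n$ is itself $O(n^{-1})$, the most one can extract (even using strong convexity of $\mathcal{L}$ near $\eta^{\star}$) is $\|\eta_{\pi^{\star}_{\reg}}-\eta^{\star}\|_2=O(n^{-1/2})$, which does not satisfy the proximity constraint. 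The paper's proof never compares against the surrogate optimum at all: it sets $H=\mathcal{L}^{\reg}(\pi^{\star}_{\fmh})-\mathcal{L}^{\reg}(\pi^{\star}_{\reg})$, uses the optimality of $\eta^{\star}$ for \eqref{eq:opt.function.asymptotic.lambda} to replace $\eta_{\pi^{\star}_{\reg}}$ by $\eta^{\star}$ in the loss difference (i.e.\ $-\sum_s\sigma^2(s)/\eta_{\pi^{\star}_{\reg}}(s)\le-\sum_s\sigma^2(s)/\eta^{\star}(s)$), and bounds the regularizer difference crudely by $2\rho_n/\gamma_{\min}$, so no closeness of $\eta_{\pi^{\star}_{\reg}}$ to $\eta^{\star}$ is ever needed.

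The second issue is precisely the point you flag as your ``main obstacle'': controlling $\|\eta_{\pi^{\star}_{\fmh}}-\eta^{\star}\|_2$ after the projection of Step~2. No operator-norm perturbation bound and no inversion of $I-P^{\transp}$ is required. Since $\eta^{\star}$ is the stationary distribution of the feasible policy $\pi^{\star}$, it is itself a feasible point of $(\mathcal{P}_2)$, so the optimality of $\pi^{\star}_{\fmh}$ for $(\mathcal{P}_2)$ gives $\|\eta_{\pi^{\star}_{\fmh}}-\eta_1\|_2\le\|\eta^{\star}-\eta_1\|_2\le\delta_n$ (the last step being the $(\mathcal{P}_1)$ constraint on the Step~1 distribution $\eta_1$), hence $\|\eta_{\pi^{\star}_{\fmh}}-\eta^{\star}\|_2\le 2\delta_n$ by the triangle inequality. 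Combined with the Lipschitz bound you already have, this also produces the factor $2$ in the $\delta_n$ term, which your sketch drops. With these two repairs your argument collapses to the paper's, and the SLEM-perturbation estimates you list as delicate become unnecessary.
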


This suggests that the slack variable $\delta_n$ should decrease at least as $O(n^{-1})$ to guarantee the algorithm's consistency and not worsen the overall performance. 

Finally, we introduce in App.~\ref{app:ssection:SDP_variant} a more computationally efficient variant of step 1 of \fmh that uses semidefinite programming, which we later refer to as \fmhsdp.

\textbf{Learning.} 
%\subsection{Incorporating \fmhsdp in \fwmxdp}
As discussed above and shown in the proof of Thm.~\ref{thm:regret_FWAME}, the regret of \fwmdp depends on the mixing properties of the policy $\wh{\pi}^{+}_{k+1}$.
% via the factors $\gamma(\wh{\pi}^{+}_{k+1})^{-1}$. 
While the optimization problem to compute $\wh{\psi}^{+}_{k+1}$ is different than problem~\eqref{eq:opt.function.asymptotic}, the surrogate optimization procedure described above can be readily applied to this case as well. In fact, $\eta^\star$ received in input is now the target state-action stationary distribution $\wh{\psi}^{+}_{k+1}$ and, since the objective function is still smooth, the deviation constraint does limit the performance loss that could be incurred because of the deviation $\delta_n$. App.~\ref{FWAME_FMHSDP} provides more discussion on the resulting learning algorithm that we call \fwmdp w/ \fmhsdp.

\vspace{-0.1in}
\section{Numerical Simulations}\label{sec:experiments}
\vspace{-0.1in}

\begin{figure*}
	\begin{minipage}[c]{0.46\textwidth}
		\centering
			\includegraphics[width=0.8\linewidth]{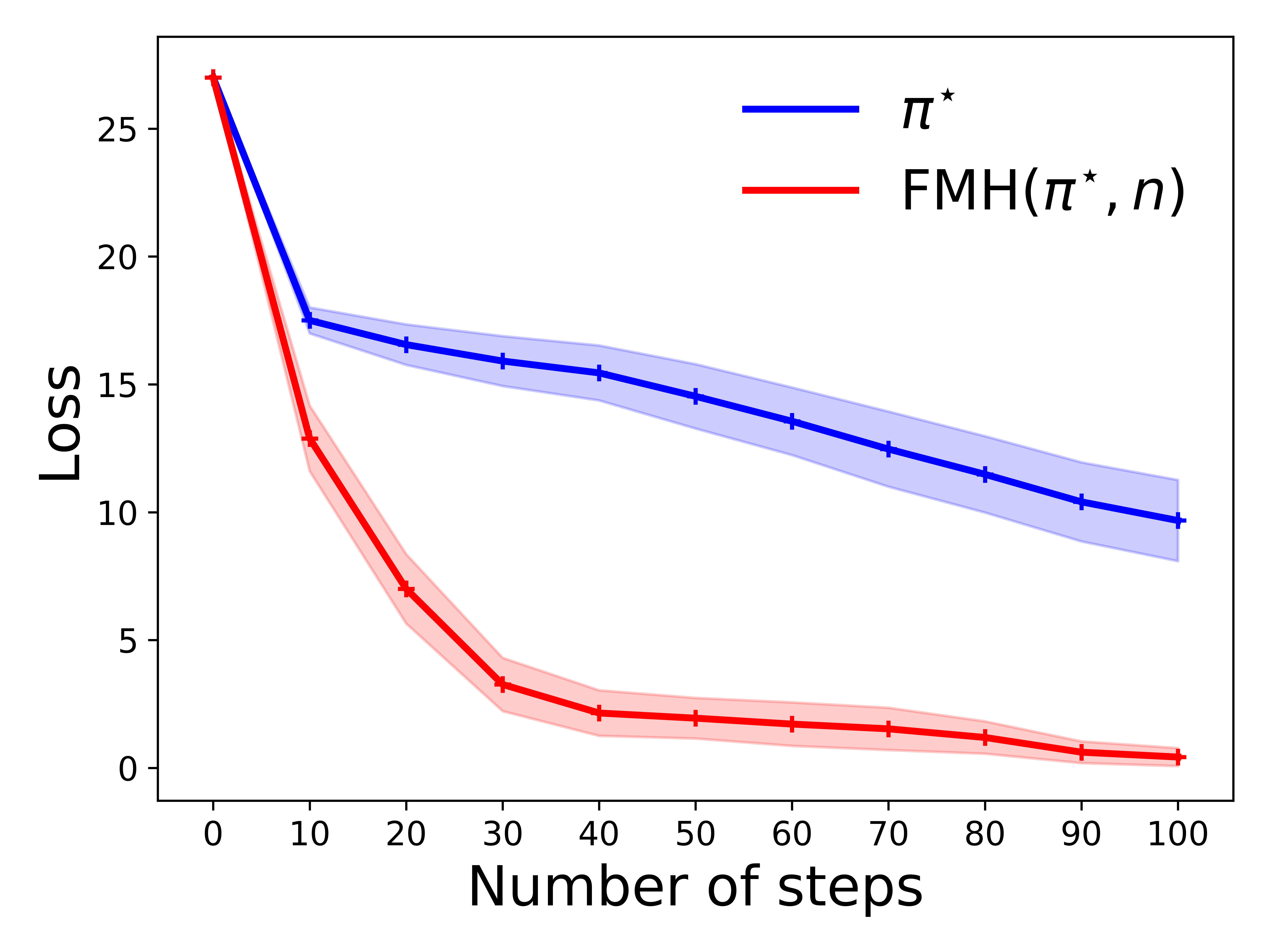}
			\vspace{-0.1in}
		\caption{$\loss(\pi,n,R=100)$ as a function of $n$ in the 3-state MDP of Fig.~\ref{fig:toy_MDP} (with $\sigma^2_2 = 0.001$).}%The shaded curves report the 95\% confidence intervals.}
		\label{plot_mixing_3state}
	\end{minipage}
\hfill
  \begin{minipage}[c]{0.52\textwidth}
		\centering
		\begin{tabular}{|c|c|c||c|c|}
			\hline
			$\pi$ & \multicolumn{2}{c||}{$\pi_{\fwmdp}$} & \multicolumn{2}{c|}{$\pi_{\textrm{unif}}$} \\ \hline    
			$n$ & $500$  &
			$1000$ & $500$ &  $1000$ 
			\\  \hhline{|=|=|=|=|=|}
			${\cal G}_{S=5}$ & 0.31 & 0.10 & 2.18 & 1.04         \\ \hline
			${\cal G}_{S=10}$ & 0.35 & 0.19 & 1.98 & 1.15 \\ \hline
		\end{tabular}
		\caption{$\ratio(\pi,n,R=100)$ for $n \in \{500, 1000\}$ and for $\pi_{\fwmdp}$ and $\pi_{\textrm{unif}}$, averaged over 100 Garnet instances randomly generated from ${\cal G}(S, A=3, b=2)$ for $S \in \{5,10\}$.}
		\label{table_rnl}
	\end{minipage}
	
\end{figure*}	

%\begin{figure}[!t] % "[t!]" placement specifier just for this example
%\begin{subfigure}{0.48\columnwidth}
%\includegraphics[width=\linewidth]{figures/fig_3state_1.png}
%\caption{$\sigma^2_2=0.01$}
%\end{subfigure}\hspace*{\fill}
%\begin{subfigure}{0.48\columnwidth}
%\includegraphics[width=\linewidth]{figures/fig_3state_2.png}
%\caption{$\sigma^2_2=0.001$}
%\end{subfigure}
%\caption{Loss as a function of $n$ in the 3-state MDP of Fig.~\ref{fig:toy_MDP}. The shaded curves report the 95\% confidence intervals.}
%\label{plot_mixing_3state}
%\end{figure}
%
%\begin{table}[!t]
%\centering
%\begin{tabular}{|c|c|c||c|c|}
%\hline
%$\pi$ & \multicolumn{2}{c||}{$\pi_{\fwmdp}$} & \multicolumn{2}{c|}{$\pi_{\textrm{unif}}$} \\ \hline    
%      $n$ & $500$  &
%      $1000$ & $500$ &  $1000$ 
%      \\  \hhline{|=|=|=|=|=|}
%${\cal G}_{S=5}$ & 0.20          & 0.12           & 4.49 & 0.84          \\ \hline
%${\cal G}_{S=10}$ & 0.27           & 0.16           & 0.74 & 0.63          \\ \hline
%\end{tabular}
%\caption{Relative normalized loss $\rnl(\pi,n)$ for $n \in \{500, 1000\}$ and $\pi_{\fwmdp}$ and $\pi_{\textrm{unif}}$, averaged over 10 Garnet instances generated from ${\cal G}(S, A=3, b=2)$ for $S=5,10$.}
%\label{table_rnl}
%\end{table}

\begin{figure*}[!t] % "[t!]" placement specifier just for this example
\begin{subfigure}[t]{0.49\textwidth}
\centering
\includegraphics[trim={0.2cm 0 0.2cm 0},width=0.95\linewidth]{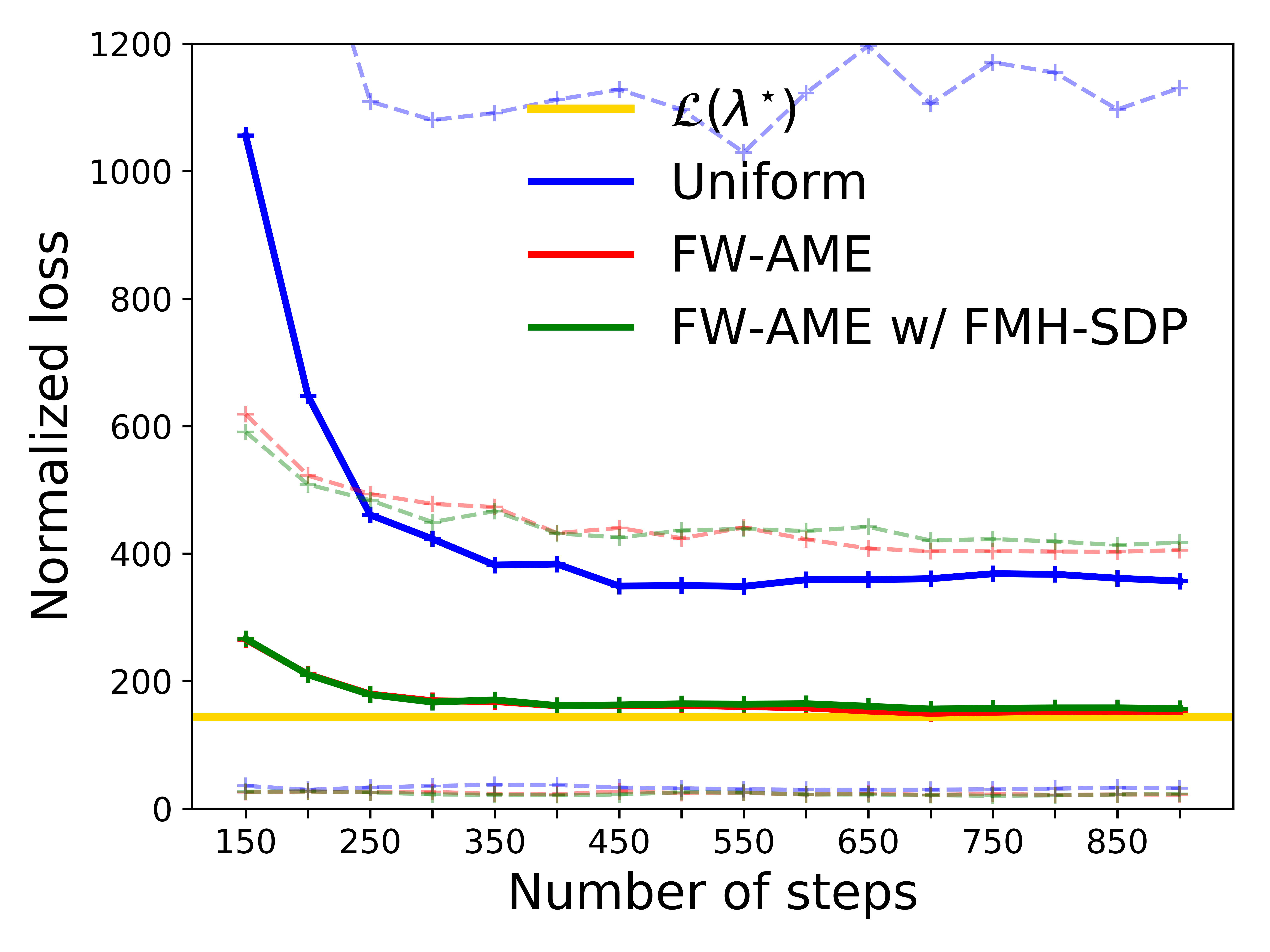}
\caption{An instance of ${\cal G}_{\cal R}(S=5, A=3, b=3)$ with fast mixing policies. The average SLEM is roughly 0.55, w/ or w/o \fmhsdp. %The episodic policies are thus not noticeably changed and the performance of \fwmdp w/ and w/o \fmhsdp is similar.
% The heuristic \fmhsdp verifies condition ($\diamond$) approx.~half of the time and the average episodic SLEM of \fwmdp w/ \fmhsdp is not noticeably changed.
}
\label{plot_learning_mixing_1}
\end{subfigure}\hspace*{\fill}
\begin{subfigure}[t]{0.49\textwidth}
\centering
\includegraphics[trim={0.2cm 0 0.2cm 0},clip,width=0.95\linewidth]{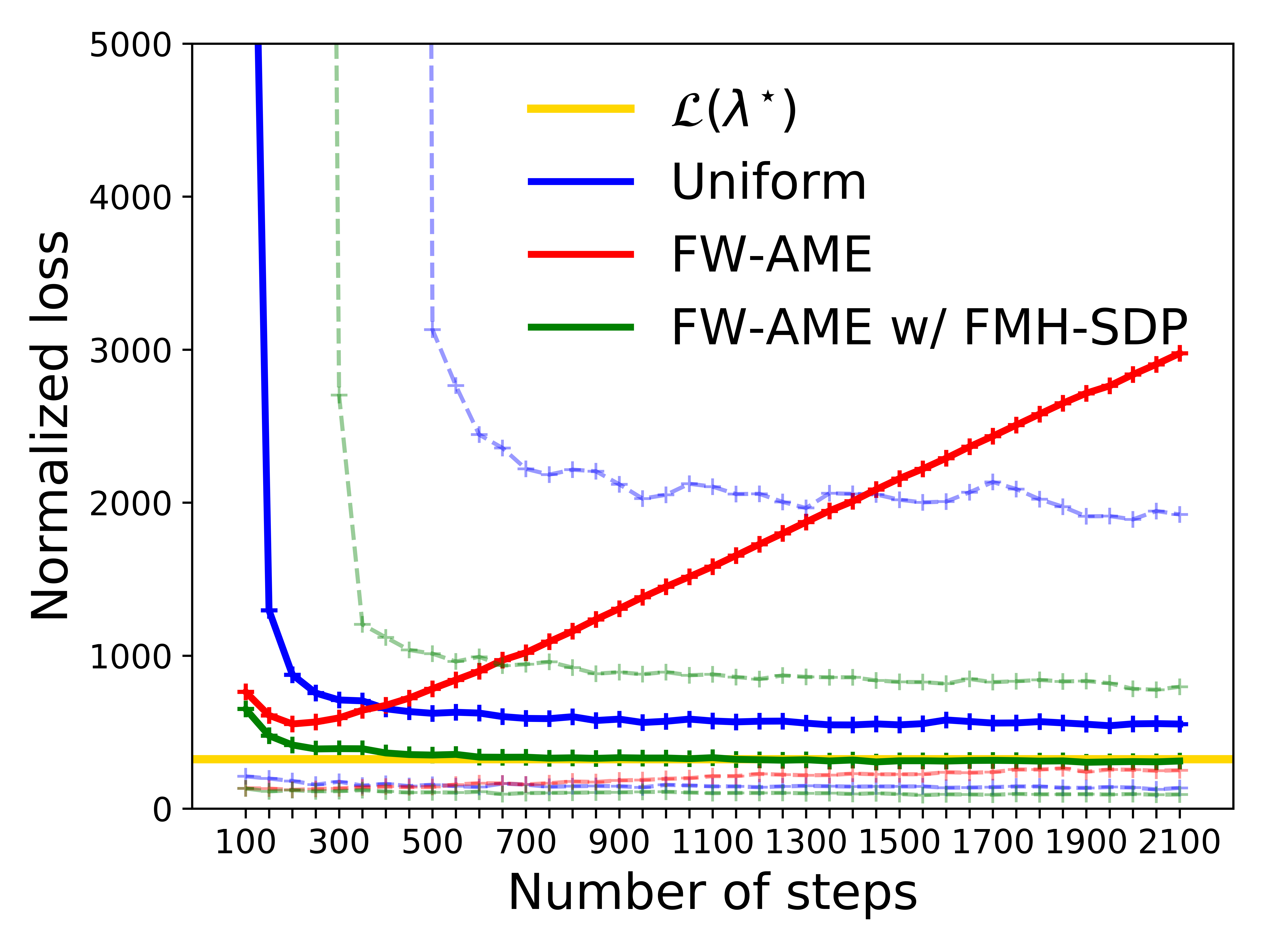}
\caption{An instance of ${\cal G}_{\cal R}(S=10, A=2, b=2)$ where policies mix poorly. The  average SLEM is 0.95 and it is decreased to 0.88 by \fmhsdp. %Due to the large sensitivity to random seed, the solid curves correspond here to the median losses.
% The 95\% quantile of \fwmdp is so big that it does not even appear in the figure.
% The heuristic \fmhsdp verifies condition ($\diamond$) more than 75\% of the time and the average episodic SLEM of \fwmdp w/ \fmhsdp is decreased to 0.88. 
}
\label{plot_learning_mixing_2}
\end{subfigure}
\caption{$n\loss(\pi,n,R=1000)$ as a function of $n$. The dashed curves report 5\% and 95\% quantiles.}
\label{plot_learning_mixing}
\end{figure*}

\textbf{Experimental settings.} We consider $\nu(s) = \mathcal{N}(0, \sigma^2(s))$ and when $T(s) = 0$, we set default variance and mean predictions to $\sigma_{\max}^2$ and $3 \sigma_{\max}$. The initial state is drawn uniformly at random from $\calS$. The episodes of \fwmdp are set so that $t_k = \tau_1 + (k-1)^3$ (for $k > 1$, otherwise $t_1=1$), where $\tau_1$ is the (adaptive) time needed for the initial policy to collect at least one sample of each state (so as to satisfy Asm.~\ref{asm:counter} after the first episode). %\footnote{Such episode length corresponds to the same theoretical analysis in Thm.~\ref{thm:regret_FWAME} while enabling to have shorter initial episodes.} 
We set $\wu \eta = 0.001$ and the confidence intervals to $\alpha(t,s,1/t) = 0.2 \sigma^2_{\textrm{max}}\sqrt{\log(4St^2)/T_t(s)}$. % (which corresponds to a confidence parameter of $\delta = 1/t$).
%
%\textbf{Baseline.} Given that the active exploration problem has not been studied in the literature to the best of our knowledge, we consider as baseline the uniform policy, i.e., $\forall s \in \calS, \forall a \in {\cal A}_{s}, \pi_{\textrm{unif}}(a|s) = 1/\abs{A_s}$.
%
We run simulations on a set of random Garnet MDPs~\citep{bhatnagar2009natural}. A Garnet instance ${\cal G}(S, A, b, \sigma_{\min}^2, \sigma_{\max}^2)$ has $S$ states, $A$ actions, $b$ is the branching factor and state variances are random in $[\sigma_{\min}^2, \sigma_{\max}^2]$. ${\cal G}_{\cal R}$ denotes the reversible Garnet MDPs (see App.~\ref{app:experiments} for more details). We set $\sigma^2_{\textrm{min}}=0.01$ and $\sigma^2_{\textrm{max}}=10$ to have a large spread between the state variances.
%
%\textbf{Evaluation metric.} 
For any budget $n$ and policy $\pi$ ran over $R$ runs, the estimation loss is
    \begin{align*}
        \loss(\pi,n,R) = \dfrac{1}{S R} \displaystyle \sum_{s\in \mathcal{S}} \displaystyle \sum_{1\leq r \leq R} \Big[ \big( \widehat{\mu}_{\pi,n}^{(r)}(s) - \mu(s)\big)^2\Big],
    \end{align*}
while the normalized loss is $n \loss(\pi,n,R)$. Finally, we measure the competitive ratio w.r.t.~the optimal asymptotic performance as
\begin{align}\label{rnl_formula}
        \ratio(\pi,n,R) = \dfrac{n\loss(\pi,n,R)}{\mathcal{L}(\lambda^{\star})} - 1.
\end{align}

\textbf{Results.} We first verify the regret guarantees of Thm.~\ref{thm:regret_FWAME}. Fig.~\ref{table_rnl} reports the competitive ratio averaged over 100 randomly generated Garnet MDPs for \fwmdp and a uniform policy $\pi_{\textrm{unif}}(a|s) = 1/\abs{\mathcal{A}_s}$. As expected the ratio (which is a proxy for the regret) of \fwmdp is much smaller than for $\pi_{\textrm{unif}}$ and it approaches zero as the budget increases. While we report only the aggregated values, this result is consistently confirmed across all Garnet instances we have tried.

We then study the effectiveness of \fmh in improving the optimization performance. In Fig.~\ref{plot_mixing_3state} we report $\loss(\pi,n)$ for the asymptotic optimal policy $\pi_{\lambda^\star}$ and the surrogate policy $\pi^\star_{\fmh}$ as a function of $n$ in the simple 3-state MDP illustrated in Fig.~\ref{fig:toy_MDP}, where $\pi^{\star}$ mixes poorly. We notice that in this case, the impact of favoring faster mixing policies does translate to a significant improvement in finite-time performance. This finding is also confirmed when \fmh is applied to \fwmdp. We first show a specific reversible Garnet MDP where all the policies generated by \fmh are mixing relatively fast (see the normalized loss in Fig.~\ref{plot_learning_mixing_1}). In this case, \fmhsdp has the same performance as \fwmdp (and both are significantly better than uniform). This is confirmed by evaluating the average SLEM of the policies generated by the two algorithms, which is roughly 0.55 in both cases. 
On the other hand, there are Garnet MDP instances where \fwmdp may indeed generate very poorly mixing policies that are executed for relatively long episodes, thus compromising the performance of the algorithm (see Fig.~\ref{plot_learning_mixing_2}).\footnote{The algorithm is still able to recover from bad mixing policies thanks to ergodicity and changing episodes, but it takes much longer to converge.} In this case, \fmhsdp successfully biases the learning process towards faster mixing policies and obtains a much better finite-time performance. In fact, the average SLEM of the policies generated \fwmdp is successfully reduced from 0.95 to 0.88 for \fmhsdp.

% !TEX root = active_mdp.tex

\vspace{-0.1in}
\section{Conclusion and Extensions}\label{sec:conclusion}
\vspace{-0.1in}

%\begin{itemize}
%	\item New problem and preliminary results on optimization and learning.
%	\item Open problems: relaxing assumptions, learning optimality, scalability to large/continuous MDPs, unknown MDP.
%\end{itemize}

We introduced the problem of active exploration in MDPs, proposed an algorithm with vanishing regret and proposed a heuristic convex optimization problem to favor fast mixing policies. This paper opens a number of questions: \textit{(1)} A lower bound is needed to determine the complexity of active exploration in MDPs compared to the \mab case; \textit{(2)} While the ergodicity assumption is not needed in regret minimization in MDPs~\citep{jaksch2010near}, it is unclear whether it is mandatory in our setting; \textit{(3)} A full regret analysis of the case of unknown MDP (see App.~\ref{app:relaxing:asm:known.mdp}). This paper may be a first step towards formalizing the problem of intrinsically motivated RL, where the implicit objective is often to accurately estimate the MDP dynamics and effectively navigate through states \cite[see e.g.,][]{auer2011models,maxent2018}.

\vskip 0.2in
%\clearpage
\newpage
\bibliography{biblio}

\newpage
\begin{appendices}
\onecolumn
% !TEX root = active_mdp.tex

\section{Relaxing assumptions}\label{app:relaxing_asm}

In this section we review the assumptions used throughout the paper and discuss if and how they could be relaxed.

\subsection{Assumption \ref{asm:counter}} \label{app:relaxing:asm:counter}

We consider how to remove Asm.~\ref{asm:counter}. When $T_{\pi,t}(s) = 0$ we set $\wh{\mu}_{\pi,t}(s)$ to an arbitrary default value $\mu_{\infty}$.\footnote{Formally $\mu_{\infty} = +\infty$ yet we can also set it equal to a suitable finite value depending on the distributions. For example, if the state distributions are Gaussian and the means belong to an interval $[-\mu_{\textrm{max}}, +\mu_{\textrm{max}}]$, we can fix $\mu_{\infty} = 3\sigma_{\textrm{max}} + \mu_{\textrm{max}}$, which ensures that the mean estimate computed from one single sample has an overwhelming probability of being more accurate than the default value $\mu_{\infty}$ when there are no samples.} In this case, the prediction loss becomes
\begin{align*}
    \mathcal{L}_n(\pi) = \dfrac{n}{S}\sum_{s \in \mathcal{S}} \mathbb{E}_{\pi} \big[ \dfrac{\sigma^2(s)}{T_{\pi,n}(s)} | T_{\pi,n}(s) > 0 \big] + E(\pi, n), ~~ \textrm{with}~ E(\pi, n) := \dfrac{n}{S}\sum_{s \in \mathcal{S}}(\mu_{\infty} - \mu(s))^2 \mathbb{P}(T_{\pi,n}(s)=0).
\end{align*}
Asm.~\ref{asm:counter} makes the simplification that $E(\pi, n)=0$. In order to deal with the general case, we need to take care of the event $\{ \exists s \in \mathcal{S}, T_{\pi,n}(s) = 0 \}$ in which at least one state does not have any sample from which we could estimate its mean. An alternative is to consider that we initially have a ``fictitious'' observation equal to a fixed value at each state, which would introduce a small bias that tends to zero quickly. Another alternative could be to start by running a policy $\pi_0$ over the states of the MDP and as soon as each state is visited at least once, we set the time step equal to 1 and begin our analysis. In the framework of the learning algorithm \fwmdp, Asm.~\ref{asm:counter} can be easily replaced in practice by considering an adaptive length $\tau_1$ such that at least one sample of each state is collected at the end of the first episode (which is what we do in the experiments in Sect.~\ref{sec:experiments}). The length of this phase would be small as the following result applies.
\begin{proposition} For any policy $\pi \in \Pi^{\textrm{SR}}$, under Asm.~\ref{asm:ergodic}, the term $E(\pi, n)$ decreases exponentially in $n$.
\label{no_visit}
\end{proposition}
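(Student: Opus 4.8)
The plan is to reduce the claim to a geometric hitting-time bound. Recall $E(\pi,n) = \tfrac{n}{S}\sum_{s\in\calS}(\mu_\infty-\mu(s))^2\,\mathbb{P}(T_{\pi,n}(s)=0)$. Since observations are supported in $[0,R]$ we have $\mu(s)\in[0,R]$, and $\mu_\infty$ is chosen finite, so $\max_{s\in\calS}(\mu_\infty-\mu(s))^2 =: C_\mu < \infty$. Multiplying a geometrically decaying sequence by the polynomial factor $n$ preserves exponential decay, so it suffices to show that $\mathbb{P}(T_{\pi,n}(s)=0)$ decays geometrically in $n$ for each fixed $s$.

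First I would observe that $\{T_{\pi,n}(s)=0\}$ is exactly the event that the Markov chain $\{s_t\}_{t\ge1}$ with kernel $P_\pi$ started at $s_1=\overline s$ avoids $s$ at all times $t\in\{2,\dots,n\}$. Under Asm.~\ref{asm:ergodic} the chain $P_\pi$ is ergodic, hence irreducible, on the finite state space $\calS$. Therefore from every state $s'\in\calS$ there is a path of length at most $S$ reaching $s$ with positive probability, so by finiteness of $\calS$
\begin{align*}
\theta_{\pi,s} := \min_{s'\in\calS}\mathbb{P}\big(s\in\{s_2,\dots,s_{1+S}\}\,\big|\,s_1=s'\big) > 0 ,
\qquad \theta_\pi := \min_{s\in\calS}\theta_{\pi,s} > 0 .
\end{align*}
By time-homogeneity, for any time $t_0$ and any entry state, the probability that $s$ is not visited during the window $\{s_{t_0+1},\dots,s_{t_0+S}\}$ is at most $1-\theta_\pi$.

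Next I would decompose $\{2,\dots,n\}$ into $\lfloor (n-1)/S\rfloor$ disjoint consecutive windows of length $S$, conditioning successively on the state at the start of each window and invoking the Markov property, so that the per-window failure probabilities multiply:
\begin{align*}
\mathbb{P}\big(T_{\pi,n}(s)=0\big) \le (1-\theta_\pi)^{\lfloor (n-1)/S\rfloor}.
\end{align*}
Substituting into the definition of $E(\pi,n)$ yields $E(\pi,n)\le n\,C_\mu\,(1-\theta_\pi)^{\lfloor (n-1)/S\rfloor} = O\big(n\,(1-\theta_\pi)^{n/S}\big)$, which tends to $0$ exponentially fast in $n$, as required.

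I do not expect a genuine obstacle here: the whole argument is the standard geometric hitting-time estimate for a finite irreducible chain. The only points needing care are (i) checking that $(\mu_\infty-\mu(s))^2$ is truly bounded, which relies on $\mu_\infty$ being finite and the observations lying in $[0,R]$, and (ii) applying the Markov property correctly across the disjoint windows so the bounds multiply; the polynomial prefactor $n$ is harmless. If a bound uniform over $\pi\in\Pi^{\textrm{SR}}$ were wanted, one could additionally note that $\theta_{\pi,s}$ is a polynomial in the entries of $P_\pi$, hence continuous and positive on the compact set $\Pi^{\textrm{SR}}$, giving $\inf_\pi\theta_\pi>0$; this is not needed for the stated proposition.
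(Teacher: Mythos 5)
Your proof is correct, but it takes a genuinely different route from the paper's. The paper bounds $\mathbb{P}(T_{\pi,n}(s)=0)$ by rewriting it as the deviation event $\mathbb{P}\big(T_{\pi,n}(s) < (\eta_\pi(s)-\epsilon)n\big)$ with $\epsilon = \eta_\pi(s) - 1/n$ and invoking the Bernstein-type concentration inequality for ergodic Markov chains (Prop.~\ref{p:non.reversible.convergence}, from \citealp{Paulin}), which yields an exponential rate governed by the pseudo-spectral gap $\gamma_{\textrm{ps}}^{\pi}$ and $\eta_\pi(s)$; this requires $n > 1/\eta_{\pi,\min}$ but expresses the decay constant in the same quantities (spectral gap, stationary distribution) used throughout the rest of the analysis. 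You instead give the elementary geometric hitting-time argument: irreducibility of the finite chain gives a uniform lower bound $\theta_\pi>0$ on the probability of visiting $s$ within any window of $S$ steps, and the Markov property across disjoint windows yields $\mathbb{P}(T_{\pi,n}(s)=0)\le(1-\theta_\pi)^{\lfloor (n-1)/S\rfloor}$. Your argument is valid for all $n$, needs no concentration machinery and no reversibility discussion, and your closing remark about uniformity over $\pi$ via compactness is a nice bonus the paper does not state; the trade-off is that $\theta_\pi$ is a less interpretable constant than the paper's spectral quantities, and the paper's version plugs directly into the explicit bound on $E(\pi,n)$ it displays. Both proofs establish the proposition as stated.
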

\begin{proof}
Let $n > 1/\eta_{\pi, \min}$. Then setting $\epsilon = \eta(s) - 1/n > 0$ yields\footnote{Here we use the more general result for non-reversible chains reported in Prop.~\ref{p:non.reversible.convergence}.}
\begin{align*}
\mathbb{P}\big(T_{\pi,n}(s) = 0\big) &= \mathbb{P}\Big(T_{\pi,n}(s) < (\eta(s)-\epsilon)n\Big) \\
&\leq \sqrt{\frac{2}{\eta_{\pi, \min}}}  \exp\bigg( \dfrac{-n \gamma_{\textrm{ps}}^{\pi} \Big(\eta(s) - \dfrac{1}{n}\Big)^2}{ 16\eta(s)(1-\eta(s))\Big(1 + \dfrac{1}{n\gamma_{\textrm{ps}}^{\pi}}\Big) + 40 \Big(\eta(s) - \dfrac{1}{n}\Big)}\bigg).
\end{align*}
We thus obtain for any stationary policy $\pi$ and any budget $n > 1/\eta_{\pi, \min}$
\begin{align*}
E(\pi, n) \leq \dfrac{n}{S} \sum_{s \in \mathcal{S}}(\mu_{\infty} - \mu(s))^2 \sqrt{\frac{2}{\eta_{\pi, \min}}}  \exp\bigg( \dfrac{-n \gamma_{\textrm{ps}}^{\pi} \Big(\eta_{\pi, \min} - \dfrac{1}{n}\Big)^2}{ 8\Big(1 + \dfrac{1}{n\gamma_{\textrm{ps}}^{\pi}}\Big) + 40 \Big(\eta_{\pi, \max} - \dfrac{1}{n}\Big)} \bigg),
\end{align*}
which proves the result.
\end{proof}

\subsection{Assumption \ref{asm:reversible}}
\label{app:relaxing:asm:reversible}

The reversibility assumption (Asm.~\ref{asm:reversible}) can be removed and Prop.~\ref{prop.concentration.mixing}, Lem.~\ref{lem:asmyptoric.performance} as well as the proof of Thm.~\ref{thm:regret_FWAME} could be easily adjusted to handle the case of non-reversible policies. As a result, the reversibility condition does not need to hold for the algorithm \fwmdp and its vanishing regret guarantees. This can be achieved by replacing Prop.~\ref{prop.concentration.mixing} with a concentration result adapted from~\cite{Paulin}.

\begin{proposition}[Thm.~3.10 and Prop.~3.14 from~\cite{Paulin}]
\label{p:non.reversible.convergence}
Let us fix a stationary policy $\pi$ which induces a time-homogeneous, ergodic Markov chain. We denote by $P$ its transition matrix and by $\hat{P}$ the time-reversal matrix of $P$. We denote by $\eta_{\pi, \min} = \min_{s \in \mathcal{S}}\eta(s) > 0$ and $\eta_{\pi, \max} = \max_{s \in \mathcal{S}}\eta(s)$ where $\eta$ is the chain's stationary distribution. We consider the pseudo-spectral gap $\gamma_{\textrm{ps}}^{\pi} = \max_{k \geq 1}\gamma(\hat{P}^k P^k)/k > 0$. For a given state $s$ and for every $\epsilon > 0$
\begin{align*}
\mathbb{P}\Big(|\frac{T_{\pi,n}(s)}{n} - \eta(s)| > \epsilon\Big) \leq \sqrt{\frac{2}{\eta_{\pi, \min}}}  \exp{\bigg( \dfrac{-n \gamma_{\textrm{ps}}^{\pi} \epsilon^2}{ 16\eta(s)(1-\eta(s))\big(1 + 1/(n\gamma_{\textrm{ps}}^{\pi})\big) + 40 \epsilon}\bigg)}.
\end{align*}
\end{proposition}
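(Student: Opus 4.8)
The statement is a Bernstein-type concentration bound for the empirical occupation frequency $T_{\pi,n}(s)/n$ of an ergodic (possibly non-reversible) Markov chain run from the deterministic start $s_1 = \overline{s}$, and the plan is to assemble it from two ingredients in \citet{Paulin}: first, a Bernstein inequality for additive functionals of a \emph{stationary} ergodic chain in which the variance proxy and the exponential rate are governed by the pseudo-spectral gap $\gamma_{\textrm{ps}}^{\pi}$ (cf.\ Thm.~3.8 in \cite{Paulin}); and second, a change-of-measure step (Thm.~3.10 / Prop.~3.14 in \cite{Paulin}) transferring the stationary-start bound to the actual, non-stationary initial distribution.

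For the first ingredient I would write $T_{\pi,n}(s) = \sum_{t} f(s_t)$ with the indicator test function $f(\cdot) = \mathbb{I}\{\cdot = s\}$, so that $\mathbb{E}_{\eta}[f] = \eta(s)$, the stationary variance of $f$ is $\eta(s)(1-\eta(s))$, and $\|f - \eta(s)\|_{\infty} \leq 1$. Applying Paulin's Bernstein inequality for Markov chains to the centered partial sum with deviation level $n\epsilon$ and dividing the exponent through by $n$ yields, \emph{under the stationary chain}, a bound of the form
\begin{align*}
\mathbb{P}_{\eta}\Big(\big|T_{\pi,n}(s) - n\,\eta(s)\big| > n\epsilon\Big) \leq 2\exp\bigg( \frac{-n\,\gamma_{\textrm{ps}}^{\pi}\,\epsilon^2}{8\,\eta(s)(1-\eta(s))\big(1 + 1/(n\gamma_{\textrm{ps}}^{\pi})\big) + 20\,\epsilon} \bigg);
\end{align*}
the positivity $\gamma_{\textrm{ps}}^{\pi} > 0$ is guaranteed by ergodicity.

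For the second ingredient, since the chain actually starts from the point mass $q = \delta_{\overline{s}}$ rather than from $\eta$, I would invoke the Cauchy--Schwarz change-of-measure inequality $\mathbb{P}_{q}(A) \leq \| \mathrm{d}q/\mathrm{d}\eta \|_{2,\eta}\,\sqrt{\mathbb{P}_{\eta}(A)}$ and compute $\| \mathrm{d}q/\mathrm{d}\eta \|_{2,\eta}^2 = 1/\eta(\overline{s}) \leq 1/\eta_{\pi,\min}$. Taking the square root of the stationary bound halves its exponent---so the denominator constants $8$ and $20$ become $16$ and $40$---and multiplies the prefactor by $\sqrt{2}\,\|\mathrm{d}q/\mathrm{d}\eta\|_{2,\eta} \leq \sqrt{2/\eta_{\pi,\min}}$, producing exactly the claimed inequality. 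The $O(1/n)$ discrepancy arising from the precise range of the sum defining $T_{\pi,n}(s)$ can be absorbed into the constants.

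The only point that genuinely needs care is that for a non-reversible kernel the spectrum of $P$ no longer controls mixing, so the correct object is the pseudo-spectral gap $\gamma_{\textrm{ps}}^{\pi} = \max_{k\geq 1}\gamma(\hat{P}^k P^k)/k$; checking that Paulin's Bernstein inequality is valid with this quantity and recovering the exact constants $8$ and $20$ from his statement is the main thing to pin down, but it is a direct invocation of the cited results rather than a new argument.
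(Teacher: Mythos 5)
Your derivation is correct and matches the paper's approach: the paper states this proposition as a direct import of Paulin's results without proof, and your combination of the stationary Bernstein bound governed by the pseudo-spectral gap with the Cauchy--Schwarz change-of-measure step (which halves the exponent, turning $8,20$ into $16,40$, and yields the $\sqrt{2/\eta_{\pi,\min}}$ prefactor) is exactly how those two cited results assemble into the stated inequality. The only nit is attributional: in Paulin the non-reversible Bernstein inequality with $\gamma_{\textrm{ps}}^{\pi}$ is Thm.~3.10 itself (Thm.~3.8 is the reversible/spectral-gap version), while Prop.~3.14 is the change-of-measure step.
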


In Sect.~\ref{sec:relaxation}, the reversibility condition is intrinsically needed to relate the spectral gap with its spectral norm formulation, which is not possible for the pseudo-spectral gap. Nonetheless, rather than assuming that all policies are in the set of reversible stationary randomized policies $\Pi^{\textrm{SRR}}$, we could focus on computing a policy $\pi_{\fmh}^{\star}$ belonging to the restricted set $\Pi^{\textrm{SRR}}$, thus replacing the assumption with an additional constrained in the optimization problem.

\subsection{Assumption \ref{asm:known.mdp}}
\label{app:relaxing:asm:known.mdp}

We can deal with the case when the MDP transition model $p$ is unknown by following an optimistic approach similar to UCRL~\citep{jaksch2010near}. We recall that the optimization problem solved by \fwmdp at each episode is indeed equivalent to solving an MDP with known $p$ and reward function set to $\nabla \wh{\mathcal{L}}_{t_k-1}^{+}(\wt{\lambda}_k)$, which is already an optimistic evaluation of the true gradient. Whenever $p$ is unknown, but an estimate and a confidence set are available, we can include the uncertainty of the estimate of $p$ into the optimistic optimization of the MDP.
%\noindent We can easily extend \fwmdp to the case where the MDP transition probabilities $p$ are unknown. 
Let us fix an episode $k$ and $t=t_k-1$ the time step at the end of the previous episode. We introduce the following set that is $p$-dependent and thus unknown
\begin{align*}
    \Lambda_{\wu{\eta}}^{(p)} = \Big\{ \lambda \in \Delta(\mathcal{S} \times \mathcal{A}) ~:~ \forall s \in \mathcal{S}, ~ \displaystyle\sum_{b \in \mathcal{A}} \lambda(s,b) \geq 2\wu{\eta} \quad \textrm{and} \quad \displaystyle\sum_{b \in \mathcal{A}} \lambda(s,b) = \displaystyle\sum_{s' \in \mathcal{S}, a \in \mathcal{A}}p(s|s',a)\lambda(s',a) \Big\}.
\end{align*}
The aim is the solve the following problem
\[
    \min_{\lambda \in \Lambda_{\wu{\eta}}^{(p)}} \min_{p \in \mathcal{C}_t} \langle c, \lambda \rangle = \sum_{s,a} \nabla \wh{\mathcal{L}}_{t}^{+}(\wt{\lambda}_k)(s,a) \lambda(s,a).
\]
If we define over $\mathcal{S} \times \mathcal{A}$ the (bounded) reward function $r= -\nabla \wh{\mathcal{L}}_{t}^{+}(\wt{\lambda}_k)$, we notice that the above problem can be reduced to the dual formulation of finding the policy that maximizes the average reward \citep[Sect.~8,][]{puterman1994markov}. As such, it becomes equivalent to solving the following problem
\begin{align}
\label{dual_pbm}
\max_{\pi \in \Lambda_{\wu{\eta}}^{(p)}} \max_{p \in \mathcal{C}_t} \rho_{\pi}(p),
\end{align}
where $\rho_{\pi}(p)$ is the gain of stationary policy $\pi$ in the MDP with transition probability function $p$. The confidence set $\mathcal{C}_t$ defines a set of plausible transition probability functions at time $t$. Since the reward function is known, this corresponds to a set of plausible MDPs. Problem (\ref{dual_pbm}) thus returns an optimal policy in the plausible MDP with the largest gain. \citet[Sect.~38,][]{lattimore2018bandit} explicit the construction of $\mathcal{C}_t$ and explain that the solutions of (\ref{dual_pbm}) are guaranteed to exist and can be found efficiently.\footnote{In a nutshell, the justification comes from introducing the extended Markov decision process $\wt{M}$ from \cite{jaksch2010near} and solving the average reward problem on that specific MDP using Extended Value Iteration. The fact that the extended action-sets of $\wt{M}$ are infinite is not problematic since $\mathcal{C}_t$ is a convex polytope and has finitely many extremal points; as a result restricting the confidence sets to these points makes the extended MDP finite without changing the optimal policy.}

While a complete derivation of the regret bound for this algorithm is left for future work, we expect the final result of Thm.~\ref{thm:regret_FWAME} to remain unchanged. In fact, the optimal $p$ returned by problem (\ref{dual_pbm}) belongs to $\mathcal{C}_t$ so it is close to the real $p$ up to a factor scaling in $1/T_t$ by construction of $\mathcal{C}_t$. Hence, if the number of visits of any state-action pair (and not just the number of any state visit as in the case of known $p$) is enforced to be proportional to the time step with high probability, then the derivation of the $\wt O(t^{-1/3})$ rate in the proof of Prop.~\ref{thm:regret_FWAME} (cf.~App.~\ref{app:proof_regret}) is unchanged.

\begin{algorithm}[t]
	\caption{\fmh}
	\label{alg:FMH}
	\begin{algorithmic} 
		\REQUIRE $\eta^{\star}$ is the optimal stationary distribution of the convex problem (\ref{eq:opt.function.asymptotic.lambda}).\\
		\REQUIRE 3 parameters $\rho_n$, $\delta_n$ and $\wu{\eta}$ (typically set respectively to $S / n, 1/n$ and $\min_{s} \eta^{\star}(s) / 2$).\\
		\STATE Compute $X_1$ the optimal solution of the convex problem $(\mathcal{P}_1)$ with parameters $\rho_n$, $\delta_n$ and $\wu{\eta}$.\\
		\STATE Deduce the corresponding state distribution $\eta_1$: $\eta_1(s) = \sum_{s' \in \mathcal{S}} X(s,s')$.\\
		\STATE Compute the optimal stationary policy of the convex problem $(\mathcal{P}_2)$ with $\eta_1$ as target state distribution.
	\end{algorithmic}
\end{algorithm}

\section{Faster-Mixing Heuristic \fmh}\label{app:fast_mixing_heuristic}

%\cite{boyd2004fastest} prove the convexity of the SLEM $\xi_{\max}(P)$ as a function of $P$ the transition matrices reversible w.r.t.~a fixed stationary distribution $\eta$. As a result, the convex and non-decreasing nature of the function $x \mapsto 1/(1-x)$ on $[0,1)$ induces the convexity of the problem of minimizing $\gamma(P)^{-1}$ as a function of $P$ the transition matrices reversible w.r.t.~a fixed stationary distribution $\eta$. In our setting, since the stationary distribution $\eta$ is unknown, the optimization variables are both $\eta$ and $P$.

\subsection{Derivation of the two-step method}\label{app:ssection:derivation_fmh}

$\fmh(\pi^{\star},n)$ receives as input a budget $n$ and $\pi^{\star}$, the optimal solution of (\ref{eq:opt.function.asymptotic.lambda}), and returns a stationary policy $\pi_{\fmh}^{\star}$ by solving two convex optimization problems. An outline of \fmh is provided in Alg.~\ref{alg:FMH}.

\textbf{Step 1.} In this step we first remove the stationarity constraint on $\eta$ w.r.t.~the MDP dynamics and replace it by a weaker but easier constraint involving the adjacency matrix of the MDP. Instead of using $P$ as the kernel of the Markov chain associated to a policy, we consider it as a generic \textit{transition matrix} that respects the possible transitions in the MDP, i.e., $P_{ij} = 0$ if $Q_{ij} = 0$. In this case, problem \eqref{eq:opt.function.regularized} becomes convex in $P$ for a fixed $\eta$ and convex in $\eta$ for a fixed $P$, yet it is non-convex in both $P$ and $\eta$. When $P$ is fixed, $\eta$ has no more degree of freedom (i.e., it can be directly derived from $P$), thus any framework of alternate minimization cannot be applied here. We notice that the constraint of reversibility $D_{\eta} P = P^T D_{\eta}$ is the toughest one to handle, since it involves both $P$ and $\eta$ and is not convex in $P$ and $\eta$. This leads us to introduce the matrix variable $X = D_{\eta} P \in \mathbb{R}^{S \times S}$ (i.e., $X_{ij} = \eta_i P_{ij}$). The reversibility constraint on $P$ thus simply translates to a symmetry constraint on $X$. More discussion on the characteristics of the matrix $X$ is for example provided in~\cite{Hsu2015mixing}. We then obtain the following optimization problem with variable $X$ (and its corresponding $\eta$)
\begin{align}
& \underset{X, ~ \eta}{\text{minimize}} & & \mathcal{L}_0(X) ~:=~ \sum_{s \in \mathcal{S}} \dfrac{\sigma^2(s)}{\eta(s)}  + \rho_n \dfrac{1}{1-\| D_{\eta}^{-1/2} X D_{\eta}^{-1/2} - \sqrt{\eta} \sqrt{\eta}^T \|_2} \\
& \text{subject to}
& & X \geq 0, \quad  X = X^T, \quad \sum_{j \in \mathcal{S}} X_{ij} = \eta_i ~ \forall i \in \mathcal{S},  \quad X_{ij} = 0 ~ \textrm{if} ~ Q_{ij} = 0, \quad \eta \geq \wu{\eta}, \quad \eta ^T \mathds{1} = 1. \nonumber
\end{align}

This problem is still non-convex in $X$ and $\eta$. An idea could be to fix $\eta$ and solve the convex problem in $X$ (or equivalently $P$). The most straightforward choice for $\eta$ is to use $\eta^{\star}$, the optimal stationary distribution of problem (\ref{eq:opt.function.asymptotic.lambda}), and solve the convex problem of finding the fastest mixing Markov chain with stationary distribution $\eta^{\star}$~\citep{boyd2004fastest}. However the Markov chains whose stationary distributions are $\eta^{\star}$ might all mix poorly. Leveraging the intuition behind the regularized problem~\eqref{eq:opt.function.regularized}, we give more slack to $\eta$ in order to find faster mixing Markov chains, at the cost of having $\mathcal{L}(\eta)$ slightly larger than $\mathcal{L}(\eta^{\star})$, i.e., at the cost of a slightly worse asymptotic performance. We formalize this trade-off with the a parameter $\delta_n$, which represents how close we allow the stationary distribution $\eta$ to be to $\eta^{\star}$ with respect to the $\ell_2$-norm (we pick the $\ell_2$-norm in order to ensure the convexity of the resulting constraint). We thus focus on solving the following surrogate optimization problem $(\mathcal{P}_1)$
\begin{align}
& \underset{X}{\text{minimize}}
& & \mathcal{L}_1(X) ~:=~ \sum_{s \in \mathcal{S}} \dfrac{\sigma^2(s)}{\sum_{j  \in \mathcal{S}} X_{sj}}  + \rho_n \dfrac{1}{1-\| D_{\eta^{\star}}^{-1/2} X D_{\eta^{\star}}^{-1/2} - \sqrt{\eta^{\star}} \sqrt{\eta^{\star}}^T \|_2}  \tag{$\mathcal{P}_1$} \\
& \text{subject to}
& & X \geq 0, \quad  X = X^T, \quad X_{ij} = 0 ~ \textrm{if} ~ Q_{ij} = 0, \nonumber\\
&&& \| D_{\eta^{\star}}^{-1/2} X D_{\eta^{\star}}^{-1/2} - \sqrt{\eta^{\star}} \sqrt{\eta^{\star}}^T \|_2 \leq 1, \nonumber \\
&&& \sum_{(i,j) \in \mathcal{S}^2} X_{ij} = 1, \quad \sum_{j \in \mathcal{S}} X_{ij} \geq \wu{\eta}, \quad \sum_{i \in \mathcal{S}}\big(\sum_{j \in \mathcal{S}} X_{ij} - \eta_i^{\star}\big)^2 \leq \delta_n^2,\nonumber
\end{align}
where the small positive constant $\wu{\eta}$ should satisfy $\wu{\eta} \leq \min_{s} \eta^{\star}(s)$. Prop.~\ref{well-defined} guarantees the convexity and feasibility of the optimization problem $(\mathcal{P}_1)$.

\begin{proposition}
	\label{well-defined}
	$(\mathcal{P}_1)$ is convex in $X$ and well-defined for any $\delta_n$.
\end{proposition}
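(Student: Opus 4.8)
To prove the claim I would separately verify (i) that the feasible region of $(\mathcal{P}_1)$ is convex, (ii) that the objective $\mathcal{L}_1$ is a convex (possibly $+\infty$-valued) function, and (iii) that the feasible region is nonempty for every slack $\delta_n\geq 0$, with the infimum attained; together these give ``convex in $X$ and well-defined for any $\delta_n$''. For (i), I would go through the constraints one at a time. The conditions $X\geq 0$, $X=X^{T}$, $X_{ij}=0$ whenever $Q_{ij}=0$, $\sum_{ij}X_{ij}=1$ and $\sum_{j}X_{ij}\geq\wu{\eta}$ are finitely many linear (in)equalities, so they cut out a polyhedron. Since $\eta^{\star}\geq\eta_{\min}>0$ by Asm.~\ref{asm:ergodic}, $D_{\eta^{\star}}^{-1/2}$ is well defined and $X\mapsto D_{\eta^{\star}}^{-1/2}XD_{\eta^{\star}}^{-1/2}-\sqrt{\eta^{\star}}\sqrt{\eta^{\star}}^{T}$ is affine; precomposing the spectral norm (a norm, hence convex) with it shows $\phi(X):=\|D_{\eta^{\star}}^{-1/2}XD_{\eta^{\star}}^{-1/2}-\sqrt{\eta^{\star}}\sqrt{\eta^{\star}}^{T}\|_{2}$ is convex, so $\{\phi\leq 1\}$ is convex; and $X\mapsto\sum_{i}(\sum_{j}X_{ij}-\eta^{\star}_i)^{2}$ is a sum of squares of affine functions, hence convex, so its $\delta_n^{2}$-sublevel set is convex. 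The feasible region, an intersection of these convex sets, is convex.

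\textbf{Convexity of the objective.} The first term $\sum_{s}\sigma^{2}(s)/(\sum_{j}X_{sj})$ is a finite sum of compositions of the convex function $t\mapsto\sigma^{2}(s)/t$ on $(0,\infty)$ with the affine map $X\mapsto\sum_{j}X_{sj}$ (which is $\geq\wu{\eta}>0$ on the feasible region), hence convex. The second term equals $\rho_n\,g(\phi(X))$ with $g(u):=1/(1-u)$ for $u\in[0,1)$ and $g(u):=+\infty$ for $u\geq 1$; since $g$ is convex and \emph{nondecreasing} on its domain and $\phi$ is convex, the composition $g\circ\phi$ is convex as an extended-real-valued function. Therefore $\mathcal{L}_1$ is convex in $X$.

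\textbf{Well-definedness and the main obstacle.} For nonemptiness I would exhibit the explicit point $X^{\star}:=D_{\eta^{\star}}P_{\pi^{\star}}$, where $P_{\pi^{\star}}$ is the kernel of the chain induced by the optimal policy $\pi^{\star}$ of~\eqref{eq:opt.function.asymptotic.lambda} (so its stationary distribution is $\eta^{\star}$); by reversibility (Asm.~\ref{asm:reversible}) $X^{\star}$ is symmetric, it is nonnegative, and $X^{\star}_{ij}=\eta^{\star}_i P_{\pi^{\star}}(j|i)=0$ whenever $Q_{ij}=0$. Its row sums are exactly $\eta^{\star}_i$, so $\sum_{ij}X^{\star}_{ij}=1$, $\sum_{j}X^{\star}_{ij}=\eta^{\star}_i\geq\wu{\eta}$ (using $\wu{\eta}\leq\min_s\eta^{\star}(s)$), and $\sum_{i}(\sum_{j}X^{\star}_{ij}-\eta^{\star}_i)^{2}=0\leq\delta_n^{2}$ for \emph{every} $\delta_n\geq 0$; moreover $\phi(X^{\star})=\|D_{\eta^{\star}}^{1/2}P_{\pi^{\star}}D_{\eta^{\star}}^{-1/2}-\sqrt{\eta^{\star}}\sqrt{\eta^{\star}}^{T}\|_{2}=\xi_{\pi^{\star},\max}<1$ by~\eqref{SLEM_as_norm} and ergodicity, so the spectral-norm constraint holds strictly and $\mathcal{L}_1(X^{\star})<\infty$. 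Since the feasible region is also closed and bounded (all entries lie in $[0,1]$ by $X\geq 0$ and $\sum_{ij}X_{ij}=1$), hence compact, and $\mathcal{L}_1$ is lower semicontinuous and bounded below by $\rho_n>0$, a minimizer exists and the problem is well-defined. The argument is mostly routine convexity bookkeeping; the points that need care are the composition step for the second term --- one must observe that $g$ is nondecreasing on $[0,1)$ so that the ``convex nondecreasing $\circ$ convex'' rule applies, and treat the value $+\infty$ on $\{\phi=1\}$ in the extended-real sense --- and the construction of the feasible point $X^{\star}$, which is exactly where reversibility of the optimal policy enters.
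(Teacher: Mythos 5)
Your proof is correct and follows the same overall scheme as the paper's: routine convexity bookkeeping for the constraints and the two terms of the objective, followed by the exhibition of a strictly feasible point of the form $D_{\eta^{\star}}M$ with $M$ a kernel that is reversible w.r.t.\ $\eta^{\star}$, so that the spectral-norm constraint holds strictly via~\eqref{SLEM_as_norm}. The one substantive difference is in the choice of that witness. You take $M=P_{\pi^{\star}}$ and invoke Asm.~\ref{asm:reversible} to get symmetry of $X^{\star}=D_{\eta^{\star}}P_{\pi^{\star}}$; the paper instead takes the additive reversibilization $M^{\star}=(P^{\star}+\widehat{P^{\star}})/2$, where $\widehat{P^{\star}}$ is the time-reversal of $P^{\star}$, explicitly so that the construction does \emph{not} require $P^{\star}$ itself to be reversible. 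Since $M^{\star}$ has the same stationary distribution $\eta^{\star}$, respects the adjacency pattern $Q$ (its support is contained in that of $P^{\star}$ union its reverse, which the symmetric constraint on $Q$ accommodates), and satisfies $\mathrm{SLEM}(M^{\star})<1$ by ergodicity, the paper's feasibility argument survives the relaxation of Asm.~\ref{asm:reversible} discussed in App.~\ref{app:relaxing:asm:reversible}, whereas yours is tied to that assumption. Under the standing assumptions of the section your argument is perfectly valid, just slightly less robust. Your additional observations --- that the feasible set is compact and $\mathcal{L}_1$ lower semicontinuous, so the minimum is attained, and the careful extended-real treatment of $u\mapsto 1/(1-u)$ at $u=1$ --- go beyond what the paper writes and are welcome refinements rather than deviations.
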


\begin{proof}
The convexity of $(\mathcal{P}_1)$ is easily obtained from the convexity of the non-regularized problem, the convexity of the function $X \mapsto 1/(1-\|X\|_2)$ and the convexity of all the constraints. There can however exist some matrices $X$ such that $\| D_{\eta^{\star}}^{-1/2} X D_{\eta^{\star}}^{-1/2} - \sqrt{\eta^{\star}} \sqrt{\eta^{\star}}^T \|_2 \geq 1$, thus making $(\mathcal{P}_1)$ either undefined in its objective function (if the norm is equal to 1) or not satisfying one of the constraints. We thus need to ensure that for any fixed $\delta_n$ there exists at least one matrix $X$ such that $\| D_{\eta^{\star}}^{-1/2} X D_{\eta^{\star}}^{-1/2} - \sqrt{\eta^{\star}} \sqrt{\eta^{\star}}^T \|_2 < 1$ with the remaining constraints satisfied. To do so, we introduce the transition matrix $M^{\star} = (P^{\star} + \widehat{P^{\star}})/2$ with $\widehat{P^{\star}}$ the time-reversed transition matrix of $P^{\star}$ which is the transition matrix of the optimal policy for problem (\ref{eq:opt.function.asymptotic.lambda}). Whereas $P^{\star}$ is not necessarily reversible w.r.t.~$\eta^{\star}$, it is the case for $M^{\star}$, thus yielding $\textrm{SLEM}(M^{\star}) < 1$. We also define $X = D_{\eta^{\star}} M^{\star}$. By construction of $X$, we have $X \geq 0$, $X = X^T$, $\sum_{i,j} X_{ij} = 1$, $X_{ij} = 0 ~ \textrm{if} ~ Q_{ij} = 0$ and $\sum_{j} X_{ij} \geq \wu{\eta}$. Furthermore, we have $\sum_{i}\big(\sum_{j} X_{ij} - \eta_i^{\star}\big)^2 = \sum_{i}\big(\sum_{j} \eta_i^{\star}M^{\star}_{ij} - \eta_i^{\star}\big)^2 =  \sum_{i}(\eta_i^{\star})^2\big(\sum_{j} M^{\star}_{ij} - 1 \big)^2 = 0$ which means that all the constraints are verified. In addition, since $M^{\star}$ is reversible w.r.t.~$\eta^{\star}$, we have $\| D_{\eta^{\star}}^{-1/2} X D_{\eta^{\star}}^{-1/2} - \sqrt{\eta^{\star}} \sqrt{\eta^{\star}}^T \|_2 = \| D_{\eta^{\star}}^{1/2} M^{\star} D_{\eta^{\star}}^{-1/2} - \sqrt{\eta^{\star}} \sqrt{\eta^{\star}}^T \|_2 = \textrm{SLEM}(M^{\star}) < 1$. This proves that $(\mathcal{P}_1)$ is well-defined.
\end{proof}

Solving the convex optimization problem $(\mathcal{P}_1)$ yields an optimal matrix $X_1 \in \mathbb{R}^{S\times S}$, from which we easily obtain the associated stationary distribution $\eta_1$ as well as the transition matrix of the associated Markov chain $P_1$ 
\begin{align*}
\eta_1(s) = \sum_{s' \in \mathcal{S}} X_1(s,s') \quad \textrm{and} \quad P_1(s,s') = \frac{X_1(s,s')}{\sum_{s'}X_1(s,s')}.
\end{align*}

\textbf{Step 2.} The distribution $\eta_1$ is stationary w.r.t.~the Markov chain $P_1$ (which is expected to have better mixing properties than $P^{\star}$), but it may not be feasible w.r.t.~the MDP dynamics. As a result, we must now find a stationary policy $\pi$ whose stationary distribution is closest to $\eta_1$. This is closely linked to the steady-state control problem from \cite{Akshay}, where it is proved that for an ergodic MDP the problem of finding a stationary policy given a target stationary state distribution is effectively decidable in polynomial time. If the steady-state control problem admits a solution, such a policy can be computed by simply solving a polynomial-size linear program. More precisely, we seek a policy $\pi$ in the set of non-negative reals $\{\pi_{s,a} | s \in \mathcal{S}, a \in \mathcal{A}_s \}$ such that
\begin{align*}
\forall s \in \mathcal{S}, \quad \displaystyle \sum_{s' \in \mathcal{S}, a \in \mathcal{A}_{s'}} \eta_1(s')p(s|s',a)\pi_{s',a} = \eta_1(s) \quad \textrm{and} \quad \displaystyle \sum_{a \in \mathcal{A}_s}\pi_{s,a} = 1.
\end{align*}
If the steady-state control problem does not admit a solution, we seek a stationary policy whose stationary state distribution is closest to $\eta_1$ w.r.t.~the $\ell^2$-norm by solving the following convex optimization problem $(\mathcal{P}_2)$ in $\pi$
\begin{align}
& \underset{\pi}{\text{minimize}} \quad
\displaystyle \sum_{s} \Big(\eta_1(s) - \displaystyle \sum_{s' \in \mathcal{S}, a \in \mathcal{A}_{s'}} \eta_1(s')p(s|s',a)\pi_{s',a}  \Big)^2 \tag{$\mathcal{P}_2$} \\
& \text{subject to \quad $\forall s \in \mathcal{S}, \quad \pi_{s,a} \geq 0 \quad \forall a \in \mathcal{A}_s \quad \textrm{and} \quad \displaystyle \sum_{a \in \mathcal{A}_s}\pi_{s,a} = 1$}. \nonumber
\end{align}
Since we do not know in advance if the steady-state control problem admits a solution or not, we directly solve problem $(\mathcal{P}_2)$ which encompasses both cases (its optimal value is 0 if the steady-state control problem admits a solution). This yields a policy denoted $\pi^{\star}_{\fmh}$.

%\subsection{Performance}\label{app:performance}
%Since \fmh is a heuristic, we cannot theoretically prove that the obtained policy $\pi^{\star}_{\fmh}$ indeed mixes faster than $\pi^{\star}$. We can however control the gap in asymptotic performance between $\L(\pi^{\star}_{\fmh})$ and optimal $\L(\pi^{\star})$ since it is proportional to $\delta_n$ which tends to 0. Using Lem.~\ref{lemma_loss_fmh}, the final performance loss of \fmh can be written
%	\begin{align*}
%	\mathcal{L}_n(\pi^{\star}_{\fmh}) - \mathcal{L}_n(\pi_n^{\star}) \leq 2 \ell_n(\pi_n^{\star}) + \dfrac{2\sigma_{\max}^2 \sqrt{S}}{\wu{\eta}^2} \delta_n + \dfrac{2}{\gamma_{\textrm{min}}}\rho_n + O(n^{-3/2}).
%	\end{align*}
%
%\begin{lemma}
%	We have
%	\begin{align*}
%	\mathcal{L}_{\textrm{reg}}(\pi^{\star}_{\fmh}) - \mathcal{L}_{\textrm{reg}}(\pi^{\star}_{\textrm{reg}}) \leq \dfrac{2\sigma_{\max}^2 \sqrt{S}}{\wu{\eta}^2} \delta + \dfrac{2}{\gamma_{\textrm{min}}}\rho_n.
%	\end{align*}
%	\label{lemma_loss_fmh}
%\end{lemma}

\subsection{A SDP formulation of \fmh (\fmhsdp)}\label{app:ssection:SDP_variant}

We notice that step 1 of \fmh is by far the most computationally demanding, due to the complexity of the objective function and constrained set of problem $(\mathcal{P}_1)$. Fortunately, the symmetry constraint on $X$ leads to the symmetry of the matrix $D_{\eta^{\star}}^{-1/2} X D_{\eta^{\star}}^{-1/2} - \sqrt{\eta^{\star}} \sqrt{\eta^{\star}}^T$, which is a very useful property because it becomes easy to compute a subgradient of its spectral norm w.r.t.~$X$ (see e.g., \citet[Sect.~5.1,][]{boyd2004fastest}). We can thus apply subgradient descent to solve $(\mathcal{P}_1)$. However a projection on the constrained set is required at each step. We thus propose an alternative method to solve problem $(\mathcal{P}_1)$ that is
projection-free and hence more computationally efficient. Since this approach uses semidefinite programming, the resulting heuristic is called \fmhsdp.

The key observation is that the regularizing term in $(\mathcal{P}_1)$ partially ``takes into account'' the non-regularized one through the last constraint $\|\eta - \eta^{\star}\| \leq \delta_n$. Furthermore, the regularizing term corresponds (up to composition of a non-decreasing function) to minimizing the spectral norm of a symmetric matrix.  Drawing inspiration from \citet[Sect.~2.3,][]{boyd2004fastest}, we can express it as a semidefinite program (SDP) which can be solved efficiently using standard SDP solvers. Introducing a scalar variable $s$ to bound the spectral norm, step 1 of \fmh is replaced by the following SDP problem whose variables are the matrix $X$ and the scalar $s$
\begin{equation}
\begin{aligned}
& \underset{X, s}{\text{minimize}}
& & s  \\
& \text{subject to}
& & -s I \preceq D_{\eta^{\star}}^{-1/2} X D_{\eta^{\star}}^{-1/2} - \sqrt{\eta^{\star}} \sqrt{\eta^{\star}}^T \preceq sI \\
&&& X \geq 0, \quad  X = X^T, \quad X_{ij} = 0 ~ \textrm{if} ~ J_{ij} = 0 \\
&&& \sum_{(i,j) \in \mathcal{S}^2} X_{ij} = 1, \quad \sum_{j \in \mathcal{S}} X_{ij} \geq \underline{\eta}, \quad |\big( \sum_{j \in \mathcal{S}} X_{ij}\big)_i - \eta^{\star}_i| \leq (\delta_n)_i.
\end{aligned}
\label{SDP_problem}
\end{equation}
\fmhsdp is not only more computationally efficient due to its SDP formulation but it also loses the dependency on the hyper-parameter $\rho_n$ as only $\delta_n$ remains.

% !TEX root = active_mdp.tex

\section{Proofs}\label{app:proofs}

We first recall the performance loss suffered by the continuous relaxation in the bandit case, where the frequency $T_{\pi,n}/n$ is replaced by an allocation $\lambda$ in the simplex. In order to keep the notation as consistent as possible, consider a stochastic bandit problem with $S$ arms, let $\Delta_n = \{ \eta_n\in [0,1]^S: \eta_n(s) = \frac{T_{n}(s)}{n} \}$ and $\Delta$ be the discrete and continuous simplex over $S$ arms, where $\eta_{n}(s)$ is the frequency associated to $T_n$ pulls. Since in this case a policy directly selects arms rather than actions, the objective functions $\mathcal{L}_n$ and $\mathcal{L}$ coincide and we can write
\begin{align*}
\mathcal{L}(\eta) = \frac{1}{S}\displaystyle\sum_{s} \dfrac{\sigma^2(s)}{\eta(s)},
\end{align*}
where $\eta$ may be either a discrete or a continuous allocation. We have the following.

\begin{proposition}\label{prop:bandit.perf.loss}
Let $\eta^\star_n = \arg\min_{\eta_n\in\Delta_n} \mathcal{L}(\eta_n)$ be the optimal discrete allocation. As computing $\eta^*_n$ is NP hard, a standard solution is to first compute $\eta^\star = \arg\min_{\eta\in\Delta} \mathcal{L}(\eta)$ and then round it to obtain $\wt\eta_n$. If $\wt\eta_n$ is computed using efficient apportionment techniques~\citep[Chapter 12,][]{pukelsheim2006optimal}, then for any budget $n > 2S$ we have
\begin{align*}
\mathcal{L}(\wt\eta_n) - \mathcal{L}(\eta_n^{\star}) \leq \frac{2}{n}\displaystyle\sum_{s} \frac{\sigma^2(s)}{\eta^\star(s)} = \frac{2S}{n} \mathcal{L}(\eta^\star).
\end{align*}
Furthermore, for any $n \geq 4/(S\eta_{\min}^2)$, where $\eta_{\min} = \min_s \eta^{\star}(s)$ we have
\begin{align*}
\mathcal{L}(\wt\eta_n) - \mathcal{L}(\eta_n^{\star}) \leq \dfrac{8 \sigma^2_{\max}}{\eta_{\min}^3 n^2}.
\end{align*}

\end{proposition}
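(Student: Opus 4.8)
The plan is to reduce both inequalities to a bound on $\mathcal{L}(\wt\eta_n)-\mathcal{L}(\eta^\star)$. Indeed, since $\eta^\star_n$ minimizes $\mathcal{L}$ over the discrete simplex $\Delta_n\subseteq\Delta$ while $\eta^\star$ minimizes it over the full simplex $\Delta$, we have $\mathcal{L}(\eta^\star)\le\mathcal{L}(\eta^\star_n)$, hence $\mathcal{L}(\wt\eta_n)-\mathcal{L}(\eta^\star_n)\le\mathcal{L}(\wt\eta_n)-\mathcal{L}(\eta^\star)$, and it remains only to control the loss incurred by rounding the continuous optimum. For this I would invoke two standard guarantees of efficient apportionment (Pukelsheim, Ch.~12): (P1) a proportionality bound $\wt\eta_n(s)\ge\frac{n-S}{n}\,\eta^\star(s)$ for every $s$ (valid as soon as $n\ge S$), and (P2) a quota bound $|n\wt\eta_n(s)-n\eta^\star(s)|<1$ for every $s$, together with $n\wt\eta_n(s)\ge1$ whenever $\eta^\star(s)>0$ --- the latter being automatic once $n\ge 2/\eta_{\min}$, since then $n\eta^\star(s)\ge n\eta_{\min}\ge2$.

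For the first bound I would use only (P1). As $\mathcal{L}(\eta)=\frac1S\sum_s\sigma^2(s)/\eta(s)$ is decreasing in each $\eta(s)$, (P1) yields termwise $\sigma^2(s)/\wt\eta_n(s)\le\frac{n}{n-S}\,\sigma^2(s)/\eta^\star(s)$, hence $\mathcal{L}(\wt\eta_n)\le\frac{n}{n-S}\mathcal{L}(\eta^\star)$ and $\mathcal{L}(\wt\eta_n)-\mathcal{L}(\eta^\star)\le\frac{S}{n-S}\mathcal{L}(\eta^\star)$. For $n>2S$ we have $n-S>n/2$, so $\frac{S}{n-S}<\frac{2S}{n}$, and since $\frac{2S}{n}\mathcal{L}(\eta^\star)=\frac{2}{n}\sum_s\sigma^2(s)/\eta^\star(s)$, combining with the reduction above gives exactly the claimed inequality.

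For the second bound I would exploit that $\eta^\star$ is an interior minimizer of the smooth convex function $\mathcal{L}$ over the affine slice $\{\eta:\sum_s\eta(s)=1\}$ --- interior because $\mathcal{L}(\eta)\to\infty$ whenever some $\eta(s)\to0$ --- so first-order optimality forces $\nabla\mathcal{L}(\eta^\star)\in\mathrm{span}(\mathbf{1})$. Consequently, writing $d:=\wt\eta_n-\eta^\star$, which satisfies $\sum_s d(s)=0$, the linear term $\langle\nabla\mathcal{L}(\eta^\star),d\rangle$ vanishes, and a second-order Taylor expansion along the segment $[\eta^\star,\wt\eta_n]$ gives $\mathcal{L}(\wt\eta_n)-\mathcal{L}(\eta^\star)=\tfrac12\,d^\top\nabla^2\mathcal{L}(\zeta)\,d=\sum_s\frac{\sigma^2(s)}{S\,\zeta(s)^3}\,d(s)^2$ for some $\zeta$ on that segment, since the Hessian of $\mathcal{L}$ is diagonal with entries $2\sigma^2(s)/(S\eta(s)^3)$. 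I would then bound each factor: $|d(s)|<1/n$ by (P2); $\sigma^2(s)\le\sigma^2_{\max}$; and $\zeta(s)\ge\min\{\eta^\star(s),\wt\eta_n(s)\}\ge\eta^\star(s)-1/n\ge\eta^\star(s)/2\ge\eta_{\min}/2$, where I use that the hypothesis $n\ge 4/(S\eta_{\min}^2)$ implies $n\ge 2/\eta_{\min}$ (because $\eta_{\min}\le1/S$). Plugging these in, $\mathcal{L}(\wt\eta_n)-\mathcal{L}(\eta^\star)\le S\cdot\frac{\sigma^2_{\max}}{S(\eta_{\min}/2)^3 n^2}=\frac{8\sigma^2_{\max}}{\eta_{\min}^3 n^2}$, and the reduction from the first paragraph closes the argument.

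The arithmetic in each step is routine; the two points that need care are (i) pinning down the apportionment guarantees --- especially that an efficient rounding of $\eta^\star$ simultaneously keeps every coordinate bounded away from $0$ and within $1/n$ of the target frequency, which is what (P1) and (P2) encode and on which the two bounds lean in slightly different ways --- and (ii) the structural observation that $\nabla\mathcal{L}$ is constant at the interior optimum $\eta^\star$, which annihilates the $O(1/n)$ first-order term and is exactly what promotes the rate from $O(1/n)$ to $O(1/n^2)$ in the large-budget regime. I expect (i) to be the main obstacle to a fully rigorous write-up.
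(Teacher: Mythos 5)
Your proposal is correct and follows essentially the same route as the paper's proof: the same reduction via $\mathcal{L}(\eta^\star)\leq\mathcal{L}(\eta^\star_n)$, the same apportionment multiplier bound $\wt\eta_n(s)\geq\eta^\star(s)(1-S/n)$ for the $O(1/n)$ rate, and the same second-order argument on the restricted simplex $\{\eta:\eta(s)\geq\eta_{\min}/2\}$ for the $O(1/n^2)$ rate (the paper packages your Taylor-with-vanishing-linear-term step as ``$\overline{C}$-smoothness'' via Pukelsheim, Thm.~12.10, with $\overline{C}=2\sigma^2_{\max}/(S(\eta_{\min}/2)^3)$). If anything you are slightly more explicit than the paper on the two points it leaves implicit, namely the quota bound $|\wt\eta_n(s)-\eta^\star(s)|\leq 1/n$ needed to get $\|\wt\eta_n-\eta^\star\|_2^2\leq S/n^2$, and the interior-optimality argument that kills the first-order term.
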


\begin{proof}
Using efficient apportionment techniques for rounding we have~\citep[Lem.~12.8,][]{pukelsheim2006optimal} 
\begin{align*}
\displaystyle \min_{s} \dfrac{\wt \eta_n(s)}{\eta^{\star}(s)} \geq 1 - \dfrac{S}{n}, \quad \quad \textrm{i.e.,} \quad \forall s,~ \wt\eta_n(s) \geq \eta^{\star}(s)\Big(1-\dfrac{S}{n}\Big),
\end{align*}
which also implies the other direction as
\begin{align*}
\wt\eta_n(s) = 1-\sum_{s'\neq s} \wt\eta_n(s') \leq 1-\sum_{s'\neq s} \eta^\star(s') + \sum_{s'\neq s}\eta^{\star}(s')\dfrac{S}{n} \leq \eta^\star(s) + \frac{S}{n}.
\end{align*}
Then we can bound the performance loss of $\wt\eta_n$ as
\begin{align*}
\mathcal{L}(\wt\eta_n) - \mathcal{L}(\eta_n^{\star}) = \mathcal{L}(\wt\eta_n) - \mathcal{L}(\eta^{\star}) + \underbrace{\mathcal{L}(\eta^{\star}) - \mathcal{L}(\eta_n^{\star})}_{\leq 0}
\leq \dfrac{1}{S}\displaystyle\sum_{s} \sigma^2(s) \Big( \dfrac{1}{\wt\eta_n(s)} - \dfrac{1}{\eta^{\star}(s)}\Big).
\end{align*}
Under the assumption that $n > 2 S$, we can bound each of the summands as
\begin{align*}
\dfrac{1}{\wt\eta_n(s)} - \dfrac{1}{\eta^{\star}(s)} = \dfrac{\eta^{\star}(s) - \wt\eta_n(s)}{\wt\eta_n(s)\eta^{\star}(s)} \leq \dfrac{\eta^\star(s)S/n}{(\eta^\star(s))^2(1- S/n)} \leq \dfrac{2S}{\eta^\star(s) n},
\end{align*}
which proves the $O(1/n)$ upper bound. Recalling the definition of $\mathcal{L}(\eta^\star)$ we obtain the final statement 
\begin{align*}
\mathcal{L}(\wt\eta_n) - \mathcal{L}(\eta_n^{\star}) \leq 2\displaystyle\sum_{s} \dfrac{\sigma^2(s)}{\eta^\star(s) n} = \frac{2S}{n} \mathcal{L}(\eta^\star).
\end{align*}
An even faster rate can be obtained exploiting the smoothness of $\mathcal{L}$. Let $\overline{\Delta} = \{ \eta \in \Delta: \forall s, \eta(s) \geq \eta_{\min}/2\}$. Since $\wt\eta(s) \geq \eta^\star(s)(1-S/n)$, for any any $n > 2S$ we have $\wt\eta_n, \eta^{\star} \in \overline{\Delta}$, hence using the $\overline{C}$-smoothness of $\mathcal{L}$ on $\overline{\Delta}$ with $\overline{C} = \frac{2\sigma^2_{\max}}{S(\eta_{\min}/2)^3}$, we can write~\citep[Thm.~12.10][]{pukelsheim2006optimal}
\begin{align*}
\mathcal{L}(\wt\eta_n) - \mathcal{L}(\eta_n^{\star}) \leq \dfrac{ \overline{C}}{2} ||\wt\eta_n - \eta^{\star}||_2^2 \leq \dfrac{8 \sigma^2_{\max}}{\eta_{\min}^3 n^2},
\end{align*}
which corresponds to an asymptotic rate of $O(1/n^2)$. Since the multiplicative constants are larger than those of the $O(1/n)$ rate, the rate $O(1/n^2)$ effectively starts when $n$ is big enough. A rough bound on $n$ for the second bound to be effectively smaller than the first is obtained by upper-bounding $\mathcal{L}(\eta^\star) \leq \sigma^2_{\max}/\eta_{\min}$ as
\begin{align*}
\dfrac{8 \sigma^2_{\max}}{\eta_{\min}^3 n^2} \leq \dfrac{2\sigma^2_{\max} S}{\eta_{\min} n} \iff n \geq \dfrac{4}{S \eta_{\min}^2},
\end{align*}
which concludes the proof.

\end{proof}

\begin{proof}[Proof of Proposition~\ref{prop.concentration.mixing}]
The first statement is a direct application of the relationship between mixing and spectral gap. For any policy reversible and ergodic policy $\pi$, any starting state $s'$ and any state $s$, we have from \citet[Prop.~3,][]{diaconis1991geometric}
\begin{align*}
\big| \mathbb{P}_\pi(s_t = s | s_1 = s') - \eta_\pi(s)\big| \leq \dfrac{1}{2} \sqrt{\dfrac{1-\eta_{\pi}(s')}{\eta_{\pi}(s')}} (1-\gamma_\pi)^t
\end{align*}
Then the difference between the expected frequency and the stationary distribution is bounded as
\begin{align*}
\Big| \frac{\mathbb{E}\big[T_{\pi,n}(s)\big]}{n} - \eta_\pi(s) \Big| &\leq \frac{1}{n}\sum_{t=2}^n \big| \mathbb{P}_\pi(s_t = s | s_1 = \overline{s}) - \eta_\pi(s)\big| \nonumber\\
&\leq \frac{1}{2\sqrt{\eta_{\min}} n} \sum_{t=1}^n (1-\gamma_\pi)^t \leq \frac{1}{2\sqrt{\eta_{\min}} n\gamma_\pi}.
\end{align*}

\end{proof}

%%%%%%%%%%%%%%%%%%%%%%%%%%%%%%%%%%%%%%%

\begin{proof}[Proof of Proposition~\ref{prop:asymptotic.relaxation}]
The state-action polytope $\Lambda$ is closed, bounded and convex according to \citet[Thm.~8.9.4,][]{puterman1994markov}. The problem \eqref{eq:opt.function.asymptotic.lambda} is thus convex in $\lambda$ due to the convexity of the objective function and constraints. It is straightforward that $\pi_{\lambda^{\star}} \in \Pi^{\textrm{SR}}$. From \citet[Thm.~8.8.1,][]{puterman1994markov}, the stationary distribution $\eta_{\pi_{\lambda^{\star}}}$ of $\pi_{\lambda^{\star}}$ is the unique solution of the system of equations $\sum_{s'} P_{\pi_{\lambda^{\star}}}(s|s') \eta_{\pi_{\lambda^{\star}}}(s') = \eta_{\pi_{\lambda^{\star}}}(s)$ (for each state $s$) subject to $\sum_{s} \eta_{\pi_{\lambda^{\star}}}(s)=1$. Given that $\big(\sum_{a} \lambda^{\star}(s,a)\big)_s$ is a solution, it corresponds to the stationary distribution $\eta_{\pi_{\lambda^{\star}}}$. By contradiction, assume that there exists a policy $\overline{\pi} \in \Pi^{\textrm{SR}}$ such that $\mathcal{L}(\overline{\pi}, \eta_{\overline{\pi}}) < \mathcal{L}(\pi_{\lambda^{\star}}, \eta_{\pi_{\lambda^{\star}}})$. Then define for every state-action pair $(s,a)$ the quantity $\overline{\lambda}(s,a) = \eta_{\overline{\pi}}(s) \overline{\pi}(a|s)$. It is evident that $\overline{\lambda} \in \Delta(\mathcal{S} \times \mathcal{A})$, furthermore for every state $s$, we have
\begin{align*}
    \displaystyle\sum_{s',a}p(s|s',a)\overline{\lambda}(s',a)  &= \displaystyle\sum_{s',a}p(s|s',a)\eta_{\overline{\pi}}(s) \overline{\pi}(a|s) = \sum_{s'} \eta_{\overline{\pi}}(s') \sum_{a} p(s|s',a) \overline{\pi}(a|s') \\ &= \sum_{s'} \eta_{\overline{\pi}}(s') P_{\overline{\pi}}(s|s') = \eta_{\overline{\pi}}(s) = \displaystyle\sum_{a} \overline{\lambda}(s,a),
\end{align*}
since by stationarity of the policy $\overline{\pi}$, the Markov chain transition matrix $P_{\overline{\pi}}$ is stationary w.r.t. $\eta_{\overline{\pi}}$. So $\overline{\lambda}$ satisfies the constraint of stationarity of \eqref{eq:opt.function.asymptotic.lambda}, and
\begin{align*}
    \mathcal{L}(\overline{\lambda}) = \sum_{s} \dfrac{\sigma^2(s)}{\eta_{\overline{\pi}}(s)} = \mathcal{L}(\overline{\pi}, \eta_{\overline{\pi}}) < \mathcal{L}(\pi_{\lambda^{\star}}, \eta_{\pi_{\lambda^{\star}}}) = \sum_{s} \dfrac{\sigma^2(s)}{\eta_{\pi_{\lambda^{\star}}}(s)} = \sum_{s} \dfrac{\sigma^2(s)}{\sum_{a} \lambda^{\star}(s,a)} = \mathcal{L}(\lambda^{\star}),
\end{align*}
which contradicts the optimality of $\lambda^{\star}$ for problem \eqref{eq:opt.function.asymptotic.lambda} and thus proves that $\pi_{\lambda^{\star}}$ is the optimal solution of the problem \eqref{eq:asymptotic.opt.policy}. Finally, the upper bound on the smoothness parameter $C_{\wu\eta}$ on the restricted set $\Lambda_{\wu{\eta}}$ is derived using that the maximal eigenvalue of a symmetric block matrix with positive eigenvalues is bounded by above by the sum of maximal eigenvalues of its diagonal blocks.
\end{proof}

%%%%%%%%%%%%%%%%%%%

\begin{proof}[Proof of Lemma~\ref{lem:asmyptoric.performance}] The proof is a rather direct application of Prop.~\ref{prop.concentration.mixing}. We first recall the exact formulation of the term $\epsilon_\pi(s,n,\delta)$ in Prop.~\ref{prop.concentration.mixing} (see e.g., \citet[Thm.~3,][]{Hsu2015mixing},~\citet[Thm.~3.8,][]{Paulin}):
\begin{align*}
\epsilon_\pi(s,n,\delta):= \sqrt{8 \eta_\pi(s)(1-\eta_\pi(s)) \dfrac{\ln(\frac{1}{\delta}\sqrt{\frac{2}{\eta_{\pi,\min}}})}{\gamma_\pi n}} + 20 \dfrac{\ln(\frac{1}{\delta}\sqrt{\frac{2}{\eta_{\pi,\min}}})}{\gamma_\pi n}.
\end{align*}
Let $\eta_{\pi,n}(s) = \frac{T_{\pi,n}(s)}{n}$ be the empirical frequency of visits to state $s$. Since we need all following statements to hold simultaneously for all states $s\in\calS$ and all stationary policies $\pi\in\Pi^{\textrm{SR}}$, we need to take a union bound over states and a cover over the action simplex at each state, which leads to tuning $\delta = \delta'/(SA^S)$ in the high-probability guarantees of Prop.~\ref{prop.concentration.mixing}, which then hold with probability $1-\delta'$. Furthermore, we have the following deterministic bound
\begin{align*}
\Big|\frac{1}{\eta_{\pi,n}(s)} - \frac{1}{\eta_{\pi}(s)}\Big| \leq \max\{n, \frac{1}{\eta_{\pi}(s)} \},
\end{align*}
where we used Asm.~\ref{asm:counter} to ensure that $1/\eta_{\pi,n}(s) \leq n$.
We introduce the event
\begin{align*}
\mathcal{E}_1(s,n,\delta) = \{\eta_{\pi,n}(s) \geq \eta_\pi(s) - \epsilon_\pi(s,n,\delta)\}.
\end{align*}
Then we have
\begin{align*}
\bigg| \mathbb{E}\Big[\frac{1}{\eta_{\pi,n}(s)} - \frac{1}{\eta_{\pi}(s)}\Big] \bigg| &\leq \bigg| \mathbb{E}\Big[\Big(\frac{1}{\eta_{\pi,n}(s)} - \frac{1}{\eta_{\pi}(s)}\Big)\mathbb{I}\{\mathcal{E}_1(s,n,\delta)\}\Big] \bigg| +
\bigg| \mathbb{E}\Big[\Big(\frac{1}{\eta_{\pi,n}(s)} - \frac{1}{\eta_{\pi}(s)}\Big)\mathbb{I}\{\mathcal{E}^{\mathtt{C}}_1(s,n,\delta)\}\Big] \bigg| \\
& \leq \bigg| \mathbb{E}\Big[\Big(\frac{1}{\eta_{\pi,n}(s)} - \frac{1}{\eta_{\pi}(s)}\Big)\mathbb{I}\{\mathcal{E}_1(s,n,\delta)\}\Big] \bigg| +
\max\{n, \frac{1}{\eta_{\pi}(s)}\}\mathbb{P}\{\mathcal{E}^{\mathtt{C}}_1(s,n,\delta)\}  \\
&\leq\Big|\mathbb{E}\Big[\frac{\eta_{\pi}(s) - \eta_{\pi,n}(s)}{\eta_{\pi,n}(s)\eta_{\pi}(s)} \mathbb{I}\{\mathcal{E}_1(s,n,\delta)\}\Big]\Big| + \max\{n, \frac{1}{\eta_{\pi}(s)}\}\delta'\\
&\leq \frac{\big|\mathbb{E}\big[\eta_{\pi}(s) - \eta_{\pi,n}(s)\big]\big|}{\eta_{\pi}(s)\big(\eta_{\pi}(s) - \epsilon_\pi(s,n,\delta)\big)} + \max\{n, \frac{1}{\eta_{\pi}(s)}\}\delta'\\
&\leq \frac{1}{2 \sqrt{\eta_{\min}}n\gamma_\pi \eta_{\pi}^2(s)}\Big( 1 + 2 \frac{\epsilon_\pi(s,n,\delta)}{\eta_\pi(s)}\Big) + \max\{n, \frac{1}{\eta_{\pi}(s)}\}\delta',
\end{align*}
where the last inequality follows from $1/(1-x) \leq 1+2x$ for $0 < x \leq 1/2$ which can be applied due to the condition that $n$ is big enough so that $\epsilon_\pi(s,n,\delta) \leq \eta_\pi(s)/2$. Since this condition requires $n \geq O(1/\eta_{\min}^2)$, we can resolve the maximum in the previous expression as $\max\{n, \frac{1}{\eta_{\pi}(s)}\} \leq n$. Finally, setting $\delta' = 1/n^2$ translates to the inequality on the objective function 
\begin{align*}
\big| \mathcal{L}_n(\pi) - \mathcal{L}(\pi, \eta_\pi) \big| \leq \ell_n(\pi) := \frac{1}{S \sqrt{\eta_{\min}}n\gamma_\pi}\sum_{s\in\calS} \frac{\sigma^2(s)}{\eta_\pi^2(s)}\Big( 1 + 2 \frac{\epsilon_\pi(s,n,\delta)}{\eta_\pi(s)}\Big),
\end{align*}
from which we obtain the final statement as
\begin{align*}
\mathcal{L}_n(\pi_{\lambda^\star}) - \mathcal{L}_n(\pi^\star_n) &\leq \mathcal{L}(\pi_{\lambda^\star}, \eta_{\pi_{\lambda^\star}}) + \ell_n(\pi_{\lambda^\star}) - \mathcal{L}(\pi^\star_n, \eta_{\pi^\star_n}) + \ell_n(\pi^\star_n) \\
&\leq \ell_n(\pi_{\lambda^\star}) + \ell_n(\pi^\star_n).
\end{align*}
\end{proof}

%%%%%%%%%%%%%%%%%%%%%%%%%%%%%%%%%%%%%%%%%%%%%%

\begin{proof}[Proof of Lemma~\ref{lem:reg.performance}]
The proof relies on the concentration inequality in Eq.~\ref{eq:concentration.loss}. We proceed through the following inequalities
\begin{align*}
\mathcal{L}_n(\pi^\star_\reg) \leq \mathcal{L}(\pi^\star_\reg, \eta_{\pi^\star_\reg}) + \ell_n(\pi^\star_\reg) \leq \mathcal{L}(\pi^\star_n, \eta_{\pi^\star_n}) + \ell_n(\pi^\star_n) \leq \mathcal{L}_n(\pi^\star_n) + 2\ell_n(\pi^\star_n),
\end{align*}
where in the first and last inequality we used Eq.~\ref{eq:concentration.loss}, and where the second inequality follows from the definition of $\pi^\star_\reg$ as the optimal solution to the regularized problem.
\end{proof}

%%%%%%%%%%%%%%%%%%%%%%%%%%%%%%%%%%%%%%%%%%%%%%%%%

\begin{proof}[Proof of Lemma~\ref{lem:performance_loss_fmh}]
Introducing the term $H := \mathcal{L}^{\textrm{reg}}(\pi^{\star}_{\fmh}) - \mathcal{L}^{\textrm{reg}}(\pi^{\star}_{\textrm{reg}})$ where $\mathcal{L}^\textrm{reg}$ is defined in Eq.~\ref{eq:opt.function.regularized}, we have
\begin{align*}
	\mathcal{L}_n(\pi^{\star}_{\fmh}) \leq \mathcal{L}^{\textrm{reg}}(\pi^{\star}_{\fmh}) = H + \mathcal{L}^{\textrm{reg}}(\pi^{\star}_{\textrm{reg}}) \leq H + \mathcal{L}_n(\pi^\star_n) + 2\ell_n(\pi^\star_n).
\end{align*}

Given the expression of $\ell_n(\pi)$ provided in Lem.~\ref{lem:asmyptoric.performance}, we can write 
\begin{align*}
\mathcal{L}^{\text{reg}}(\pi,\eta) = \mathcal{L}(\pi,\eta) + \dfrac{\rho_n}{1-\| D_{\eta}^{1/2} P_\pi D_{\eta}^{-1/2} \!-\! \sqrt{\eta} \sqrt{\eta}^\transp \|_2} + O(n^{-3/2}).
\end{align*}
For notational simplicity we denote $\eta_{\textrm{fmh}} = \eta_{\pi^{\star}_{\fmh}}$, $P_{\textrm{fmh}} = P_{\pi^{\star}_{\fmh}}$, $\eta_{\textrm{reg}} = \eta_{\pi^{\star}_{\textrm{reg}}}$ and $P_{\textrm{reg}} = P_{\pi^{\star}_{\textrm{reg}}}$. We thus have
	\begin{align*}
	H = \sum_{s \in \mathcal{S}} \Big(\dfrac{\sigma^2(s)}{\eta_{\textrm{fmh}}(s)} - \dfrac{\sigma^2(s)}{\eta_{\textrm{reg}}(s)} \Big) + \rho_n \Big(\dfrac{1}{\gamma(P_{\textrm{fmh}})} - \dfrac{1}{\gamma(P_{\textrm{reg}})}\Big) + O(n^{-3/2}).
	\end{align*}
	Given that $\eta_{\textrm{reg}}$ is a stationary state distribution w.r.t. the MDP dynamics, we can write by optimality of $\eta^{\star}$ for the problem (\ref{eq:opt.function.asymptotic.lambda})
	\begin{align*}
	\sum_{s \in \mathcal{S}} \Big(\dfrac{\sigma^2(s)}{\eta_{\textrm{fmh}}(s)} - \dfrac{\sigma^2(s)}{\eta_{\textrm{reg}}(s)} \Big) \leq \sum_{s \in \mathcal{S}} \Big(\dfrac{\sigma^2(s)}{\eta_{\textrm{fmh}}(s)} - \dfrac{\sigma^2(s)}{\eta^{\star}(s)} \Big) \leq \dfrac{\sigma_{\max}^2}{\wu{\eta}^2} \|\eta_{\textrm{fmh}} - \eta^{\star}\|_1 \leq \dfrac{\sigma_{\max}^2 \sqrt{S}}{\wu{\eta}^2} \|\eta_{\textrm{fmh}} - \eta^{\star}\|_2.
	\end{align*}
	Using successively the triangular inequality, the property guaranteed in $(\mathcal{P}_2)$ that $\eta_{\textrm{fmh}}$ minimizes the distance $\|\cdot - \eta_1\|_2$ among all the stationary state distributions w.r.t.~the MDP dynamics, and finally the property guaranteed in $(\mathcal{P}_1)$ of $\delta_n$-proximity of $\eta_1$ to $\eta^{\star}$, we get
	\begin{align*}
	\|\eta_{\textrm{fmh}} - \eta^{\star}\|_2 \leq \|\eta_{\textrm{fmh}} - \eta_1\|_2 + \|\eta_1 - \eta^{\star}\|_2 \leq 2 \|\eta_1 - \eta^{\star}\|_2 \leq 2 \delta_n.
	\end{align*}
	We conclude the proof using the fact that $\gamma(P_{\textrm{fmh}})$ and $\gamma(P_{\textrm{reg}})$ are larger than $\gamma_{\min}$.
\end{proof}

%%%%%%%%%%%%%%%%%%%%%%%%%%%%%%%%%%%%%%%%%%%%%%%%%

% !TEX root = active_mdp.tex

\section{Proof of Thm.~\ref{thm:regret_FWAME}}\label{app:proof_regret}

\subsection{Preliminaries}

We recall that the notation $u_n = \wt O(v_n)$ means that there exist $c > 0$ and $d > 0$ such that $u_n \leq c (\log n)^d v_n$ for sufficiently large $n$. By abuse of language we say that a stationary policy $\pi$ belongs to $\Lambda_{\wu{\eta}}$ if $\forall s \in \cal{S}, \eta_{\pi} \geq \textrm{2} \wu{\eta}$. For notational convenience we consider throughout the proof that we relax Asm.~\ref{asm:counter} (cf.~App.~\ref{app:relaxing:asm:counter}) and that the initial state $s_1$ is drawn from an arbitrary initial distribution over states and we collect its observation $x_1$. This leads to the configuration where at every time $t$ exactly $t$ state samples have been collected. We start our analysis with the two following technical lemmas. 

\begin{lemma} \label{technical_lemma_1}
Let $\delta \in (0,1)$. For any length $\tau > 0$, the following bound holds simultaneously for any state $s$ and any policy $\pi \in \Lambda_{\wu{\eta}}$ with probability at least $1-\delta$
\begin{align*}
     \abs[\Big]{\dfrac{\sum_{t=1}^{\tau} \mathbb{I}\{\pi_t=s\}}{\tau} - \eta_{\pi}(s)} \leq M(\tau, \delta) := \sqrt{ \dfrac{2B}{\gamma_{\min}\tau}} + \dfrac{20B}{\gamma_{\min} \tau} \quad \quad \textrm{with} \quad B = \log\Big(\frac{S A^S}{\delta}\sqrt{\frac{1}{\wu{\eta}}}\Big).
\end{align*}
\end{lemma}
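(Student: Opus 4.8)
The statement is just Proposition~\ref{prop.concentration.mixing} made uniform over policies, so the plan is: (i) invoke the high-probability concentration inequality for a single fixed chain; (ii) replace every chain-dependent constant by a policy-independent worst case valid on $\Lambda_{\wu{\eta}}$; (iii) pay a union bound over states and over a finite family of policies. For step (i) I would use the explicit form of $\epsilon_\pi(s,\tau,\delta_0)$ recalled in the proof of Lemma~\ref{lem:asmyptoric.performance} (equivalently, invert the deviation bound of Proposition~\ref{p:non.reversible.convergence}): for a fixed policy $\pi$ inducing an ergodic reversible chain and a fixed state $s$, with probability at least $1-\delta_0$,
\begin{align*}
\abs[\Big]{\frac{1}{\tau}\sum_{t=1}^{\tau}\mathbb{I}\{\pi_t=s\}-\eta_\pi(s)}
\leq \sqrt{8\,\eta_\pi(s)\big(1-\eta_\pi(s)\big)\,\frac{\ln\!\big(\tfrac{1}{\delta_0}\sqrt{2/\eta_{\pi,\min}}\big)}{\gamma_\pi\,\tau}}
+20\,\frac{\ln\!\big(\tfrac{1}{\delta_0}\sqrt{2/\eta_{\pi,\min}}\big)}{\gamma_\pi\,\tau}
\end{align*}
(using the relaxation of Asm.~\ref{asm:counter} from the start of the appendix, so the empirical count is $\sum_{t=1}^{\tau}\mathbb{I}\{\pi_t=s\}$ and the ergodic averaging holds by Asm.~\ref{asm:ergodic}). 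For step (ii): on $\Lambda_{\wu{\eta}}$ we have $\eta_\pi(s)\geq 2\wu{\eta}$ for every $s$, hence $\eta_{\pi,\min}\geq 2\wu{\eta}$ and $\sqrt{2/\eta_{\pi,\min}}\leq\sqrt{1/\wu{\eta}}$; also $\eta_\pi(s)(1-\eta_\pi(s))\leq\frac14$, and $\gamma_\pi\geq\gamma_{\min}$ by Asm.~\ref{asm:reversible}. Plugging these in and choosing $\delta_0=\delta/(SA^S)$ so that $\ln(\tfrac{1}{\delta_0}\sqrt{1/\wu{\eta}})=B$, the right-hand side becomes at most $\sqrt{2B/(\gamma_{\min}\tau)}+20B/(\gamma_{\min}\tau)=M(\tau,\delta)$, which no longer depends on $\pi$.

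For step (iii), the union over the $S$ states is immediate. For the policies, the key observation is that only finitely many are ever probed: in the application (proof of Thm.~\ref{thm:regret_FWAME}) the executed policies $\wh{\pi}^{+}_{k+1}$ arise from a linear program over the state-action polytope, whose basic optimal solutions correspond to deterministic stationary policies, of which there are at most $A^S$; this is exactly the count already paid for in $\delta_0=\delta/(SA^S)$. Alternatively, one may take a finite $\epsilon$-net of the product policy simplex and use that $\pi\mapsto\eta_\pi$ is continuous on the ergodic region (Asm.~\ref{asm:ergodic}) and that, for fixed $\tau$, the law of a length-$\tau$ trajectory is a polynomial — hence Lipschitz — function of $\pi$ on the finite space $(\calS\times\A)^\tau$, so that a net suffices to control the bad event for all $\pi$ simultaneously. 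Either route, combined with the union bound and step (ii), yields the claim.

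\textbf{Main obstacle.} The only genuinely delicate point is step (iii): a Markov-chain concentration inequality is a statement about one fixed kernel, whereas here the trajectory law varies with $\pi$, so one cannot literally union-bound over the uncountable set $\Lambda_{\wu{\eta}}$. I expect the cleanest rigorous route to be the first one above — restricting, without loss for the regret analysis, to the finitely many deterministic policies returned by the LP step of \fwmdp — with the net argument as a fallback if one insists on the statement verbatim. The remaining work (the constant-chasing in steps (i)--(ii)) is routine.
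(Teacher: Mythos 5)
Your proof follows the paper's own argument essentially verbatim: apply the single-chain concentration bound of Prop.~\ref{prop.concentration.mixing} (in its explicit form) for a fixed state and policy, bound $\eta_\pi(s)(1-\eta_\pi(s))\le 1/4$, $\eta_{\pi,\min}\ge 2\wu{\eta}$ and $\gamma_\pi\ge\gamma_{\min}$, and pay a union bound of $S A^S$ over states and policies via $\delta_0=\delta/(SA^S)$. The delicate point you flag in step~(iii) is handled no more rigorously in the paper, which simply invokes ``a cover over the action simplex at each state'' to justify the $A^S$ factor, so your explicit restriction to the finitely many deterministic LP solutions (or the net argument) is, if anything, a more careful account of the same step.
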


\begin{proof}
Pick any $\delta \in (0,1)$. Let $\pi$ be a fixed policy whose stationary distribution is lower-bounded by $\eta_{\pi,\min}$ and whose associated Markov chain admits $\gamma_{\pi}$ as spectral gap. For any length $\tau > 0$ and state $s$, we define $\nu_{\pi, \tau}(s) = \sum_{t=1}^{\tau} \mathbb{I}\{\pi_t=s\}$. From Prop.~\ref{prop.concentration.mixing}, for a fixed state $s \in \mathcal{S}$, the following bound holds with probability at least $1-\delta$
\begin{align*}
    \abs[\big]{\dfrac{\nu_{\pi, \tau}(s)}{\tau} - \eta_{\pi}(s)} &\leq \sqrt{8 \eta_{\pi}(s)(1-\eta_{\pi}(s)) \Tilde{\epsilon}} + 20 \Tilde{\epsilon} \quad \textrm{where} ~ \Tilde{\epsilon} = \dfrac{\log(\frac{1}{\delta}\sqrt{\frac{2}{\eta_{\pi,\min}}})}{\gamma_{\pi} \tau}.
\end{align*}
Since we need this statement to hold simultaneously for all states $s\in\mathcal{S}$ and all stationary policies $\pi\in \Lambda_{\wu{\eta}}$, we need to take a union bound over states and a cover over the action simplex at each state, which leads to tuning $\delta = \delta'/SA^S$ and thus yields with probability at least $1-\delta$
\begin{align*}
    \abs{\dfrac{\nu_{\pi, \tau}(s)}{\tau} - \eta_{\pi}(s)} &\leq \sqrt{8 \eta_{\pi}(s)(1-\eta_{\pi}(s)) \Tilde{\epsilon}} + 20 \Tilde{\epsilon} \quad \textrm{where} ~ \Tilde{\epsilon} = \dfrac{\log(\frac{SA^S}{\delta}\sqrt{\frac{1}{\wu{\eta}}})}{\gamma_{\min} \tau}.
\end{align*}
Using the fact that the function $x \mapsto x(1-x)$ is upper bounded by 1/4 and setting $B = \log(\frac{S A^S}{\delta}\sqrt{\frac{1}{\wu{\eta}}})$ yields the desired high-probability result.
\end{proof}

\begin{lemma} \label{technical_lemma_2}
Let $\delta \in (0,1)$. There exists a length $\tau_{\delta} > 0$ such that for any $T \geq \tau_{\delta}$, the following inequality holds simultaneously for any state $s$ and any policy $\pi \in \Lambda_{\wu{\eta}}$ with probability at least $1-\delta$
\begin{align*}
    \sum_{t=1}^T \mathbb{I}\{\pi_t=s\} \geq \wu{\eta} T
\end{align*}
\end{lemma}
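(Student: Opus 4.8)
The plan is to obtain this as an almost immediate corollary of Lemma~\ref{technical_lemma_1}. The crucial point is the convention recalled just before that lemma: a policy $\pi\in\Lambda_{\wu{\eta}}$ satisfies $\eta_\pi(s)\ge 2\wu{\eta}$ for every $s\in\calS$. Hence, on the high-probability event furnished by Lemma~\ref{technical_lemma_1}, the one-sided estimate
\[
\frac{1}{T}\sum_{t=1}^{T}\mathbb{I}\{\pi_t=s\}\;\ge\;\eta_\pi(s)-M(T,\delta)\;\ge\;2\wu{\eta}-M(T,\delta)
\]
holds simultaneously for all states $s$ and all $\pi\in\Lambda_{\wu{\eta}}$. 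Since $M(T,\delta)=\sqrt{2B/(\gamma_{\min}T)}+20B/(\gamma_{\min}T)\to 0$ as $T\to\infty$ for fixed $\delta$, there is a threshold $\tau_\delta$ beyond which $M(T,\delta)\le\wu{\eta}$; for $T\ge\tau_\delta$ this gives $\tfrac1T\sum_{t=1}^T\mathbb{I}\{\pi_t=s\}\ge\wu{\eta}$, i.e.\ $\sum_{t=1}^T\mathbb{I}\{\pi_t=s\}\ge\wu{\eta}T$, which is the claim.

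The one point requiring care is the quantifier ``for any $T\ge\tau_\delta$'': Lemma~\ref{technical_lemma_1} is stated for a single fixed horizon, so to make the inequality hold on one event uniformly over all $T\ge\tau_\delta$ I would take a union bound over horizons. Concretely, invoke Lemma~\ref{technical_lemma_1} at horizon $T$ with confidence $\delta_T$, where $(\delta_T)_{T\ge1}$ is any summable sequence with $\sum_T\delta_T=\delta$ (e.g.\ $\delta_T$ proportional to $T^{-2}$). The resulting event has probability at least $1-\delta$ and on it all the deviation bounds hold jointly; the only effect is that the constant $B$ in $M(\cdot,\cdot)$ is replaced by $B_T=\log\big(\tfrac{SA^S}{\delta_T}\sqrt{1/\wu{\eta}}\big)=O(\log T)$, so the deviation at horizon $T$ becomes $O\!\big(\sqrt{\log T/(\gamma_{\min}T)}\big)$, which still vanishes, and the threshold argument is unaffected. (If only an individually fixed $T$ is needed, this union bound is unnecessary and $\tau_\delta$ is even simpler.)

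Finally I would make $\tau_\delta$ explicit: $M(T,\delta_T)\le\wu{\eta}$ holds as soon as both $\sqrt{2B_T/(\gamma_{\min}T)}\le\wu{\eta}/2$ and $20B_T/(\gamma_{\min}T)\le\wu{\eta}/2$, i.e.\ $T$ of order $B_T/(\gamma_{\min}\wu{\eta}^2)$; since $B_T$ is only logarithmic in $T$, this is satisfied for $T\ge\tau_\delta=\wt O\!\big(B/(\gamma_{\min}\wu{\eta}^2)\big)$ with $B=\log(SA^S\delta^{-1}\sqrt{1/\wu{\eta}})$. The main obstacle is really just this bookkeeping for the uniform-in-$T$ statement; there is no genuine probabilistic or combinatorial difficulty beyond what Lemma~\ref{technical_lemma_1} already gives, once one exploits the lower bound $\eta_\pi(s)\ge 2\wu{\eta}$ valid on $\Lambda_{\wu{\eta}}$.
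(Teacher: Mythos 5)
Your proof is correct and is essentially the paper's own argument: combine the uniform deviation bound of Lemma~\ref{technical_lemma_1} with the lower bound $\eta_\pi(s)\ge 2\wu{\eta}$ on $\Lambda_{\wu{\eta}}$, and pick $\tau_\delta$ so that $M(T,\delta)\le\wu{\eta}$, which is possible since $M(\cdot,\delta)$ decreases to zero. Your additional union bound over horizons $T$ is a refinement the paper does not carry out (it implicitly reads the statement as holding for each fixed $T\ge\tau_\delta$ separately), but as you note it only inflates $B$ logarithmically and does not change the conclusion.
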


\begin{proof}
Pick any $\delta \in (0,1)$. $M(\tau, \delta)$ is a decreasing function of $\tau$, hence there exists a length $\tau_{\delta}$ such that for any $T \geq \tau_{\delta}$, $M(T, \delta) \leq \wu{\eta}$. As a result, Lem.~\ref{technical_lemma_1} guarantees that we have with probability at least $1-\delta$ simultaneously for any state $s$ and any stationary policy $\pi \in \Lambda_{\wu{\eta}}$ 
\begin{align*}
     \abs{\dfrac{\sum_{t=1}^{T} \mathbb{I}\{\pi_t=s\}}{T} - \eta_{\pi}(s)} \leq \wu{\eta},
\end{align*}
which yields in particular
\begin{align*}
    \dfrac{\sum_{t=1}^{T} \mathbb{I}\{\pi_t=s\}}{T} \geq \eta_{\pi}(s) - \wu{\eta} \geq \wu{\eta}.
\end{align*}

\end{proof}

Restricting our attention to increasing episode lengths in \fwmdp and using Lem.~\ref{technical_lemma_2}, we deduce the important property that for any $\delta \in (0,1)$, there exists an episode $k_{\delta}$ such that for all episodes $k$ succeeding it (and including it), we have with probability at least $1-\delta$ 
\begin{align}
    \sum_{a \in \mathcal{A}}\wt{\lambda}_{k}(s,a) \geq \wu{\eta}, \quad \forall s \in \mathcal{S}, \quad \forall k \geq k_{\delta}.
    \label{high_proba_lambda_tilde}
\end{align}
More specifically, $k_{\delta}$ is the first episode whose length $\tau_{k_{\delta}}$ verifies
\begin{align}
     M(\tau_{k_{\delta}}, \delta) =\sqrt{ \dfrac{2B}{\gamma_{\min} \tau_{k_{\delta}}}} + \dfrac{20B}{\gamma_{\min} \tau_{k_{\delta}}} \leq \wu{\eta} \quad \quad \textrm{with} \quad B = \log\Big(\frac{S A^S}{\delta}\sqrt{\frac{1}{\wu{\eta}}}\Big).
\label{k_delta}
\end{align}

We proceed by providing time-dependent lower and upper bounds on the true gradient $\nabla \mathcal{L}$, which is unknown. We denote by $\wh{\mathcal{L}}_{t}^{+}$ the empirical optimistic approximation of $\mathcal{L}$ at any time $t$, i.e.,
\begin{align*}
    \wh{\mathcal{L}}_t^{+}(\lambda) = \sum_{s \in \mathcal{S}} \dfrac{1}{\sum_a \lambda(s,a)}\Big[\wh{\sigma}_t^2(s) + 5R^2\sqrt{\dfrac{\log(\frac{4St}{\delta})}{T_{t}(s)}}\Big] = \wh{\mathcal{L}}_t(\lambda) + \sum_{s \in \mathcal{S}} \dfrac{\alpha(t,s,\delta)}{\sum_a \lambda(s,a)}.
\end{align*}
Here we used that $\nu(s)$ is an observation distribution supported in $[0,R]$. We note that this assumption can be easily extended to the general case of sub-Gaussian distributions as done in~\cite{carpentier2011upper}. From Prop.~\ref{lem:bound_variance_estimates}, the following inequalities hold with probability at least $1-\delta$ for any $\lambda$, time $t$ and state-action pair $(s,a)$
\begin{align}\label{ineq_squeeze}
    \nabla \wh{\mathcal{L}}_{t}^{+}(\lambda)(s,a) = \nabla \wh{\mathcal{L}}_{t}(\lambda)(s,a) - \dfrac{\alpha(t,s,\delta)}{(\sum_b \lambda(s,b))^2} \leq \nabla \mathcal{L}(\lambda)(s,a) \leq \nabla \wh{\mathcal{L}}_{t}(\lambda)(s,a) + \dfrac{\alpha(t,s,\delta)}{(\sum_b \lambda(s,b))^2}.
\end{align}

Finally, let $T=t_{K}-1$ be the final budget (i.e.,~the time at the end of the final episode $K-1$). For the sake of clarity and readability, we make the simplification that the logarithmic term $\log(T)$ behaves as a constant.

\subsection{Core of the proof} \label{core_proof}

We denote by $\rho_{k+1}$ the approximation error at the end of each episode $k$ (i.e., at time $t_{k+1}-1)$. Recalling that $\beta_k = \tau_{k}/(t_{k+1}-1)$, we have
\begin{align*}
    \rho_{k+1} = \mathcal{L}(\wt{\lambda}_{k+1}) - \mathcal{L}(\lambda^{\star}) = \mathcal{L}\big((1-\beta_k) \wt{\lambda}_k + \beta_k \wt{\psi}_{k+1}\big) - \mathcal{L}(\lambda^{\star}).
\end{align*}
Let $\psi^{\star}_{k+1} = \textrm{argmin}_{\lambda \in \Lambda_{\underline{\eta}}} \langle \nabla \mathcal{L}(\wt{\lambda}_k), \lambda \rangle$ be the state-action stationary distribution that ``exact'' \fw would return at episode $k$. We have the following series of inequality
\begin{align}
    \rho_{k+1} &\leq \mathcal{L}(\wt{\lambda}_k) - \mathcal{L}(\lambda^{\star}) + \beta_k \langle \nabla \mathcal{L}(\wt{\lambda}_k), \wt{\psi}_{k+1} - \wt{\lambda}_k \rangle + C_{\underline{\eta}} \beta_k^2 \nonumber \\
    &= \mathcal{L}(\wt{\lambda}_k) - \mathcal{L}(\lambda^{\star}) + \beta_k \langle \nabla \mathcal{L}(\wt{\lambda}_k), \psi^{\star}_{k+1} - \wt{\lambda}_k \rangle + C_{\underline{\eta}} \beta_k^2 + \beta_k \langle \nabla \mathcal{L}(\wt{\lambda}_k), \wt{\psi}_{k+1} - \psi^{\star}_{k+1} \rangle \nonumber\\
    &\leq \mathcal{L}(\wt{\lambda}_k) - \mathcal{L}(\lambda^{\star}) + \beta_k \langle \nabla \mathcal{L}(\wt{\lambda}_k), \lambda^{\star} - \wt{\lambda}_k \rangle + C_{\underline{\eta}} \beta_k^2 + \beta_k \langle \nabla \mathcal{L}(\wt{\lambda}_k), \wt{\psi}_{k+1} - \psi^{\star}_{k+1} \rangle \nonumber\\
    &\leq (1-\beta_k) \rho_k + C_{\underline{\eta}} \beta_k^2 + \beta_k \underbrace{\langle \nabla \mathcal{L}(\wt{\lambda}_k), \wh{\psi}^{+}_{k+1} - \psi^{\star}_{k+1} \rangle}_{\epsilon_{k+1}} + \beta_k \underbrace{\langle \nabla \mathcal{L}(\wt{\lambda}_k), \wt{\psi}_{k+1} - \wh{\psi}^{+}_{k+1} \rangle}_{\Delta_{k+1}},
\label{recurrence}
\end{align}
where the first step follows from the $C_{\underline{\eta}}$-smoothness of $\mathcal{L}$, the second inequality comes from the \fw optimization step and the definition of $\psi^{\star}_{k+1}$, which gives $\langle \nabla \mathcal{L}(\wt{\lambda}_k), \psi^{\star}_{k+1} - \wt{\lambda}_k \rangle \leq \langle \nabla \mathcal{L}(\wt{\lambda}_k), \lambda^{\star} - \wt{\lambda}_k \rangle$, the final step follows from the convexity of $\mathcal{L}$. The term $\epsilon_{k+1}$ measures the error due to an inaccurate estimate of the gradient and the term $\Delta_{k+1}$ refers to the discrepancy between the stationary state-action distribution $\wh{\psi}^{+}_{k+1}$ and the empirical frequency $\wt{\psi}_{k+1}$ of its realization for $\tau_{k}$ steps. 

\noindent \textbf{Step 1 (Bound on error $\Delta_{k+1}$).} For any $k \geq k_{\delta}$, inequality \eqref{high_proba_lambda_tilde} is verified and we can write
\begin{align*}
    \langle \nabla \mathcal{L}(\wt{\lambda}_k), \wt{\psi}_{k+1} - \wh{\psi}^{+}_{k+1} \rangle &= \displaystyle \sum_{s} \dfrac{-\sigma^2(s)}{\big(\sum_b \wt{\lambda}_k(s,b)\big)^2} \displaystyle \sum_{a} \big( \wt{\psi}_{k+1}(s,a) - \wh{\psi}^{+}_{k+1}(s,a) \big) \leq \dfrac{S \sigma^2_{\textrm{max}}}{\underline{\eta}^2}\|\dfrac{\nu_{k+1}}{\tau_k} - \eta_{\wh{\pi}^{+}_{k+1}}\|_{\infty}.
\end{align*}
Let $B = \log(\frac{S A^S}{\delta}\sqrt{\frac{1}{\underline{\eta}}})$. From Lem.~\ref{technical_lemma_1}, we have with probability at least $1-\delta$ simultaneously for every state $s$ and every policy followed during the episode
\begin{align*}
    \abs[\big]{\dfrac{\nu_{k+1}(s)}{\tau_{k}} - \eta_{\wh{\pi}^{+}_{k+1}}(s)} \leq \sqrt{ \dfrac{2B}{\gamma_{\min}\tau_{k}}} + \dfrac{20B}{\gamma_{\min} \tau_{k}}.
\end{align*}
Hence we obtain the following bound on $\Delta_{k+1}$ with probability at least $1-\delta$
\begin{align*}
\Delta_{k+1} &\leq  \dfrac{S \sigma^2_{\textrm{max}}}{\underline{\eta}^2} \Big[\sqrt{ \dfrac{2B}{\gamma_{\min}\tau_{k}}} + \dfrac{20B}{\gamma_{\min} \tau_{k}}\Big].
\end{align*}

\noindent \textbf{Step 2 (Bound on error $\epsilon_{k+1}$).} Using inequality \eqref{ineq_squeeze}, we get with probability at least $1-\delta$
\begin{align*}
    \langle \nabla \mathcal{L}(\wt{\lambda}_k), \wh{\psi}^{+}_{k+1} \rangle &= \sum_{s,a}\wh{\psi}^{+}_{k+1}(s,a) \nabla \mathcal{L}(\wt{\lambda}_k)(s,a) \\
    &\leq \sum_{s,a}\wh{\psi}^{+}_{k+1}(s,a) \nabla \wh{\mathcal{L}}_{t_k - 1}(\wt{\lambda}_k)(s,a) + \sum_{s,a}\wh{\psi}^{+}_{k+1}(s,a)\dfrac{\alpha(t_k - 1,s,\delta)}{(\sum_b \wt{\lambda}_k(s,b))^2} \\
    &\leq \sum_{s,a}\wh{\psi}^{+}_{k+1}(s,a) \nabla \wh{\mathcal{L}}_{t_k - 1}^{+}(\wt{\lambda}_k)(s,a) + 2\sum_{s,a}\wh{\psi}^{+}_{k+1}(s,a)\dfrac{\alpha(t_k - 1,s,\delta)}{(\sum_b \wt{\lambda}_k(s,b))^2} \\
    &\leq \langle \nabla \wh{\mathcal{L}}_{t_k - 1}^{+}(\wt{\lambda}_k), \psi^{\star}_{k+1} \rangle + 2\sum_{s,a}\wh{\psi}^{+}_{k+1}(s,a)\dfrac{\alpha(t_k - 1,s,\delta)}{(\sum_b \wt{\lambda}_k(s,b))^2} \\
    &\leq \langle \nabla \mathcal{L}(\wt{\lambda}_k), \psi^{\star}_{k+1} \rangle + 2\sum_{s,a}\wh{\psi}^{+}_{k+1}(s,a)\dfrac{\alpha(t_k - 1,s,\delta)}{(\sum_b \wt{\lambda}_k(s,b))^2}.
\end{align*}
For notational simplicity we denote by $T_k(s) = T_{t_k -1}(s)$ the number of visits of state $s$ until the end of episode $k-1$ (i.e., at time $t_k - 1$). Using inequality (\ref{high_proba_lambda_tilde}) and an intersection bound over two high-probability events, we get with probability at least $1-2\delta$ for any episode $k \geq k_{\delta}$
\begin{align*}
    \epsilon_{k+1} &\leq \sum_{s,a}\wh{\psi}^{+}_{k+1}(s,a) \dfrac{10 R^2}{\underline{\eta}^2} \sqrt{\log(\frac{4S(t_{k}-1)}{\delta})} \dfrac{1}{\sqrt{T_k(s)}} \\
    &\leq c_0 \underbrace{ \sum_{s,a}\wt{\psi}_{k+1}(s,a) \dfrac{1}{\sqrt{T_k(s)}}}_{v_{k}} + \underbrace{c_0 \sum_{s,a} \big(\wh{\psi}^{+}_{k+1}(s,a) - \wt{\psi}_{k+1}(s,a)\big) \dfrac{1}{\sqrt{T_k(s)}}}_{\xi_{k+1}},
\end{align*}
where we define $c_0 = \dfrac{10 R^2}{\underline{\eta}^2} \sqrt{\log(\frac{4ST}{\delta})}$. $\xi_{k+1}$ can be bounded in the same vein as $\Delta_{k+1}$ using Lem.~\ref{technical_lemma_1}. The error $\xi_{k+1}$ is of a higher order than $\Delta_{k+1}$ and for proof simplicity we consider the following loose bound which is satisfied with probability at least $1-\delta$
\begin{align*}
    \xi_{k+1} &\leq c_0 \Big[\sqrt{ \dfrac{2B}{\gamma_{\min}\tau_{k}}} + \dfrac{20B}{\gamma_{\min} \tau_{k}}\Big].
\end{align*}

\noindent \textbf{Step 3 (putting everything together in~\eqref{recurrence}).} For $k \geq k_{\delta}$, we get with probability at least $1-2\delta$
\begin{align*}
    \Delta_{k+1} + \xi_{k+1} &\leq \dfrac{c_1}{\sqrt{\tau_{k}}} + \dfrac{c_2}{\tau_{k}} \quad \textrm{with} \quad
    \begin{cases}
    ~ c_1 = \Big(c_0 + \dfrac{S \sigma^2_{\textrm{max}}}{\underline{\eta}^2}\Big) \sqrt{ \dfrac{2B}{\gamma_{\min}}} \\
    ~ c_2 = \Big(c_0 + \dfrac{S \sigma^2_{\textrm{max}}}{\underline{\eta}^2}\Big)\dfrac{20B}{\gamma_{\min}}
    \end{cases},
\end{align*}
which provides the bound for $k \geq k_{\delta}$
\begin{align}
    \rho_{k+1} \leq (1-\beta_k)\rho_k + \beta_k \big(  \dfrac{c_1}{\sqrt{\tau_{k}}} + \dfrac{c_2}{\tau_{k}}\big) + C_{\underline{\eta}}\beta_k^2 + \beta_k c_0 v_{k}.
\label{new_recurrence}
\end{align}
Choosing episode lengths satisfying $t_k = \tau_1 (k-1)^3 + 1$ yields
\begin{align*}
    \tau_k = t_{k+1} - t_{k} = \tau_1(3k^2-3k+1) \geq 3 \tau_1 k^2 \quad \textrm{and} \quad \beta_k =  \dfrac{\tau_{k}}{t_{k+1}-1} = \dfrac{3k^2 - 3k+1}{k^3} \in \Big[\dfrac{1}{k}, \dfrac{3}{k}\Big].
\end{align*}
Consequently we get
\begin{align}
    \beta_k \big( \dfrac{c_1}{\sqrt{\tau_{k}}} + \dfrac{c_2}{\tau_{k}}) + C_{\underline{\eta}}\beta_k^2 \leq \dfrac{b_{\delta}}{k^2} \quad \textrm{with} \quad b_{\delta} = \dfrac{\sqrt{3}c_1}{\sqrt{\tau_1}} + \dfrac{c_2}{\tau_1 k_{\delta}} + 9 C_{\underline{\eta}}.
    \label{ineq_b_delta}
\end{align}
Hence the recurrence inequality (\ref{new_recurrence}) becomes
\begin{align}
    \rho_{k+1} \leq (1-\dfrac{1}{k})\rho_k + \dfrac{b_{\delta}}{k^2} + \beta_k c_0 v_{k}.
\label{new_new_recurrence}
\end{align}

\noindent We pick an integer $q \geq (S/\tau_1)^{1/3} + 1$ such that $\rho_q \geq 0$ is satisfied.\footnote{Assuming this last condition is sensible since as the number of samples increases, $\wt{\lambda}$ gets closer to the stationary set $\Lambda_{\underline{\eta}}$ whose minimizer of $\mathcal{L}$ is $\lambda^{\star}$. The introduction of the term $ (S/\tau_1)^{1/3} + 1$ is motivated by the subsequent analysis of the series $\sum v_k$ in Lem.~\ref{lem:seq_v}.} We define the sequence $(u_n)_{n \geq q}$ as $u_q = \rho_q$ and
\begin{align*}
    u_{n+1} = \big(1-\dfrac{1}{n}\big)u_n + \dfrac{b_{\delta}}{n^2} + \beta_n c_0 v_{n},
\end{align*}
with $b_{\delta}$ the fixed positive constant defined in (\ref{ineq_b_delta}). From inequality (\ref{new_new_recurrence}), we have $\rho_k \leq u_k$ for $k \geq k_{\delta}$ and an immediate induction guarantees the positivity of the sequence $(u_n)$. By rearranging we get
\begin{align*}
    (n+1)u_{n+1} - nu_n = \dfrac{-u_n}{n} + \dfrac{b_{\delta}(n+1)}{n^2} + (n+1)\beta_n c_0 v_{n} \leq \dfrac{b_{\delta}(n+1)}{n^2} + (n+1)\beta_n c_0 v_{n}.
\end{align*}
By telescoping and using the fact that $\beta_n \leq 3/n \leq 6/(n+1)$, we obtain
\begin{align*}
    n u_n - q u_q \leq 2b_{\delta} \sum_{i=q}^{n-1} \dfrac{1}{i} + 6 c_0\sum_{i=q}^{n-1} v_i  \leq 2b_{\delta}\log(\frac{n-1}{q-1}) + 6 c_0 \sum_{i=q}^{n-1} v_i.
\end{align*}
Let $K \geq k_{\delta}$. We thus have with probability at least $1-2\delta$
\begin{align}
    \rho_K \leq \dfrac{q \rho_q + 2b_{\delta}\log K}{K} + \dfrac{6c_0}{K}\sum_{k=q}^{K-1} v_k =  \dfrac{\tau_1^{1/3}}{(t_K-1)^{1/3} + \tau_1^{1/3}}\Big( q \rho_q + 2b_{\delta}\log K + 6 c_0 \sum_{k=q}^{K-1} v_k \Big).
\label{final_bound}
\end{align}

We conclude the proof by plugging the result of Lem.~\ref{lem:seq_v} into inequality \eqref{final_bound} which yields the desired high-probability bound $\rho_K = \wt O(1/t_K^{1/3})$.

\begin{lemma} \label{lem:seq_v} Recalling that $v_k = \displaystyle \sum_{s,a}\wt{\psi}_{k+1}(s,a) \dfrac{1}{\sqrt{T_k(s)}}$, we have $\sum v_k = \wt O(1)$.
\end{lemma}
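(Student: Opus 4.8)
\textbf{Proof sketch of Lemma~\ref{lem:seq_v}.} The plan is to show that $v_k$ decays like $(k-1)^{-3/2}$ as soon as the episode is long enough for the concentration of Prop.~\ref{prop.concentration.mixing} to apply, so that $\sum_k v_k$ is a convergent series plus a finite ``burn-in'' contribution from the first, constantly many, episodes. The first reduction is to the per-episode minimum visit count: writing $\wt\psi_{k+1}(s) = \sum_a \wt\psi_{k+1}(s,a) = \nu_{k+1}(s)/\tau_k$, the marginal $\wt\psi_{k+1}(\cdot)$ is a probability distribution on $\calS$, so that
\[
v_k = \sum_{s}\frac{\wt\psi_{k+1}(s)}{\sqrt{T_k(s)}} \ \le\ \frac{1}{\sqrt{\min_{s} T_k(s)}},
\]
and it suffices to lower bound $\min_s T_k(s)$.

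For this I would reuse the tools already established in the preliminaries of this appendix. For every episode $k \ge k_\delta$ (cf.~\eqref{k_delta}) the executed policy satisfies $\wh\pi^+_{k+1} \in \Lambda_{\wu\eta}$, hence $\eta_{\wh\pi^+_{k+1}}(s) \ge 2\wu\eta$, and Lemma~\ref{technical_lemma_1} — whose high-probability event is uniform over states \emph{and} over all stationary policies through the cover of the per-state action simplices, so that it applies to the data-dependent $\wh\pi^+_{k+1}$ — gives $\nu_{k+1}(s)/\tau_k \ge \eta_{\wh\pi^+_{k+1}}(s) - M(\tau_k,\delta) \ge \wu\eta$ for every $s$. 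Summing over episodes $k_\delta, \dots, k-1$,
\[
T_k(s) \ \ge\ \wu\eta\sum_{j=k_\delta}^{k-1}\tau_j \ =\ \wu\eta\,(t_k - t_{k_\delta}).
\]
Since $t_k - 1 = \tau_1(k-1)^3$, there is an absolute constant $C$ such that for all $k$ past the threshold $k_0 := \max\{q,\, C\,k_\delta\}$ one has $t_{k_\delta} \le (t_k-1)/2$, whence $\min_s T_k(s) \ge \tfrac{\wu\eta\tau_1}{2}(k-1)^3$, and therefore $v_k \le \big(2/(\wu\eta\tau_1)\big)^{1/2}(k-1)^{-3/2}$ for every $k \ge k_0$.

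Finally I would split $\sum_{k=q}^{K-1} v_k = \sum_{q\le k<k_0} v_k + \sum_{k_0 \le k < K} v_k$. The tail is bounded, uniformly in $K$, by $\big(2/(\wu\eta\tau_1)\big)^{1/2}\sum_{m\ge 1} m^{-3/2} = \big(2/(\wu\eta\tau_1)\big)^{1/2}\zeta(3/2)$, a finite constant. The head has at most $k_0 - q$ terms, each finite: under the relaxed Asm.~\ref{asm:counter} (equivalently a fictitious initial sample at each state, or — under Asm.~\ref{asm:ergodic} — the event that every state is visited during the first episode, which holds with high probability by e.g.\ Prop.~\ref{no_visit}) we have $\min_s T_k(s) \ge 1$, so $v_k \le 1$ throughout and the head is $\le k_0 - q$. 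Adding the two pieces yields $\sum_{k=q}^{K-1} v_k = O(1)$; since $k_\delta$, and hence $k_0$, grows only polylogarithmically in $t_K$ once $\delta$ is taken polynomial in $t_K^{-1}$, this is $\wt O(1)$, as claimed. The main obstacle is this middle step: one must be careful to invoke Lemma~\ref{technical_lemma_1} \emph{uniformly} over stationary policies so that it genuinely covers the random policies $\wh\pi^+_{k+1}$ actually run, and to argue that the ``burn-in'' episodes before $k_0$ — where no lower bound on $T_k(s)$ beyond the trivial one is available — are only constantly many; the floor $q \ge (S/\tau_1)^{1/3}+1$ of the footnote is precisely what makes the decay $v_k \lesssim (k-1)^{-3/2}$ take hold cleanly from $k_0$ onwards.
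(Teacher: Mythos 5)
Your proof is correct, but it follows a genuinely different route from the paper's. You reduce $v_k$ to $1/\sqrt{\min_s T_k(s)}$ and then invoke the high-probability visit-count concentration (Lem.~\ref{technical_lemma_1}, uniformly over policies in $\Lambda_{\wu\eta}$) to show $\min_s T_k(s)\gtrsim \wu\eta\, t_k$ past the threshold $k_0$, giving a genuinely convergent tail $\sum_k (k-1)^{-3/2}$ plus a polylogarithmic burn-in. The paper instead keeps the argument \emph{deterministic}: it factors $v_k=\sum_s \frac{\sqrt{\nu_{k+1}(s)}}{\tau_k}\cdot\frac{\sqrt{\nu_{k+1}(s)}}{\sqrt{T_k(s)}}$, applies Cauchy--Schwarz over the double index $(s,k)$ to get $\sum_k v_k\le\sqrt{\Sigma_1\,\Sigma_2}$ with $\Sigma_1=\sum_k 1/\tau_k=O(1/\tau_1)$ and $\Sigma_2=\sum_{k,s}(T_{k+1}(s)/T_k(s)-1)$, and bounds $\Sigma_2=\wt O(S)$ via an extremal argument on the telescoping products $\prod_k T_{k+1}(s)/T_k(s)=T_K(s)/T_q(s)\le t_K-1$. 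The trade-offs are real: the paper's bound holds on every trajectory (given only $T_q(s)\ge 1$), consumes no additional probability budget, and its constant $\sqrt{S\log(t_K)/\tau_1}$ is free of $\wu\eta$ and $\gamma_{\min}$; your bound requires the concentration event to hold simultaneously across all episodes (a union bound over $k$ that you should make explicit, though the paper's own Step~1 glosses over the same point) and picks up a $1/\sqrt{\wu\eta}$ in the tail constant and a $\gamma_{\min}$-dependent burn-in length $k_\delta$, but in exchange it shows $v_k$ actually decays like $(k-1)^{-3/2}$ rather than merely having a controlled sum, which is a sharper per-episode statement. Both suffice for the $\wt O(1)$ claim as used in~\eqref{final_bound}.
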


\begin{proof}
Denoting $\mathcal{S} = \{1, 2, ..., S\}$ and recalling that $q \geq (S/\tau_1)^{1/3} +1$, we have
\begin{align*}
    \sum_{k=q}^{K-1} v_{k} &= \sum_{k=q}^{K-1}  \sum_{s,a} \wt{\psi}_{k+1}(s,a) \dfrac{1}{\sqrt{T_k(s)}}
    = \sum_{k=q}^{K-1} \sum_{s=1}^S \dfrac{\sqrt{\nu_{k+1}(s)}}{\tau_{k}} \dfrac{\sqrt{\nu_{k+1}(s)}}{\sqrt{T_k(s)}} \\
    &\leq \sqrt{\sum_{k=q}^{K-1} \sum_{s=1}^S \dfrac{\nu_{k+1}(s)}{\tau_{k}^2}} \sqrt{\sum_{k=q}^{K-1} \sum_{s=1}^S \dfrac{\nu_{k+1}(s)}{T_k(s)}}
    = \sqrt{\underbrace{\sum_{k=q}^{K-1}  \dfrac{1}{\tau_{k}}}_{\Sigma_1}} \sqrt{\underbrace{\sum_{k=q}^{K-1} \sum_{s=1}^S \Big( \dfrac{T_{k+1}(s)}{T_k(s)}-1\Big)}_{\Sigma_2}},
\end{align*}
where the inequality uses the Cauchy-Schwarz inequality on the sum indexed doubly by the episodes and the states. Since the Riemann zeta function of 2 is upper bounded by 3, we have
\begin{align*}
    \Sigma_1 \leq \dfrac{1}{3\tau_1}\sum_{k=q}^{K-1} \dfrac{1}{k^2} \leq \dfrac{1}{\tau_1}.
\end{align*}
There remains to show that $\Sigma_2 = \wt O(1)$. We introduce the following related optimization problem. For any $K \geq q$, we have $t_{K} - 1 \geq S$ since we chose $q \geq (S/\tau_1)^{1/3} +1$. Let $V^{\star}(K)$ be defined by
\begin{align}
    V^{\star}(K) ~ = ~ &\max~ \sum_{k=q}^{K-1} \sum_{s=1}^S \big( h_{s,k} - 1 \big), \label{related_objective} \\
    & \textrm{s.t.} \quad h_{s,k} \geq 1 \quad \textrm{and} \quad \sum_{s=1}^S \prod_{k=q}^{K-1} h_{s,k} \leq t_{K} - 1. \label{related_constraints}
\end{align}
We have for any episode $k$ and state $s$, $T_{k+1}(s) \geq T_k(s)$ and
\begin{align*}
    \sum_{s=1}^S \prod_{k=q}^{K-1} \dfrac{T_{k+1}(s)}{T_k(s)} = \sum_{s=1}^S \dfrac{T_{K}(s)}{T_q(s)} \leq t_{K} - 1.
\end{align*}
Hence the sequence $\Big(\dfrac{T_{k+1}(s)}{T_k(s)}\Big)_{s,k}$ satisfies the constraints (\ref{related_constraints}), thus $\Sigma_2 \leq V^{\star}(K)$. There remains to solve the optimization problem (\ref{related_objective}). Since the variables $h_{s,k}$ play interchangeable roles, there exists $h^{\star} = h_{s,k}$ for all $s$ and $k$. From the second constraint in (\ref{related_constraints}), we know that $h^{\star} \leq \big((t_{K} -1) / S\big)^{1/(K-q)}$. Given that (\ref{related_objective}) is a maximization problem that increases proportionally with $h^{\star}$, when $t_{K}-1 \geq S$ (so as to satisfy the first constraint), we finally have $h^{\star} = \big((t_{K}-1) / S\big)^{1/(K-q)}$. Consequently we have
\begin{align*}
    \Sigma_2 &\leq \sum_{k=q}^{K-1} \sum_{s=1}^S \Big( \Big(\dfrac{t_{K}-1}{S}\Big)^{1/(K-q)} - 1 \Big) = \underbrace{\dfrac{\exp \Big(\dfrac{1}{K-q}\log\big(\dfrac{\tau_1 (K-1)^3}{S}\big)\Big) - 1}{\dfrac{1}{K-q} \log\big(\dfrac{\tau_1 (K-1)^3}{S}\big)}}_{\longrightarrow 1 \quad \textrm{when} \quad K \longrightarrow +\infty} \underbrace{S \log\big(\dfrac{\tau_1 (K-1)^3}{S}\big)}_{= \wt{\mathcal{O}}(1)},
\end{align*}
which proves that $\Sigma_2 = \wt O(1)$. We conclude the proof using that $\sum_{k=q}^{K-1} v_{k+1} \leq \dfrac{1}{\sqrt{\tau_1}} \sqrt{\Sigma_2}$.

\end{proof}

%\subsection{Remark (problem-dependent constants)} \label{ssec.constants.fwame}
%
%The crucial constant appearing in the regret bound of \fwmdp is $b_{\delta}$ (from Eq.~(\ref{ineq_b_delta})) which is comprised of two terms associated to the errors in $1/\sqrt{\tau_{k+1}}$ (the smoothness parameter $C_{\omega}$ and the constant $c_1$) as well as an additional term (the constant $c_2$) associated to the higher order errors in $1/\tau_{k+1}$. The constant $c_1$ scales as $S^{3/2}$ and $\gamma_{\min}^{-1/2}$, and $c_2$ scales as $S^{2}$ and $\gamma_{\min}^{-1}$. Both constants depend on $\omega^{-2}$ which comes from the Lipschitz constant of $\L$ and as such appears unavoidable in the theoretical analysis. As expected we notice that the problem-dependent constants appearing in the regret bound are similar to those appearing in the expression $\max_{\pi} \ell(\pi)$ from Lem.~\ref{lem:asmyptoric.performance}.

\begin{minipage}{0.47\textwidth}

\begin{algorithm}[H]
	\caption{\fwmdp w/ \fmhsdp}
	\label{alg:FW-AME-FMH-SDP}
	\begin{algorithmic} 
		\STATE \textbf{Input:} $\wt{\lambda}_1 = 1/SA, \wu{\eta}$\\
		\FOR{$k=1, 2, ..., K-1$}
		\STATE $\wh{\psi}^{+}_{k+1} = \textrm{argmin}_{\lambda \in \Lambda_{\wu{\eta}}} \langle \nabla \wh{\mathcal{L}}_{t_k-1}^{+}(\wt{\lambda}_k), \lambda \rangle$
		\STATE $\wh{\pi}^{+}_{k+1}(a|s) = \dfrac{\wh{\psi}^{+}_{k+1}(s,a)}{\sum_{b \in \cal{A}}\wh{\psi}^{+}_{k+1}(s,b)}$		
		\STATE \textcolor{blue}{Compute $\pi^{\star} _{\fmh} = \fmhsdp(\wh{\pi}^{+}_{k+1}, \tau_k)$ with $\delta_{\tau_k}$ defined in Eq.~(\ref{delta_tau_k})}
		\STATE Execute \textcolor{blue}{$\pi^{\star} _{\fmh}$} for $\tau_{k}$ steps, collect the samples and update $\wt{\lambda}_{k+1}$ as in Alg.~\ref{alg:FW-AME}
		\ENDFOR
	\end{algorithmic}
\end{algorithm}

\end{minipage} \hfill
\begin{minipage}{0.05\textwidth}
\end{minipage} \hfill
\begin{minipage}{0.48\textwidth}
\begin{figure}[H]
\includegraphics[width=\linewidth]{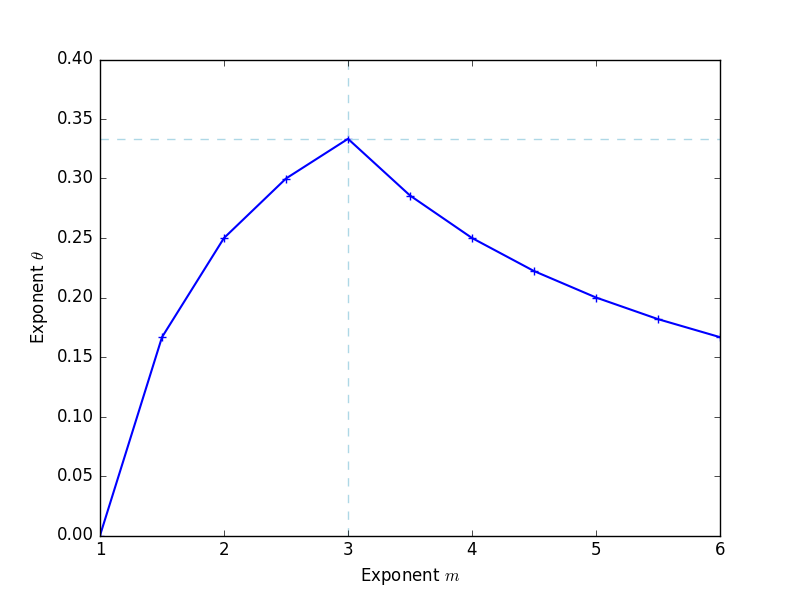}
\caption{Exponent $\theta$ as a function of $m$ (cf.~App.~\ref{ssec_bound_optimality}).}
\label{fig:theta_m}
\end{figure}
\end{minipage}

\subsection{Optimality of the Episode Length} \label{ssec_bound_optimality}

As explained in Sect.~\ref{ssec:learning}, an interesting open question is whether the regret bound obtained in Thm.~\ref{thm:regret_FWAME} is optimal. Our analysis however yields the following optimality result: among all the episode lengths such that the time $t$ is polynomial in the number of episodes $k$, i.e., among all the integers $m \geq 1$ such that $t$ behaves as $k^{m}$, the value of $m$ that optimizes convergence is $m=3$. Indeed, we can apply the Euler method on inequality \eqref{new_recurrence} which results in solving the differential equation $y' = \dfrac{-y}{x} + \dfrac{1}{x^2} + \dfrac{1}{x^{(m+1)/2}}$ and finding the largest $\theta$ such that $x^{\theta}y(x) = \wt O(1)$. $\theta$ is thus the largest value such that
    \begin{align*}
        \mathcal{L}(\wt{\lambda}_{k+1}) - \mathcal{L}(\lambda^{\star}) = \wt O \Big(\dfrac{1}{t^{\theta}}\Big) = \wt O\Big(\dfrac{1}{k^{\theta/m}}\Big).
    \end{align*}
Fig.~\ref{fig:theta_m} plots the exponent $\theta$ as a function of $m$ and shows that $\theta$ reaches its maximal value of $1/3$ for $m=3$, consequently yielding the regret $\wt O (1/t^{1/3})$.

\subsection{\fwmdp w/ \fmhsdp} \label{FWAME_FMHSDP}

The variant incorporating the framework of \fmhsdp is presented in Alg.~\ref{alg:FW-AME-FMH-SDP} and its difference with Alg.~\ref{alg:FW-AME} is highlighted in blue. %We now justify why the regret bound from Thm.~\ref{thm:regret_FWAME} is unchanged.
The regret analysis is the same as in App.~\ref{core_proof} except that the error $\Delta_{k+1}$ in the recurrence inequality \eqref{recurrence} goes from $\wt{\psi}_{k+1} - \wh{\psi}^{+}_{k+1}$ to
\begin{align*}
	\wt{\psi}_{k+1}^{\fmh} - \wh{\psi}^{+}_{k+1} = \wt{\psi}_{k+1}^{\fmh} - \wh{\psi}^{\fmh}_{k+1} + \wh{\psi}^{\fmh}_{k+1} - \wh{\psi}^{+}_{k+1},
\end{align*}
where $\wt{\psi}_{k+1}^{\fmh} = \fmhsdp(\wh{\psi}^{+}_{k+1}, \tau_k)$ and $\wt{\psi}_{k+1}^{\fmh}$ is its empirical realization for the $\tau_k$ steps of the episode. The new error $\Delta_{k+1}$ can thus be decomposed as follows
\begin{align*}
    \underline{\eta}^2 \Delta_{k+1} \leq \sum_{s} \sigma^2(s) \abs[\big]{\dfrac{\nu^{\fmh}_{k+1}(s)}{\tau_k} - \eta_{\wh{\pi}^{\fmh}_{k+1}}(s)} + \sum_{s} \sigma^2(s) \abs[\big]{\eta_{\wh{\pi}^{\fmh}_{k+1}}(s) - \eta_{\wh{\pi}^{+}_{k+1}}(s)},
\end{align*}
where the first term is $O\Big(1/\sqrt{\gamma(\wh{\psi}^{\fmh}_{k+1}) \tau_k}\Big)$ and the second term is upper bounded by $\sum_s \sigma^2(s) \delta_{\tau_k}$ where $\delta_{\tau_k}$ is the \fmhsdp parameter from problem \eqref{SDP_problem}.

Since w/o \fmhsdp we have $\gamma_{k+1} = O\Big(1/\sqrt{\gamma(\wh{\psi}^{\fmh}_{k+1}) \tau_k}\Big)$, this suggests that the slack variable $\delta_{\tau_k}$ can decrease at least as $O(1/\sqrt{\tau_k})$ so as to guarantee that the order of the error $\Delta_{k+1}$ is unchanged. Furthermore, the component $\delta_{\tau_k}(s)$ is weighted by $\sigma^2(s)$ (which is unknown), hence we are encouraged to set
\begin{align} \label{delta_tau_k}
\delta_{\tau_k}(s) = \dfrac{\wh\Sigma - \wh{\sigma}_{t_k-1}^2(s)}{(S-1)\wh\Sigma} \dfrac{1}{\sqrt{\tau_k}} \quad \textrm{where} ~ \wh\Sigma = \sum_{s \in \cal{S}} \wh{\sigma}_{t_k-1}^2(s).
\end{align}

The regret analysis of \fwmdp w/ \fmhsdp is thus unchanged and we recover the final rate in $O(t^{-1/3})$. In addition, if the heuristic is able to obtain an improvement in the mixing properties of the episodic policy (i.e., $\gamma(\wh{\psi}^{\fmh}_{k+1})$ bigger than $\gamma(\wh{\psi}^{+}_{k+1})$) that outweighs the error introduced by $\delta_{\tau_k}(s)$, then the regret performance at episode $k$ of \fwmdp w/ \fmhsdp is improved.

% !TEX root = active_mdp.tex

\section{Garnet MDPs}\label{app:experiments}

We detail here the process for generating Garnet\footnote{In full, Generalized Average Reward Non-stationary Environment Test-bench \citep{bhatnagar2009natural}.} MDPs which we use in Sect.~\ref{sec:experiments}. A Garnet instance ${\cal G}(S, A, b, \sigma_{\min}^2, \sigma_{\max}^2)$ is characterized by 5 parameters. $S$ and $A$ are the number of states and actions respectively, and $b$ is a branching factor specifying the number of possible next states for each state-action pair, i.e., the number of uniformly distributed non-zero entries in each line of the MDP transition matrix. We ensure the aperiodicity of the MDP by adding a non-zero probability (equal to 0.001) of self-loop for all state-action pairs. Since the state means are arbitrarily fixed, there remains to uniformly sample the state variances $\sigma^2(s)$ between $\sigma^2_{\textrm{min}}$ and $\sigma^2_{\textrm{max}}$ and randomly select two states whose variances are set respectively to $\sigma^2_{\textrm{min}}$ and $\sigma^2_{\textrm{max}}$. We likewise introduce reversible Garnet MDPs denoted by $\cal{G}_{\cal R}$. The generation process of $\cal{G}_{\cal R}$ is identical to $\cal{G}$ except that we set the branching factor to $b-1$ and ensure the reversibility of the MDP by randomly picking $a \in \cal{A}$ and $q \in (0,1)$ such that $p(s|s',a) = q$ for every pair $(s,s')$ such that $Q(s,s')=1$ (and finally normalize to obtain an admissible $p$). 

We note that the Garnet procedure allows some control over the mixing properties of the MDP. Indeed, when $A$ and $b$ are small, only a few transitions are assigned significant probabilities so the speed of mixing is generally slower. For higher values of $A$ and $b$, all the positive transition probabilities are of similar magnitude so the speed of mixing is generally faster.

\end{appendices}

\end{document}